\newcommand{\opnorm}{\@ifstar\@opnorms\@opnorm}
\newcommand{\@opnorms}[1]{%
	\left|\mkern-1.5mu\left|\mkern-1.5mu\left|
	#1
	\right|\mkern-1.5mu\right|\mkern-1.5mu\right|
}
\newcommand{\@opnorm}[2][]{%
	\mathopen{#1|\mkern-1.5mu#1|\mkern-1.5mu#1|}
	#2
	\mathclose{#1|\mkern-1.5mu#1|\mkern-1.5mu#1|}
}
\newtheorem{theorem}{Theorem}
\newtheorem{definition}{Definition}
\newtheorem{lemma}{Lemma}
\newtheorem{remark}{Remark}
\newtheorem{corollary}{Corollary}
\newtheorem{proposition}{Proposition}
\newtheorem{example}{Example}
\newtheorem{assumption}{Assumption}
\global\long\def\bx{\mathbf{x}}
\global\long\def\bd{\mathbf{d}}
\global\long\def\bw{\mathbf{w}}
\global\long\def\bu{\mathbf{u}}
\global\long\def\bv{\mathbf{v}}
\global\long\def\by{\mathbf{y}}
\global\long\def\mG{\mathcal{G}}
\global\long\def\mB{\mathcal{B}}
\global\long\def\deg{\mathrm{deg}}
\begin{document}
	
	\title{Exact Guarantees on the Absence of Spurious Local Minima for Non-negative {Rank-1} Robust Principal Component Analysis}
	\author{
		Salar Fattahi and Somayeh Sojoudi
		\thanks{Email: fattahi@berkeley.edu and sojoudi@berkeley.edu.
			Salar Fattahi is with the Department of Industrial Engineering and Operations Research, University of California, Berkeley. Somayeh Sojoudi is with the Departments of Electrical Engineering and Computer Sciences and Mechanical Engineering as well as the Tsinghua-Berkeley Shenzhen Institute, University of California, Berkeley. This work was supported by the ONR Award N00014-18-1-2526 
			and an NSF EPCN Grant. }}
	\date{}
	\maketitle

	\begin{abstract}
		This work is concerned with the non-negative {rank-1} robust principal component analysis (RPCA), where the goal is to recover the dominant non-negative principal components of a data matrix \textit{precisely}, where a number of measurements could be grossly corrupted with sparse and arbitrary large noise. Most of the known techniques for solving the RPCA rely on convex relaxation methods by lifting the problem to a higher dimension, which significantly increase the number of variables. As an alternative, the well-known Burer-Monteiro approach can be used to cast the RPCA as a non-convex and non-smooth $\ell_1$ optimization problem with a significantly smaller number of variables. In this work, we show that the low-dimensional formulation of the symmetric and asymmetric positive {rank-1} RPCA based on the Burer-Monteiro approach has benign landscape, i.e., 1) it does not have any spurious local solution, 2) has a unique global solution, and 3) its unique global solution coincides with the \textit{true} components. 
		An implication of this result is that simple local search algorithms are guaranteed to achieve a zero global optimality gap when directly applied to the low-dimensional formulation. 
		Furthermore, we provide strong deterministic and probabilistic guarantees for the exact recovery of the true principal components. In particular, it is shown that a constant fraction of the measurements could be grossly corrupted and yet they would not create any spurious local solution.
	\end{abstract}
	
	\section{Introduction}
	\begin{sloppypar}
		The principal component analysis (PCA) is perhaps the most widely-used dimension-reduction method that reveals the components with maximum variability in high-dimensional datasets. In particular, given the data matrix $X\in\mathbb{R}^{m\times n}$, where each row corresponds to a data sample with size $n$, the goal is to recover its most dominant component under the {rank-1} spiked model\footnote[1]{There are more general models under which the PCA is shown to be useful (see~\cite{jolliffe2011principal} for more details). We use the {rank-1} spiked model since it fits into our framework and is often used as a baseline to evaluate the performance of the PCA.}
	\end{sloppypar}
	\begin{equation}\label{spike}
	X = \beta\bu\bv^\top+S
	\end{equation}
	where $\beta$ determines the signal-to-noise ratio, $S$ is the additive noise matrix, and $\bu$ and $\bv$ are two unknown unit norm vectors. If the data matrix $X$ is symmetric (for instance, it corresponds to a sample covariance matrix), then~\eqref{spike} can be modified as
	\begin{equation}
	X = \beta\bv\bv^\top+S
	\end{equation}
	Depending on the nature of the noise matrix, different methods have been proposed in the literature to recover the principal components from (partial) observations of $X$. The problem of recovering $\beta$, $\bu$, and $\bv$ under a Gaussian and sparse noise is conventionally referred to as PCA and robust PCA (or RPCA), respectively.
	
	The properties of both PCA and its robust analog have been heavily studied in the literature and their applications span from quantitative finance to health care and neuroscience ~(\cite{hull1990pricing, caprihan2008application, brenner2000adaptive}). Recently, a special focus has been devoted to further exploiting the prior knowledge on the principal components, such as sparsity~(\cite{zou2006sparse}) and nonlinearity~(\cite{gorban2008principal}). Accordingly, one such knowledge appearing in different applications is the non-negativity of the principal components~(\cite{montanari2016non}). In this scenario, one needs to solve the PCA or the RPCA under the additional constraints $\bu, \bv\geq 0$. While the non-negative PCA has been recently studied in~\cite{montanari2016non}, the main focus of our work is on its robust variant, where the noise matrix is assumed to be sparse and the goal is the \textit{exact} recovery of the non-negative vectors $\bu$ and $\bv$. 
	Note that the non-negativity of principal components naturally arises in many real-world problems. In what follows, we will present two classes of real-world applications for which the non-negative RPCA is useful.
	
	\vspace{2mm}
	\noindent{\bf 1. Non-negative matrix factorization:} Extracting the dominant principal component of a symmetric or asymmetric data matrix appears in many applications and the examples are ubiquitous. For instance, an important problem in astronomy is the recovery of non-negative astronomical signals from the covariance matrix of photometric observations~(\cite{ren2018non}). The measured data samples are prone to sparse and random outliers. Similarly, one can extract moving objects from video frames via non-negative matrix factorization by treating the background as the dominant low-rank component in the video frames and the moving object as sparse noise (the non-negativity of the data is due to the non-negative values of the pixels)~(\cite{lee1999learning, candes2011robust}). We will conduct a case study on this application later in the paper.
	
	\vspace{2mm}
	\noindent{\bf 2. Gene networks:} Gene activities can be captured by the samples collected from different organs, and are described by multi-spiked models~(\cite{lazzeroni2002plaid}):
	\begin{equation}
	X = X_0+\sum_{i = 1}^k \bu_{(i)}\bv_{(i)}^\top
	\end{equation}
	where $(i,j)^{\text{th}}$ entry of $X$ measures the strength of the participation of gene $i$ in sample $j$ and $X_0$ is an offset. Furthermore, $k$ is the number of the gene-block, and $\bu_{(i)}$ and $\bv_{(i)}$ measure the participation of different genes and samples in the $i^{\text{th}}$ gene-block. The participation vectors are non-negative and the measurements can be subject to malfunctioning of the measurement tools. Therefore, the problem of obtaining $\bu_{(i)}$ and $\bv_{(i)}$ can be cast as a non-negative RPCA with multiple principal components.
	
	\vspace{2mm}
	The seminal work by~\cite{candes2011robust} proposes a sparsity promoting convex relaxation for the RPCA that is capable of the exact recovery of $\bu$ and $\bv$. Upon defining $W = \bu\bv^\top$, the convex relaxation of the RPCA is defined as
	\begin{align}\label{opt_rpca}
	\min_{W\in\mathbb{R}^{m\times n}}\ \ & \|W\|_*+\lambda\|\mathcal{P}_{\Omega}(X-W)\|_1
	\end{align}
	where $\|W\|_*$ is the nuclear norm of $W$, serving as a penalty on the rank of the recovered matrix $W$, and $\|\cdot\|_1$ is used to denote the element-wise $\ell_1$ norm. Furthermore, $\mathcal{P}_{\Omega}(\cdot)$ is the projection onto the set of matrices with the same support as the measurement set $\Omega$. Therefore, upon defining $S = X-W$ as the corruption or noise matrix, $\|\mathcal{P}_{\Omega}(X-W)\|_1$ plays the role of promoting sparsity in the estimated noise matrix. After finding an optimal value of $W$, the matrix can then be decomposed into the desired vectors $\bu$ and $\bv$, provided that the relaxation is exact. Notice that the problem is convexified via lifting from $n+m$ variables on $(\bu,\bv)$ to $nm$ variables on $W$. Despite the convexity of the lifted problem, its dimension makes it prohibitive to solve in high-dimensional settings. To circumvent this issue, one popular approach is to resort to an alternative formulation, inspired by~\cite{burer2003nonlinear} (commonly known as the Burer-Monteiro technique):
	\begin{align}\label{norm1}
	\min_{\bu\in\mathbb{R}_+^{m}, \bv\in\mathbb{R}_+^{n}}\quad \|\mathcal{P}_{\Omega}(X-\bu\bv^\top)\|_1
	\end{align}
	Despite the non-convexity of~\eqref{norm1}, its smooth counterpart (with or without non-negativity constraints) defined as
	\begin{align}\label{norm2}
	\min_{\bu\in\mathbb{R}^{m}, \bv\in\mathbb{R}^{n}}\quad \underbrace{\|\mathcal{P}_{\Omega}(X-\bu\bv^\top)\|^2_F}_{g(\bu,\bv)}
	\end{align}
	has been widely used in matrix completion/sensing and is known to possess \textit{benign global landscape}, i.e., every local solution is also global and every saddle point has a direction with a strictly negative curvature~(\cite{bhojanapalli2016global, ge2016matrix, ge2017no}). This will be stated below.
	\begin{theorem}[Informal, Benign Landscape~(\cite{ge2017no})]\label{thm_diff}
		Under some technical conditions, a regularized version of~\eqref{norm2} has benign landscape: every local minimum is global and every saddle point has a direction with a strictly negative curvature.
	\end{theorem}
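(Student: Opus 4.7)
The plan is to adapt the now-standard strategy for the landscape analysis of low-rank matrix factorization, developed in the matrix sensing and completion literature. Since the bilinear objective $g(\bu,\bv)=\|\mathcal{P}_\Omega(X-\bu\bv^\top)\|_F^2$ is invariant under the rescaling $(\bu,\bv)\mapsto(\alpha\bu,\alpha^{-1}\bv)$, I would first append a balancing regularizer $\tfrac{\mu}{4}(\|\bu\|^2-\|\bv\|^2)^2$ and, in the completion setting, an incoherence penalty that keeps the iterates bounded. Let $\tilde g$ denote the resulting regularized objective and $(\bu^*,\bv^*)$ a true factorization of the low-rank component of $X$.

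The core of the argument is to fix an arbitrary first-order critical point $(\bu,\bv)$ of $\tilde g$ and exhibit a Hessian direction of strictly negative curvature whenever $\bu\bv^\top\neq\bu^*\bv^{*\top}$. I would pick the escape direction $\Delta=(\bu-\alpha\bu^*,\bv-\alpha^{-1}\bv^*)$ with $\alpha$ chosen to minimize the factor-alignment residual, then expand $\nabla^2\tilde g(\bu,\bv)[\Delta,\Delta]$ and fold in the critical-point identity $\nabla\tilde g(\bu,\bv)=0$ to eliminate the mixed first-order terms. What survives is a dominant negative contribution of order $-\|\bu\bv^\top-\bu^*\bv^{*\top}\|_F^2$, plus quadratic remainders controlled by the regularizer. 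A restricted-isometry-type inequality for $\mathcal{P}_\Omega$ on rank-$2$ matrices then lifts this negativity from the full Frobenius loss to the sampled loss, completing the strict-saddle step; combined with boundedness, this also yields that every local minimum is global.

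The principal obstacle is engineering the above cancellation so that only the $-\|\bu\bv^\top-\bu^*\bv^{*\top}\|_F^2$ term survives: the simultaneous tuning of $\Delta$, the alignment factor $\alpha$, and the regularizer strength $\mu$ is delicate, because uncontrolled cross terms between $\bu-\alpha\bu^*$ and $\bv-\alpha^{-1}\bv^*$ can easily swamp the negative piece if sign or magnitude control is lost. A secondary technical hurdle is verifying the restricted-isometry hypothesis on the sampling operator: when $\mathcal{P}_\Omega$ arises from uniform entry observations one either needs a high-probability RIP bound with a sample-complexity lower bound or an explicit incoherence term preventing spiky iterates from hiding in the unobserved entries, which is precisely the route followed by~\cite{ge2017no} and which furnishes the ``technical conditions'' referred to in the informal statement.
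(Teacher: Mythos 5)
This statement is not proved in the paper at all; it is an informal restatement of a result from the cited reference~\cite{ge2017no}, inserted as background to motivate the paper's own contribution. Indeed, the paper immediately goes on (Section 1.1) to explain that the proof technique your sketch describes --- expanding $\nabla^2 g$ at a critical point, picking the aligned escape direction $(\bu-\alpha\bu^*,\bv-\alpha^{-1}\bv^*)$, and invoking an RIP-type bound on $\mathcal{P}_\Omega$ --- is precisely what \emph{cannot} be carried over to the non-smooth $\ell_1$ setting of~\eqref{norm1} that the paper actually studies, because there the Hessian does not exist. Your sketch is a faithful reconstruction of the argument in~\cite{ge2017no} (balancing regularizer, incoherence/norm penalty, second-order negative curvature at non-global critical points, concentration of the sampling operator), so as a summary of the cited proof it is fine. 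Just be aware that it is not the paper's proof, because the paper has none to give for this theorem; and the paper's own technical work deliberately avoids this route in favor of a direct directional-derivative analysis tailored to the non-differentiable objective.
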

	
	In particular, both symmetric and asymmetric matrix completion (or matrix sensing) under dense Gaussian noise can be cast as~\eqref{norm2} and in light of the above theorem, they have benign landscape. However, it is well-known that such smooth norms are incapable of correctly identifying and rejecting sparse-but-large noise/outliers in the measurements.
	
	Despite the generality of Theorem~\ref{thm_diff} within the realm of smooth norms, it does not address the following important question: 
	\textit{Does the non-smooth and non-negative {rank-1} RPCA~\eqref{norm1} have benign landscape?}

	\subsection{The Issue with the Known Proof Techniques}
	
	
	To understand the inherent difficulty of examining the landscape of~\eqref{norm1}, it is essential to explain why the existing proof techniques for the absence of spurious local minima in matrix sensing/completion cannot naturally be extended to their robust counterparts. In general, the main idea in the literature behind proving the benign landscape of matrix sensing/completion is based on analyzing the gradient and the Hessian of the objective function. More precisely, for every point that satisfies $\nabla g(\bu, \bv) = 0$ and does not correspond to a globally optimal minimum, it suffices to find a \textit{global} direction of descent $\bd$ such that $\mathrm{vec}(\bd)^\top\nabla^2g(\bu, \bv)\mathrm{vec}(\bd)<0$, where $\mathrm{vec}(\bd)$ is the vectorized version of $\bd$ and $\nabla^2g(\bu, \bv)$ is the Hessian of $g(\bu, \bv)$. Such a direction certifies that every stationary point that is not globally optimal must be either a local maximum or a saddle point with a strictly negative direction. However, this approach cannot be used to prove similar results for~\eqref{norm1} mainly because the objective function of~\eqref{norm1} is non-differentiable and, hence, the Hessian is not well-defined.
	This difficulty calls for a new methodology for analyzing the landscape of the robust and non-smooth PCA; a goal that is at the core of this work.
	
{\section{Contributions} \label{sec:contributions}}

	{In this work, we characterize the landscape of both the symmetric non-negative rank-1 RPCA defined as 
		\begin{equation}\tag{SN-RPCA}\label{snpca}
		\min_{\bu\in\mathbb{R}^n_+}\quad \underbrace{\|\mathcal{P}_{\Omega}(X-\bu\bu^\top)\|_1 + R_{\beta}(\bu)}_{ f_{\mathrm{reg}}(\bu)}
		\end{equation}
		and its asymmetric counterpart defined as 
		\begin{equation}\tag{AN-RPCA}\label{anpca}
		\min_{\bu\in\mathbb{R}^m_+, \bv\in\mathbb{R}^n_+}\quad \underbrace{\|\mathcal{P}_{\Omega}(X-\bu\bv^\top)\|_1+R_{\beta}(\bu,\bv)}_{f_{\mathrm{reg}}(\bu, \bv)}
		\end{equation}
		In particular, we fully characterize the stationary points of these optimization problems, under both deterministic and probabilistic models for the measurement index $\Omega$ and the noise matrix $S$. The functions $R(\bu)$ and $R(\bu,\bv)$ are regularization functions that prevent the solutions from \textit{blowing up}; roughly speaking, they penalize the points whose norm is greater than $\beta$, but do not change the landscape otherwise. The exact definitions of these regularization functions will be presented later in Section~\ref{sec6}. 
		
		\begin{remark}\label{remark_rankr}
			The focus of this paper is on the symmetric and non-symmetric RPCA under the rank-1 spiked model. A natural extension to this model is its rank-$r$ variant:
			\begin{align}
			X = UV^\top+S
			\end{align}
			where $U := \begin{bmatrix}
			\bu_1 & \cdots & \bu_r
			\end{bmatrix}\in\mathbb{R}^{m\times r}_+$ and $V := \begin{bmatrix}
			\bv_1 & \cdots & \bv_r
			\end{bmatrix}\in\mathbb{R}^{n\times r}_+$ are non-negative matrices encompassing the $r$ principal components of the model (the symmetric version can be defined in a similar manner). Furthermore, similar to the rank-1 case, $S$ is a sparse noise matrix. Under this rank-$r$ spiked model, the aim of the non-negative \textbf{rank-$\bf r$} RPCA is to recover the non-negative matrices $U$ and $V$ given a subset of the elements of the noisy measurement matrix $X$. In Section~\ref{sec:rankr}, we will elaborate on the technical difficulties behind this extension. In addition, we will provide some empirical evidence to support that the developed results may hold for the general non-negative rank-$r$ RPCA with $r\geq 2$. 
		\end{remark}
	}
	
	\begin{definition}
		Given the set $\Omega$, two graphs are defined below:
		\begin{itemize}
			\item[-] The sparsity graph $\mathcal{G}(\Omega)$ induced by $\Omega$ for an instance of~\eqref{snpca} is defined as a graph with the vertex set $V :=\{1,2,...,n\}$ that includes an edge $(i,j)$ if $(i,j)\in\Omega$.
			\item[-] The bipartite sparsity graph $\mathcal{G}_{m, n}(\Omega)$ induced by $\Omega$ for an instance of~\eqref{anpca} is defined as a graph with the vertex partitions $V_u :=\{1,2,...,m\}$ and {$V_v :=\{m+1,m+2,...,m+n\}$} that includes an edge $(i,j)$ if $(i,j-m)\in\Omega$.
		\end{itemize}
		Furthermore, define $\Delta(\mathcal{G}(\Omega))$ and $\delta(\mathcal{G}(\Omega))$ as the maximum and minimum degrees of the nodes in $\mathcal{G}(\Omega)$, respectively. Similarly, $\Delta(\mathcal{G}_{m,n}(\Omega))$ and $\delta(\mathcal{G}_{m,n}(\Omega))$ are used to refer to the maximum and minimum degrees of the nodes in $\mathcal{G}_{m,n}(\Omega)$, respectively.
	\end{definition}
	\begin{definition}
		The sets of~\textbf{bad/corrupted} and~\textbf{good/correct} measurements are defined as ${B} = \{(i,j)| (i,j)\in\Omega, S_{ij}\not=0\}$ and ${G} = \{(i,j)| (i,j)\in\Omega, S_{ij}=0\}$, respectively.
	\end{definition}
	
	Based on the above definitions, the sparsity graph is allowed to include self-loops.
	For a positive vector $\bx$, we denote its maximum and minimum values with $x_{\max}$ and $x_{\min}$, respectively. Furthermore, define $\kappa(\bx) = \frac{x_{\max}}{x_{\min}}$ as the condition number of the vector $\bx$.
	The first result of this paper develops deterministic conditions on the measurement set $\Omega$ and the sparsity pattern of the noise matrix $S$ to guarantee that the positive {rank-1} RPCA has benign landscape. Let $\bu ^*$ and $(\bu ^*, \bv ^*)$ denote the true principal components of~\eqref{snpca} and~\eqref{anpca}, respectively.
	
	\begin{theorem}[Informal, Deterministic Guarantee]\label{thm_inf_det}
		{Assuming that $\bu ^*, \bv ^*>0$, there exist regularization functions $R(\bu)$ and $R(\bu,\bv)$ such that the following statements hold with overwhelming probability:
			\begin{itemize}
				\item[1.]~\eqref{snpca} has no spurious local minimum and has a unique global minimum that coincides with the true component, provided that $\mathcal{G}(G)$ has \textbf{no bipartite component} and 
				\begin{equation}
				\kappa(\bu^*)^4\Delta(\mG(B))\lesssim\delta(\mG(G))
				\end{equation}
				\item[2.]~\eqref{anpca} has no spurious local minimum and has a unique global minimum that coincides with the true components, provided that $\mathcal{G}_{m,n}(G)$ is \textbf{connected} and
				\begin{equation}
				\max\left\{\kappa(\bu^*)^4, \kappa(\bv^*)^4\right\}\Delta(\mG_{m,n}(B))\lesssim\delta(\mG_{m,n}(G))
				\end{equation}
		\end{itemize}}
	\end{theorem}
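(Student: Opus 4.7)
The plan is to analyze first-order stationarity in this non-smooth setting via the one-sided directional derivative $f_{\mathrm{reg}}'(\bu;\bd)$, which is well-defined because $f_{\mathrm{reg}}$ is a finite sum of $|\cdot|$ composed with bilinear forms plus the regularizer. A feasible point is a local minimum only if $f_{\mathrm{reg}}'(\bu;\bd)\ge 0$ for every $\bd$ in the non-negative tangent cone at $\bu$, so the goal is, for each candidate $\bu\ne\bu^*$, to exhibit a feasible direction $\bd$ with $f_{\mathrm{reg}}'(\bu;\bd)<0$. The natural first attempt is $\bd=\bu^*-\bu$, which is automatically feasible because $\bu^*\ge 0$ places $\bu+t\bd$ in the non-negative orthant for $t\in[0,1]$.

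The directional derivative admits an explicit closed form: each $(i,j)\in\Omega$ contributes $-\sign(X_{ij}-u_iu_j)(u_jd_i+u_id_j)$ when the residual is nonzero and $|u_jd_i+u_id_j|$ when the residual is zero. Splitting $\Omega=G\cup B$, a good edge has $X_{ij}=u_i^*u_j^*$, and substituting $d_i=u_i^*-u_i$ gives $u_jd_i+u_id_j=u_ju_i^*+u_iu_j^*-2u_iu_j$; an AM--GM argument then shows that this quantity, multiplied by $-\sign(u_i^*u_j^*-u_iu_j)$, is non-positive and, modulo degenerate sub-cases, bounded above by $-c\,u_{\min}^*(|d_i|+|d_j|)$ for a positive constant $c$. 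On a bad edge the contribution is bounded in absolute value by $u_{\max}(|d_i|+|d_j|)$. Summing over nodes and invoking the degree bounds $\delta(\mG(G))$ and $\Delta(\mG(B))$, together with the regularizer-induced control $\|\bu\|_\infty\lesssim\|\bu^*\|_\infty$, yields a net strict descent precisely when $\delta(\mG(G))\cdot u_{\min}^*$ dominates $\Delta(\mG(B))\cdot u_{\max}^*$ up to absolute constants, which is the stated condition (the fourth power of $\kappa(\bu^*)$ appears because both the descent and the noise bounds each carry a quadratic dependence on the scale of $\bu^*$ through the bilinear structure).

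For uniqueness and exact recovery, the argument above forces any local minimum $\bu$ to satisfy $u_iu_j=u_i^*u_j^*$ on every good edge, since otherwise $\bu^*-\bu$ would be a strict descent direction. Writing $r_i=u_i/u_i^*>0$, this yields $r_ir_j=1$ on every $(i,j)\in G$, so $r$ equals some $c>0$ on one side and $1/c$ on the other of every bipartition of each connected component of $\mG(G)$. The no-bipartite-component hypothesis guarantees that each such component contains an odd cycle, which forces $c=1/c$ and hence $c=1$ and $\bu=\bu^*$. The asymmetric case is parallel: $\mG_{m,n}$ is intrinsically bipartite, so the ratios $r_i^u=u_i/u_i^*$ and $r_j^v=v_j/v_j^*$ satisfy $r_i^u r_j^v=1$, and connectedness forces a single scalar $c$, which the regularizer $R_\beta$ is calibrated to pin to $c=1$.

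The main obstacle is the degenerate regime: good edges at which the residual $X_{ij}-u_iu_j$ vanishes accidentally at a non-optimal $\bu$ (so the directional derivative contains $|\cdot|$ terms rather than signed ones), and coordinates where $u_i=0$ makes the non-negativity constraint active. Both situations require a refined choice of $\bd$: replacing $\bu^*-\bu$ by a sign-adjusted or coordinate-localized perturbation, and verifying strict descent by checking both $\pm\bd$ on the offending coordinates. A secondary technical point is designing $R_\beta$ so that it (i) prevents blow-up stationary points without introducing new spurious ones, and (ii) breaks the continuous rescaling ambiguity that otherwise coexists with the good-edge equations.
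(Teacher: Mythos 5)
There is a genuine gap at the center of your argument: the per-edge lower bound you claim for the direction $\bd=\bu^*-\bu$ is false, not just in degenerate sub-cases but generically. Concretely, take $(i,j)\in G$ with $u_i^*=u_j^*=1$, $u_i=1+\epsilon$, $u_j=1-\epsilon$. Then $d_i=-\epsilon$, $d_j=\epsilon$, so $|d_i|+|d_j|=2\epsilon$, while $u_ju_i^*+u_iu_j^*-2u_iu_j=2\epsilon^2$. The contribution is $O(\epsilon^2)$, not $\Omega(u_{\min}^*(|d_i|+|d_j|))=\Omega(\epsilon)$, so your good-edge gain can be overwhelmed by the bad-edge loss no matter what degree condition holds. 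This is precisely the cancellation inherent in moving \emph{all} coordinates simultaneously, and it is also why the bipartite scaling ambiguity (set $u_i=cu_i^*$ on one side, $u_j=u_j^*/c$ on the other) is invisible to $\bu^*-\bu$: on such points every good-edge residual is zero yet $|u_jd_i+u_id_j|>0$, so the direction you propose is actually ascent on those edges. Your uniqueness step inherits the same defect, since it relies on the very descent bound that fails.

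The paper's proof avoids this entirely by \emph{not} using $\bu^*-\bu$. After ordering the ratios $u_i^*/u_i$, it isolates the sets $T_1$ and $T_2$ of coordinates attaining the extremal ratio from below and above, and perturbs only those coordinates, multiplicatively (see~\eqref{eqd}). Because all indices in $T_1$ share the same minimal ratio, every good edge internal to $T_1$ (or to $T_2$, or from $T_1\cup T_2$ into $N$) gives a first-order decrease of definite sign, and the only edges without a first-order decrease are those \emph{between} $T_1$ and $T_2$; the no-bipartite-component hypothesis then says not all edges can be of that kind, forcing a strictly negative (and also a strictly positive) directional derivative, hence non-stationarity. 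This coordinate-localized construction is the crux you flag as a ``refined choice of $\bd$'' to be supplied later, but it is the whole argument, not a patch. Separately, you conflate the two regularizers: the quartic penalty $R_\beta$ only caps $\|\bu\|_\infty$ (Lemma~\ref{l4}); the rescaling ambiguity in the asymmetric case is broken by the additional term $\alpha|\bu^\top\bu-\bv^\top\bv|$ in~\eqref{p3_aasym}, not by $R_\beta$.
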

	Theorem~\ref{thm_inf_det} puts forward a set of deterministic conditions for the absence of spurious local solutions in~\eqref{snpca} and~\eqref{anpca} as well as the uniqueness of the global solution. Notice that no upper bound is assumed on the values of the nonzero entries in the noise matrix. The reasoning behind the conditions imposed on the minimum and maximum degrees of the nodes in the sparsity graph of the measurement set is to ensure the identifiability of the problem. We will elaborate more on this subtle point later in Section~\ref{sec6}. Furthermore, we will show later in the paper that some of the conditions delineated in Theorem~\ref{thm_inf_det}---such as the strict positivity of $\bu ^*$ and $\bv ^*$, as well as the absence of bipartite components in $\mG(G)$ for~\eqref{snpca}---are also necessary for the exact recovery.
	
	The second main result of this paper investigates~\eqref{snpca} and~\eqref{anpca} under random sampling and noise structures. In particular, suppose that each element (in the symmetric case, each element of the upper triangular part) of $S$ is nonzero with probability $d$. Then, for every $(i,j)$, we have
	\begin{equation}
	X_{ij} = \left\{
	\begin{array}{ll}
	u^*_iv^*_j& \text{with probability}\ 1-d\\
	\text{arbitrary}& \text{with probability}\ d
	\end{array} 
	\right.
	\end{equation}
	Furthermore, suppose that every element of $X$ is measured with probability $p$. In other words, every $(i,j)$ belongs to $\Omega$ with probability $p$. Finally, we assume that the noise and sampling events are independent.
	
	\begin{theorem}[Informal, Probabilistic Guarantee]\label{thm_inf_prob}
		{Assuming that $\bu ^*, \bv ^*>0$, there exist regularization functions $R(\bu)$ and $R(\bu,\bv)$ such that the following statements hold with overwhelming probability:
			\begin{itemize}
				\item[1.]~\eqref{snpca} has no spurious local minimum and has a unique global minimum that coincides with the true component, provided that
				\begin{equation}\label{upper}
				p\gtrsim\frac{\kappa(\bu^*)^4\log n}{n},\qquad d\lesssim\frac{1}{\kappa(\bu^*)^4}
				\end{equation}
				\item[2.]~\eqref{anpca} has no spurious local minimum and has a unique global minimum that coincides with the true components, provided that
				\begin{equation}
				p\gtrsim\frac{\kappa(\bw^*)^4n\log n}{m^2},\qquad d\lesssim\frac{r}{\kappa(\bw^*)^4}
				\end{equation}
				where $\bw^* = \begin{bmatrix}
				{\bu^*}^\top & {\bv^*}^\top
				\end{bmatrix}^\top$, $r = m/n$, and $n\geq m$.
		\end{itemize}}
	\end{theorem}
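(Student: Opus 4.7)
The plan is to derive Theorem~\ref{thm_inf_prob} as a corollary of the deterministic Theorem~\ref{thm_inf_det} by showing that the graph-theoretic hypotheses on $\mG(G),\mG(B)$ (and on $\mG_{m,n}(G),\mG_{m,n}(B)$ in the asymmetric case) hold with overwhelming probability under the random sampling and noise model. The first step is to observe that, since $\Omega$ and $\mathrm{supp}(S)$ are independent with each entry $(i,j)$ belonging to $\Omega$ independently with probability $p$ and having $S_{ij}\neq 0$ independently with probability $d$, every potential edge of the sparsity graph independently belongs to $\mG(G)$ with probability $q_G := p(1-d)$ and to $\mG(B)$ with probability $q_B := pd$. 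Hence $\mG(G)$ and $\mG(B)$ are independent Erd\H{o}s--R\'enyi random graphs on $n$ vertices (with self-loops), and analogously $\mG_{m,n}(G)$ and $\mG_{m,n}(B)$ are independent random bipartite graphs. With this representation in hand, the remainder of the proof reduces to standard concentration arguments for degrees and for connectivity.

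The second step is to control the minimum and maximum degrees. I would apply a multiplicative Chernoff bound to the degree of each vertex in $\mG(G)$ and in $\mG(B)$, and then union-bound over the $n$ (respectively $m+n$) vertices. In the symmetric case this yields $\delta(\mG(G))\geq \tfrac{1}{2}nq_G$ and $\Delta(\mG(B))\leq 2nq_B+O(\log n)$ with probability at least $1-n^{-c}$, provided that $nq_G\gtrsim \log n$. The deterministic sufficient condition $\kappa(\bu^*)^4\,\Delta(\mG(B))\lesssim \delta(\mG(G))$ then becomes $\kappa(\bu^*)^4\,npd\lesssim np(1-d)$, which rearranges to $d\lesssim 1/\kappa(\bu^*)^4$, while the requirement $nq_G\gtrsim \log n$ forces $p\gtrsim \kappa(\bu^*)^4\log n/n$. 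The asymmetric case is analogous, except that the binding expected degree in $\mG_{m,n}(G)$ is $mq_G$ on the smaller side $V_v$, while the maximum degree in $\mG_{m,n}(B)$ is dominated by the $V_u$-side expectation $nq_B$; balancing these two sides yields $d\lesssim r/\kappa(\bw^*)^4$ together with the stated threshold on $p$.

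The third step is to verify the topological hypotheses. For~\eqref{snpca}, Theorem~\ref{thm_inf_det} requires $\mG(G)$ to have no bipartite component. Above the connectivity threshold $q_G\gtrsim \log n/n$, $\mG(G)$ is connected with high probability by classical Erd\H{o}s--R\'enyi results, so it suffices to show the unique component is non-bipartite. This can be done either by a second-moment count of triangles (whose expectation $\binom{n}{3}q_G^3$ is $\gg 1$ in this regime) or, more cheaply, via the self-loops permitted by the definition of the sparsity graph: the number of self-loops in $\mG(G)$ is $\mathrm{Binomial}(n,q_G)$ with mean $\gtrsim \log n$, so at least one self-loop appears with high probability, immediately destroying bipartiteness. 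For~\eqref{anpca}, connectivity of $\mG_{m,n}(G)$ under $mq_G\gtrsim \log n$ is a standard result for random bipartite graphs that I would cite directly.

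The step I expect to be the main obstacle is the tight uniform concentration in the asymmetric regime with $n\gg m$: the quantity $\delta(\mG_{m,n}(G))$ is governed by the smaller side ($V_v$ nodes have $\mathrm{Binomial}(m,q_G)$ degrees), whereas $\Delta(\mG_{m,n}(B))$ is governed by the larger side ($V_u$ nodes have $\mathrm{Binomial}(n,q_B)$ degrees, with the maximum taken over $m$ such random variables). Ensuring that $\kappa(\bw^*)^4\,\Delta(\mG_{m,n}(B))<\delta(\mG_{m,n}(G))$ holds \emph{uniformly} over all vertices — not merely in expectation — requires $p$ to be large enough to suppress both sides' deviations simultaneously, and the asymmetry between the two Binomial parameters is what produces the extra $n/m$ factor in the stated threshold $p\gtrsim \kappa(\bw^*)^4 n\log n/m^2$. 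Carrying out this uniform concentration cleanly, together with checking that the regularization term $R_\beta$ guaranteed by Theorem~\ref{thm_inf_det} does not interfere with the probabilistic bounds, is the most delicate bookkeeping step I foresee.
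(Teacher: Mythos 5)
Your proposal follows essentially the same route as the paper: express $\mG(G)$ and $\mG(B)$ as random graphs with edge probabilities $p(1-d)$ and $pd$, control $\delta(\mG(G))$ and $\Delta(\mG(B))$ by Chernoff bounds plus a union bound over vertices (paper's Lemma~\ref{l7} and Lemma~\ref{l10}), establish connectivity and non-bipartiteness of $\mG(G)$ via a spanning-tree count and a self-loop argument (Lemma~\ref{l2}) or connectivity of the random bipartite graph (Lemma~\ref{l_bipartite}), and then invoke the deterministic Theorem~\ref{thm4} (resp.\ Theorem~\ref{thm4_asym}) with fixed regularization parameters $\beta=1$, $\lambda=n/2$ (resp.\ $(m+n)/2$). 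One minor slip worth fixing: $\mG(G)$ and $\mG(B)$ are \emph{not} independent random graphs, since a fixed pair $(i,j)$ cannot simultaneously be a good and a bad measurement; however, your argument only uses marginal concentration for each graph separately combined by a union bound, so nothing downstream is affected.
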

	
	A number of interesting corollaries can be obtained based on Theorem~\ref{thm_inf_prob}. For instance, it can be inferred that the exact recovery is guaranteed even if the number of grossly corrupted measurements is on the same order as the total number of measurements, provided that $\frac{u^*_{\max}}{u^*_{\min}}$ is uniformly bounded from above.
	
	In addition to the absence of spurious local minima and the uniqueness of the global minimum, the next proposition states that the true solution can be recovered via local search algorithms for non-smooth optimization.
	\begin{proposition}[Informal, Global Convergence]\label{prop_global}
		Under the assumptions of Theorem~\ref{thm_inf_det} and~\ref{thm_inf_prob}, local search algorithms converge to the true solutions of~\eqref{snpca} and~\eqref{anpca} with overwhelming probability.
	\end{proposition}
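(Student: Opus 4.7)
The plan is to reduce the global-convergence claim to the landscape results already established in Theorems~\ref{thm_inf_det} and~\ref{thm_inf_prob}, together with standard convergence analyses for non-smooth descent. Fix a prototypical local search method, for instance a projected subgradient iteration (or a proximal subgradient iteration) applied to $f_{\mathrm{reg}}$ with a diminishing step size. Because the regularizer $R_\beta$ penalizes vectors whose norm significantly exceeds that of the true component, the iterates stay in a compact sublevel set, and $f_{\mathrm{reg}}$ is Lipschitz on that set. Classical results for Clarke-regular (or, more generally, semi-algebraic) locally Lipschitz functions then guarantee that every accumulation point of the iterates is a Clarke stationary point of $f_{\mathrm{reg}}$.

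Next, I would argue that on the (overwhelmingly likely) event that the hypotheses of Theorem~\ref{thm_inf_det} or~\ref{thm_inf_prob} hold, every Clarke stationary point other than the true component is escapable. The landscape theorems certify that the only local minimum is the true solution; inspecting their proofs, at any non-optimal stationary point one can extract an explicit feasible direction $\bd$ along which the one-sided directional derivative of $f_{\mathrm{reg}}$ is strictly negative. This identifies every spurious Clarke stationary point as a non-attractive ``saddle" in the non-smooth sense: a descent iteration started near such a point either immediately decreases along $\bd$ or, after a single small random perturbation, exits a neighborhood of the point and cannot return because its objective value has strictly decreased below the stationary level.

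Combining these two ingredients gives that the iterates converge to the unique stationary point that is also a local (equivalently, global) minimum, namely the true component. Since the landscape hypotheses hold with overwhelming probability under the assumptions of Theorems~\ref{thm_inf_det} and~\ref{thm_inf_prob}, global convergence follows with overwhelming probability as well. The same argument applies verbatim to both~\eqref{snpca} and~\eqref{anpca}, with the only difference being the explicit form of the descent direction $\bd$ extracted from the corresponding landscape proof.

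The main obstacle is to make the escape step rigorous in the non-smooth setting. In the smooth case, a strict saddle has a direction of negative Hessian curvature, and perturbed gradient methods are known to escape almost surely. For the non-smooth $f_{\mathrm{reg}}$, the analogous certificate is a strictly negative directional derivative, and one must verify that the local search method genuinely moves along (or sufficiently close to) such a direction rather than getting trapped at a kink of the $\ell_1$ term. The most delicate point will therefore be showing that the descent directions isolated in the landscape proofs are compatible with the subgradient updates actually produced by the algorithm; this is likely to require either a Goldstein-style enlarged subgradient, a random smoothing of $f_{\mathrm{reg}}$, or an appeal to recent escape results for subgradient methods on weakly convex semi-algebraic objectives.
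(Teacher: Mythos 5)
Your high-level skeleton (iterates stay compact, accumulation points are Clarke-stationary, therefore reduce everything to a statement about C-stationary points) is the right frame, but you miss the paper's central move and as a result you manufacture a gap where the paper has none. You assume there may be spurious C-stationary points beyond the D-min-stationary points already ruled out by the landscape theorems, and then propose to ``escape" them via a directional-descent / perturbation argument, correctly flagging that this is the delicate step (Goldstein subgradients, random smoothing, weakly convex escape theory). The paper does not need any of this. Its formal version of the proposition is Theorem~\ref{thm14}, which proves directly that the true solution and the origin are the \emph{only} C-stationary points --- a strictly stronger landscape statement than the D-stationarity results. Once that is in hand, convergence of standard non-smooth local search methods (gradient sampling, SQP, exact penalty), which only guarantee convergence to a C-stationary point, is immediate; there is nothing left to escape.

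The technical lever you overlooked is the structural fact that $f(\bu) = \max_{\sigma\in\mathcal{M}} f_\sigma(\bu)$, where each $f_\sigma$ is a smooth (indeed polynomial) function obtained by fixing the signs inside the $\ell_1$ term. The paper pairs this with Lemma~\ref{l2_clarke}, which bounds $f^\circ(\bar\bx,\bd)$ by $\max_{i\in I(\bar\bx)} h_i^\circ(\bar\bx,\bd)$ for a finite max of locally Lipschitz functions, and with Lemma~\ref{l1_clarke}, which lets one add the smooth regularizer $R$. Since the descent direction $\bd$ constructed in the proof of Theorem~\ref{thm4} satisfies $f'_\sigma(\bu,\bd)+R'(\bu,\bd)<0$ for every active $\sigma$, these lemmas immediately give $f_{\mathrm{reg}}^\circ(\bu,\bd)<0$, contradicting $0\in\partial_C f_{\mathrm{reg}}(\bu)$. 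In other words, the ``kink'' you worry about --- a point where the one-sided directional derivative is negative but the Clarke generalized derivative might still be nonnegative --- is precisely what the max-of-smooth-functions decomposition rules out. You should replace your escape argument with this Clarke-stationarity landscape upgrade; as written, your proposal relies on machinery (weakly-convex escape theorems, random perturbations) that is neither invoked nor needed in the paper, and it leaves as an open issue something the paper settles cleanly.
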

	
	
	Starting from Section~\ref{sec2}, we will delve into the detailed analysis of the symmetric and asymmetric non-negative RPCA. In particular, we will analyze~\eqref{snpca} and~\eqref{anpca} under different deterministic and probabilistic settings and provide formal versions of Theorems~\ref{thm_inf_det} and~\ref{thm_inf_prob}.
	
	\section{Numerical Results}\label{sec:num}
{In this section, we demonstrate the efficacy of the above-mentioned results in different experiments. To this goal, first we briefly introduce the recently developed sub-gradient method~\cite{li2018nonconvex} that is specifically tailored to non-smooth and non-convex problems, such as those considered in this paper. The main advantage of the sub-gradient algorithm compared to other state-of-the-art methods is its extremely simple implementation; we present a sketch of the algorithm for solving the non-symmetric positive RPCA below\footnote{Note that this is a slightly modified version of the sub-gradient algorithm in~\cite{li2018nonconvex} to ensure the positivity of the iterates.} (the symmetric version can be solved using a similar algorithm with slight modifications):
	
	\vspace{2mm}
	{\begin{algorithm}[H]
			\SetAlgoLined
			{\textbf{Initialization:} Strictly positive initial point $\bw_0^\top = \begin{bmatrix}
				\bu_0^\top & \bv_0^\top
				\end{bmatrix}^\top$ and step size $\mu_0$}\;
			\For{$k = 0,1,\dots$}{
				set $\bd_k$ as a sub-gradient of $f_{\mathrm{reg}}(\bu_0,\bv_0)$ defined in~\eqref{anpca}\;
				set $\mu_k$ according to a geometrically diminishing rule such that $\bw_k-\mu_k\bd_k$ is strictly positive\;
				set $\bw_{k+1} = \bw_k-\mu_k\bd_k$\;
			}
			\caption{Sub-gradient algorithm}\label{algo1}
		\end{algorithm}
	}
	
	\vspace{2mm}
	\noindent It has been shown in~\cite{li2018nonconvex} that, under certain conditions on the initial point $\bw_0$, the initial step size $\mu_0$, and the update rule for $\mu_k$, the iterates $\bw_0,\bw_1,\dots$ converge to the globally optimal solution at \textit{linear} rate, provided that $\bw_0$ is sufficiently close to the optimal solution. The closeness of $\bw_0$ to $\bw^*$ is required partly to avoid becoming stuck at a spurious local minima. This requirement can be relaxed for the positive RPCA due to the absence of undesired spurious local solutions, as proven in this paper. It is also worthwhile to mention that, even though we use the sub-gradient algorithm to solve the positive RPCA, it will be shown in Section~\ref{sec8} that the results of this paper guarantee that a large class of local-search algorithms converge to the globally optimal solution of~\eqref{snpca} or~\eqref{anpca}. 
	
	All of the following simulations are run on a laptop computer with an Intel Core i7 quad-core 2.50 GHz CPU and 16GB RAM. The reported results are for a serial implementation in MATLAB R2017b.
}

\begin{figure*}
	\centering
	\subfloat[RPCA]{\label{fig_rpca}
		\includegraphics[width=.46\columnwidth]{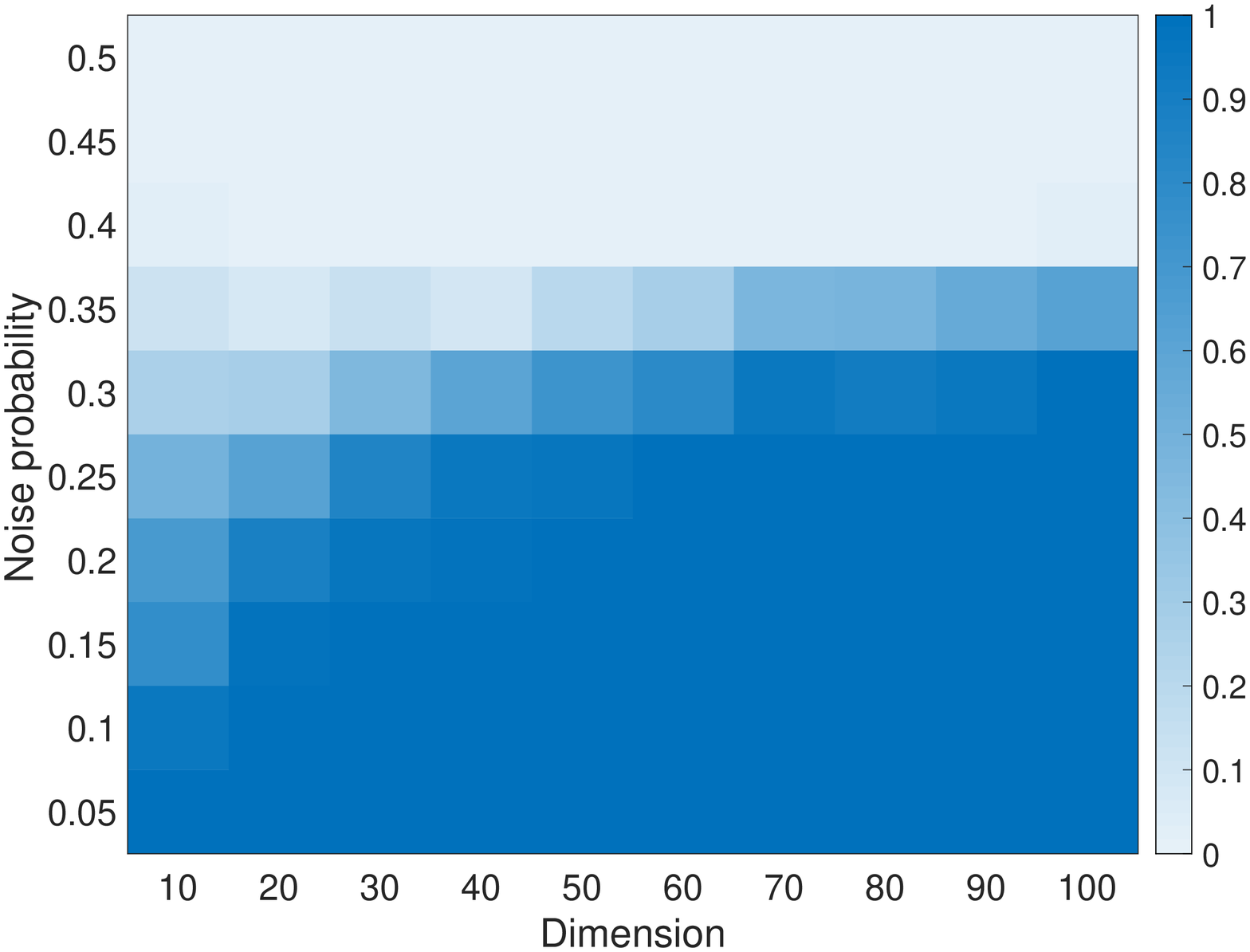}}
	\hspace{0cm}
	\subfloat[PCA]{\label{fig_runtime}
		\includegraphics[width=.5\columnwidth]{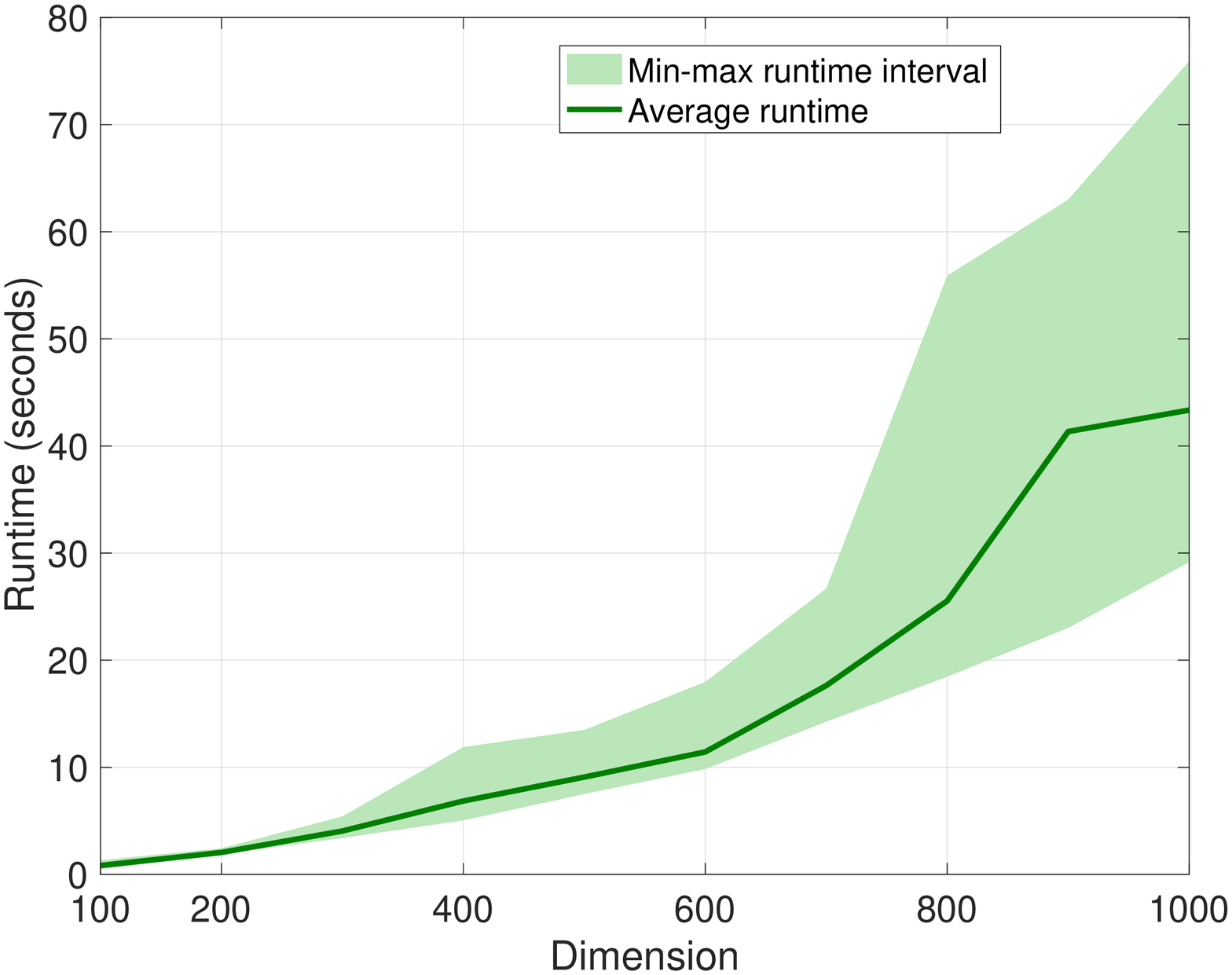}}
	
	\caption{ \footnotesize (a) The performance of the randomly initialized sub-gradient method for~\eqref{snpca}. The intensity of the color is proportional to the exact recovery rate of the true solution (darker blue implies higher recovery rate). (b) The runtime of the sub-gradient method for~\eqref{snpca}. For each dimension, it shows the average runtime and its min-max interval over 100 independent trials.}
	\label{rpca_vs_pca}
\end{figure*}
%
\subsection{Exact Recovery:}\label{subsec:exact} To demonstrate the strength of the above-mentioned results, we consider thousands of randomly generated instances of the positive {rank-1} RPCA with different sizes and noise levels. In particular, the dimension of the instances ranges from $10$ to $100$. For each instance, the elements of $\bu^*$ are uniformly chosen from the interval $[0,2]$. Note that $\bu^*$ will be strictly positive with probability one. Furthermore, each element of the upper triangular part of the symmetric noise matrix $S$ is set to $2$ with probability $d$ and $0$ with probability $1-d$. Figure~\ref{fig_rpca} shows the performance of randomly initialized sub-gradient method for the symmetric positive {rank-1} RPCA. {We declare that a solution is recovered exactly if $\|\bu\bu^\top - \bu^*{\bu^*}^\top\|_F/\|\bu^*{\bu^*}^\top\|_F\leq 10^{-4}$}. 
For each dimension and noise probability, we consider 100 randomly generated instances of the problem and demonstrate its exact recovery rate. The heatmap shows the exact recovery rate of the sub-gradient method, when directly applied to~\eqref{snpca}. It can be observed that the algorithm has recovered the globally optimal solution even when $35\%$ of the entries in the data matrix were severely corrupted with the noise. In contrast, even a highly sparse additive noise in the data matrix prevents the sub-gradient method from recovering the true solution, when applied to the smooth problem~\eqref{norm2}. {Figure~\ref{fig_runtime} shows the graceful scalability of the sub-gradient algorithm when applied to~\eqref{snpca}. It can be seen that the algorithm is highly efficient. In particular, its average runtime varies from $0.88$ seconds for $n = 100$ to $43.20$ seconds for $n = 1000$.}

\vspace{5mm}
\subsection{The Emergence of Local Solutions} 
Recall that $\bu^*$ and $\bv^*$ are both assumed to be strictly positive. In what follows, we will illustrate that relaxing these conditions to non-negativity gives rise to spurious local solutions. Consider an instance of the symmetric {non-negative rank-1} RPCA with the parameters
\begin{equation}
\bu^* = \begin{bmatrix}
1 & 1 & 0
\end{bmatrix}^\top,\qquad S = 0,\qquad \Omega = \{1,2,3\}^2\backslash\{(3,3)\}
\end{equation}
Notice that $\bu^*$ consists of two strictly positive and one zero entries. Furthermore, this is a noiseless scenario where $\Omega$ consists of all possible measurements except for one. To examine the existence of spurious local solutions in this example, $10000$ randomly initialized trials of the sub-gradient method is ran and the normalized distances between the obtained and true solutions are displayed in Figure~\ref{hist}. Based on this histogram, about $20\%$ of the trials converge to spurious local solutions, implying that they are ubiquitous in this instance. This experiment shows why the positivity of the true solution is crucial and cannot be relaxed. We will formalize and prove this statement later in Section~\ref{sec3}.  

\begin{figure}
	\centering
	\includegraphics[width=.45\columnwidth]{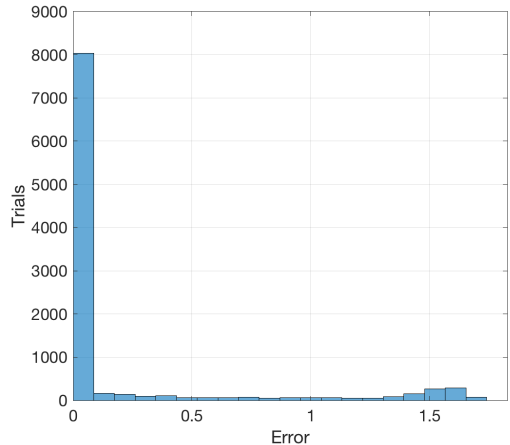}
	\caption{ \footnotesize The normalized distance between the obtained solution using randomly initialized sub-gradient method and the true solution.}
	\label{hist}
\end{figure}

\vspace{5mm}
\subsection{Moving Object Detection} In video processing, one of the most important problems is to detect anomaly or moving objects in different frames of a video. In particular, given a video sequence, the goal is to separate the nearly-static or slowly-changing background from the dynamic foreground objects~(\cite{cucchiara2003detecting}).  Based on this observation,~\cite{candes2011robust} has proposed to model the background as a low-rank component, and the dynamic foreground as the sparse noise. In particular, suppose that the video sequence consists of $d_f$ gray-scale frames, each with the resolution of $d_m\times d_n$ pixels. The data matrix $X$ is defined as an asymmetric $d_md_n \times d_f$ matrix whose $i^{\text{th}}$ column is the vectorized version of the $i^{\text{th}}$ frame. Therefore, the moving object detection problem can be cast as the recovery of the non-negative vectors $\bu\in\mathbb{R}^{d_md_n}_+$ and $\bv\in\mathbb{R}^{d_f}_+$, as well as the sparse matrix $S\in\mathbb{R}^{d_md_n\times d_f}$, such that
\begin{equation}\label{model}
X \approx \bu\bv^\top+S
\end{equation}
Note that the background may not always have a rank-1 representation. However, we will show that~\eqref{model} is sufficiently accurate if the background is relatively static. Furthermore, notice that when the background is completely static, the elements of $\bv$ should be equal to one. However, this is not desirable in practice since the background may change due to varying illuminations, which can be captured by the variable vector $\bv$. Each entry of $X$ is an integer between 0 (darkest) and 255 (brightest). To ensure the positivity of the true components, we increase each element of $X$ by 1 without affecting the performance of the method.

The considered test case is borrowed from the work by~\cite{toyama1999wallflower}\footnote{The video frames are publicly available at~\url{https://www.microsoft.com/en-us/research/project/test-images-for-wallflower-paper/}.} and is a sequence of video frames taken from a room, where a person walks in, sits on a chair, and uses a phone. We consider 100 gray-scale frames of the sequence, each with the resolution of $120\times 160$ pixels. Therefore, $X$, $\bu$, and $\bv$ belong to $\mathbb{R}^{19,200\times 100}_+$, $\mathbb{R}^{19,200}_+$, and $\mathbb{R}^{100}_+$, respectively. Figure~\ref{movedObj} shows that the sub-gradient method with a random initialization can recover the moving object, which is in accordance with the theoretical results of this paper. 

\begin{figure}
	\centering
	\subfloat{\label{bw1}
		\includegraphics[width=.3\columnwidth]{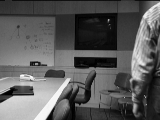}}
	\subfloat{\label{bw2}
		\includegraphics[width=.3\columnwidth]{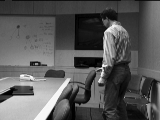}}
	\subfloat{\label{bw3}
		\includegraphics[width=.3\columnwidth]{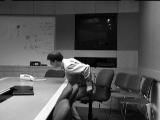}}
	
	\subfloat{\label{pic1_2}
		\includegraphics[width=.3\columnwidth]{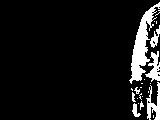}}
	\subfloat{\label{pic2_2}
		\includegraphics[width=.3\columnwidth]{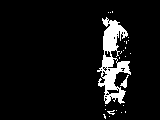}}
	\subfloat{\label{pic3_2}
		\includegraphics[width=.3\columnwidth]{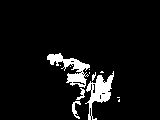}}
	
	\caption{ \footnotesize The performance of the sub-gradient method in the moving object detection problem. The first row shows 3 out of 100 gray-scale frames in the studied test case that contain the moving objects. The second row shows the outcome of~\eqref{snpca} solved using randomly initialized sub-gradient method. }\label{movedObj}
\end{figure}
	
	{\section{Related Work}}

	\vspace{2mm}
	{\subsection{Non-convex and Low-rank Optimization}}
	\vspace{2mm}
	A considerable amount of work has been carried out to understand the inherent difficulty of solving low-rank optimization problems both locally and globally.
	
	\vspace{2mm}
	\noindent{\bf Convexification:}
	Recently, there has been a pressing need to develop efficient methods for solving large-scale nonconvex optimization problems that naturally arise in data analytics and machine learning~(\cite{dumais1998inductive, sharif2014cnn, bottou2018optimization, zhang2018large, olfat2017spectral}). One promising approach for making these large-scale problems more {tractable} is to resort to their convex surrogates; these methods started to receive a great deal of attention after the seminal works by~\cite{donoho2006most} and~\cite{candes2006stable} on the \textit{compressive sensing} and have been extended to emerging problems in machine learning, such as fairness~(\cite{olfat2017spectral}), robust polynomial regression~(\cite{molybog2018conic, Madani2018conic}), and neural networks~(\cite{bach2017breaking}), to name a few. Nonetheless, the size of today's problems has been a major impediment to the tractability of these methods. In practice, the dimension of the real-world problems is overwhelmingly large, often surpassing the ability of these seemingly efficient convex methods to solve the problem in a reasonable amount of time. Due to this so-called \textit{curse of dimensionality}, the common practice is to deploy fast local search algorithms directly applied to the original nonconvex problem with the hope of converging to acceptable solutions. Roughly speaking, these methods can only guarantee the local optimality, thus exposing themselves to potentially large optimality gaps. However, a recent line of work has shown that a surprisingly large class of nonconvex problems, including matrix completion/sensing~(\cite{bhojanapalli2016global, ge2016matrix, ge2017no, zhu2017global}), phase retrieval~(\cite{sun2018geometric}), and dictionary recovery~(\cite{sun2017complete}) have \textit{benign global landscape}, i.e., every local solution is also global and every saddle point has a direction with a strictly negative curvature (see~\cite{chi2018nonconvex} for a comprehensive survey on the related problems). {More recently, the work by~\cite{zhang2018primal} has introduced a unified framework that shows the benign landscape of nonconvex low-rank optimization problems with general loss functions, provided that they satisfy certain restricted convexity and smoothness properties.} This enables most of the saddle-escaping local search algorithms to converge to a global solution, thereby resulting in a zero optimality gap~(\cite{ge2015escaping}).
	
	
	\vspace{2mm}
	\noindent{\bf Benign landscape:} As mentioned before, it has been recently shown that many low-rank optimization problems can be cast as smooth-but-nonconvex optimization problems that are free of spurious local minima. These methods heavily rely on the notion of \textit{restricted isometry property} (RIP)---a property that was initially introduced by~\cite{candes2005decoding} and has been used ever since as a metric to measure a norm-preserving property of the objective function. In general, these methods have two major drawbacks: 1) they can only target a narrow set of nearly-isotropic instances~(\cite{zhang2018much}), and 2) their proof technique depends on the differentiability of the objective function; a condition that is not satisfied for non-smooth norms, such as $\ell_1$. To the best of our knowledge, the work by~\cite{josz2018theory} is the only one that studies the landscape of the $\ell_1$ minimization problem, where the authors consider the tensor decomposition problem under the full and perfect measurements. Our work is somewhat related to ~\cite{ma2018gradient} that derives similar conditions for the absence of spurious local solution of the non-negative rank-1 matrix completion but for the smooth Frobenius norm minimization problem.
	
	\vspace{2mm}
	\noindent{\bf PCA with prior information:} With an exponential growth in the size and dimensionality of the real-world datasets, it is often required to exploit the additional prior information in the PCA. In many real-world applications, prior knowledge from the underlying physics of the problem---such as non-negativity~(\cite{montanari2016non}), sparsity~(\cite{zou2006sparse}), robustness~(\cite{candes2011robust}), and nonlinearity~(\cite{gorban2008principal})---can be taken into account to perform more efficient, consistent, and accurate PCA. 
	
	\vspace{2mm}
	
	\noindent{\bf Numerical algorithms for non-smooth optimization:} Numerical algorithms for non-smooth optimization problems can be dated back to the work by Clarke on the extended definitions of gradients and directional derivatives, commonly known as generalized derivatives~(\cite{clarke1990optimization}). Intuitively, for non-smooth functions, the gradient in the classical sense seize to exist at a subset of the points in the domain. The Clarke generalized derivative is introduced to circumvent this issue by associating a convex differential to these points, even if the original problem is non-convex. In the domain of unconstrained non-smooth optimization, earlier works have introduced simple algorithms that converge to approximate Clarke-stationary points~(\cite{goldstein1977optimization, chaney1978extension}). More recent methods take advantage of the fact that many non-smooth optimization problems are smooth in every open dense subset of their domains. This implies that the objective function is smooth with probability one at a randomly drawn point. This observation lays the groundwork for several gradient-sampling-based algorithms for both unconstrained and constrained non-smooth optimization problems~(\cite{burke2005robust, curtis2012sequential}). {As mentioned before, a sub-gradient method has been recently proposed by~\cite{li2018nonconvex} for solving the RPCA, where the authors prove linear convergence of the algorithm to the true components, provided that the initial point is chosen sufficiently close to the globally optimal solution.}
	
	\vspace{2mm}
	{\subsection{Comparison to the Existing Results on RPCA}
		Similar to the non-convex matrix sensing and completion, most of the existing results on the RPCA work on a \textit{lifted} space of the variables via different convex relaxations and they do not incorporate the positivity constraints in the problem. In what follows, we will explain the advantages of our proposed method compared to these results.
		
		\vspace{2mm}
		
		\noindent{\bf Positivity constraints:} In the present work, we show that the positivity of the true components is both sufficient and (almost) necessary for the absence of spurious local solutions. We use this prior knowledge to obtain sharp deterministic and probabilistic guarantees on the absence of spurious local minima for the RPCA based on the Burer-Monteiro formulation. For instance, we show that up to a constant factor of the measurements can be grossly corrupted and yet they do not introduce any spurious local solution. Considering the fact that these results heavily rely on the positivity of the true components, it is unclear if similar ``no spurious local minima'' results hold for the general case without the positivity assumption. The statistical properties of these types of constraints have also been shown to be useful in the classical PCA by~\cite{montanari2016non}, where the authors show that by imposing positivity constraints on the principal components, one can guarantee its consistent recovery with smaller signal-to-noise ratio. 
		It is also worthwhile to mention that the incorporation of the non-negativity/positivity constraints in the low-rank matrix recovery can be traced back to some earlier works on the non-negative matrix factorization problem~(\cite{lee1999learning, hoyer2004non}).
		
		\vspace{2mm}
		\noindent{\bf Computational savings:} Similar to the convexification techniques in nonconvex optimization, most of the classical results on the RPCA \text{relax} the inherent non-convexity of the problem by lifting it to higher dimensions~(\cite{candes2011robust, chandrasekaran2011rank, zhou2010stable, hsu2011robust}). In particular, by moving from vector to matrix variables, they guarantee the convexity of the problem at the expense of significantly increasing the number of variables. In this work, we show that such lifting is not necessary for the positive rank-1 RPCA since---despite the non-convexity of the problem---it is free of spurious local solutions and, hence, simple local search algorithms converge to the true components when directly applied to its original formulation.
		
		\vspace{2mm}
		\noindent
		{\bf Sharp guarantees with mild conditions:} In general, most of the existing results on RPCA for guaranteeing the recovery of the true components fall into two categories. First, a large class of methods rely on some deterministic conditions on the spectra of the dominant components and/or the structure of the sparse noise~(\cite{hsu2011robust, chandrasekaran2011rank, yi2016fast}). For instance, the works by~\cite{hsu2011robust, chandrasekaran2011rank} require the regularization coefficient to be within a specific interval that is defined in terms of the true principal components. Furthermore, the algorithm proposed by~\cite{yi2016fast} requires prior knowledge on the density of the sparse noise matrix. Although being theoretically significant, these types of conditions cannot be easily verified and met in practice. With the goal of bypassing such stringent conditions, the second category of research has studied the RPCA under probabilistic models. These types of guarantees were popularized by~\cite{candes2011robust,wright2009robust} and they do not rely on any prior knowledge on the true components or the density of the noise matrix. However, their success is contingent upon specific random models on the sparse noise or the spectra of the true components, neither of which may be satisfied in practice.
		
		In contrast, the method proposed here does not rely on any prior knowledge on the true solution, other than the availability of an upper bound on the maximum absolute value of the elements in the principal components\footnote{Note that in most cases, these types of upper bounds can be immediately inferred by the domain knowledge; see e.g. our discussion on the moving object detection problem.}. Furthermore, unlike the previous works, our results encompass \textit{both deterministic and probabilistic} models under random sampling.
		
	}
	
	{\section{Preliminaries}}\label{sec2}
	
	A \textbf{directional derivative} of a locally Lipschitz and possibly non-smooth function $h(\bx)$ at $\bx$ in the direction $\bd$ is defined as 
	\begin{equation}
	h'(\bx,\mathbf{d}) := \underset{\begin{subarray}{c}
		t\downarrow 0
		\end{subarray}}{\lim}\frac{h(\bx+t\mathbf{d})-h(\bx)}{t} 
	\end{equation}
	upon existence. Based on this definition, $\bar\bu$ is \textbf{directional-minimum-stationary} (or D-min-stationary) for~\eqref{snpca} if $f'(\bar\bu,\mathbf{d})\geq 0$ for every \textit{feasible} direction $\bd$, i.e., a direction that satisfies $d_i\geq 0$ when $u_i = 0$ for every index $i$. Similarly, $\bar\bu$ is \textbf{directional-maximum-stationary} (or D-max-stationary) for~\eqref{snpca} if $f'(\bar\bu,\mathbf{d})\leq 0$ for every feasible $\bd$. Finally, $\bar\bu$ is \textbf{directional-stationary} (or D-stationary) for~\eqref{snpca} if it is either D-min- or D-max-stationary\footnote{Note that the notion of D-stationary points is often used in lieu of D-min-stationary in the literature. However, we use a slightly more general definition in this paper to account for the local maxima of~\eqref{snpca}.}.
	
	Every local minimum (maximum) $\bar{\bu}$ should be D-min (max)-stationary for $f(\bu)$. On the other hand, $\bar{\bu}$ cannot be a D-stationary point if $f(\bu)$ has strictly positive and negative directional derivatives at that point. In that case, $\bar{\bu}$ is neither local maximum nor minimum. 
	A solution to a minimization problem is referred to as~\textbf{spurious local} (or simply local) if there exists another feasible point with a strictly smaller objective value; a solution is~\textbf{globally optimal} (or simply global) if no such point exists. 
	
	Finally, a \textbf{vertex partitioning} of a non-empty bipartite graph is the partition of its vertices into two groups such that there exist no adjacent vertices within each group. 
	
	{\bf Notation:} The upper-case, bold lower-case, and lower-case letters are used to show the matrices, vectors, and scalars, respectively. The space of non-negative and real $n\times 1$ vectors and $m\times n$ matrices are denoted by $\mathbb{R}^n_+$ and $\mathbb{R}^{n\times m}_+$, respectively. The symbols $\|{W}\|_1$ and $\|W\|_F$ denote the element-wise $\ell_1$ norm and Frobenius norm of $W$, respectively. The $(i,j)^{\text{th}}$ entry of a matrix $W$ is shown as $W_{ij}$, whereas the $i^{\text{th}}$ entry of a vector ${\bf w}$ is denoted by $w_i$. Given the sequences $f_1(n)$ and $f_2(n)$, the notation $f_1(n) \lesssim f_2(n)$ or equivalently $f_1(n) = O(f_2(n))$ means that there exists a number $c_1\in[0,\infty)$ such that $f_1(n)\leq c_1f_2(n)$ for all $n$. Similarly, the notation $f_1(n) \gtrsim f_2(n)$ or $f_1(n) = \Omega(f_2(n))$ means that there exists a number $c_2>0$ such that $f_1(n)\geq c_2f_2(n)$ for all $n$. 
	The indicator function $\mathbb{I}_{x\geq\alpha}$ takes the value $1$ if $x\geq\alpha$ and $0$ otherwise. For an event $\mathcal{E}$, the notation $\mathbb{P}(\mathcal{E})$ is used to show the probability of its occurrence. { For a random variable $X$, the symbol $\mathbb{E}\{X\}$ shows its expected value. For notational simplicity and unless stated otherwise, we will refer to non-negative (or positive) rank-1 RPCA as non-negative (or positive) RPCA in the sequel.}

	
	\vspace{2mm}
	\section{Base Case: Noiseless Non-negative RPCA}\label{sec3}
	
	In this section, we consider the noiseless version of both symmetric and asymmetric non-negative RPCA. While not entirely obvious, the subsequent arguments are at the core of our proofs for the general noisy case. In the noiseless scenario,~\eqref{snpca} is reduced to
	{\begin{equation}\tag{P1-Sym}\label{p2-sym}
		\min_{\bu\geq 0}\quad \underbrace{\sum_{(i,j)\in\Omega}|u_iu_j-u_i^*u_j^*|}_{f(\bu)}
		\end{equation}}
	%
	For the asymmetric problem~\eqref{anpca}, the solution is invariant to scaling. In other words, if $(\bu, \bv)$ is a solution to~\eqref{anpca}, then $(\frac{1}{q}\bu, q\bv)$ is also a valid solution with the same objective value, for every scalar $q>0$. To circumvent the issue of invariance to scaling, it is common to balance the norms of $\bu$ and $\bv$ by penalizing their difference. Therefore, similar to the works by~\cite{ge2017no, zheng2016convergence, yi2016fast}, we consider the following regularized variant of~\eqref{anpca}: 
	{\begin{equation}\label{p2-asym}
		\min_{\bu\geq 0, \bv\geq 0}\quad \underbrace{\|\mathcal{P}_{\Omega}(X-\bu\bv^\top)\|_1 + \alpha|\bu^\top\bu - \bv^\top\bv|}_{f_{\mathrm{asym}}(\bu, \bv)}
		\end{equation}}
	for an arbitrary constant $\alpha>0$ (note that the positivity of $\alpha$ is the only condition required in this work). To deal with the asymmetric case, we first convert it to a symmetric problem after a simple concatenation of variables.
	Define $\bw = [\bu^\top\ \ \bv^\top]^\top$, $\bw^* = [{\bu ^*}^\top\ \ {\bv^*}^\top]^\top$, and $\bar\Omega = \{(i,j)| (i,j-m)\in\Omega\}$. 
	Based on these definitions, one can symmetrize~\eqref{p2-asym} as follows:
	{\begin{equation}\tag{P1-Asym}\label{p2_asym_sym}
		\min_{\bw\geq 0}\quad  \underbrace{\sum_{(i,j)\in\bar\Omega}|w_iw_j-w^*_iw^*_j|+\alpha\left|\sum_{i=1}^{m}w_i^2-\sum_{j = m+1}^{m+n}w_j^2\right|}_{f_{\mathrm{sym}}(\bw)}
		\end{equation}}
	To simplify the notation, we drop the subscript from $f_{\mathrm{sym}}(\bw)$ whenever there is no ambiguity in the context. 
	\subsection{Deterministic Guarantees}
	{\bf Symmetric case:} First, we introduce deterministic conditions to guarantee a benign landscape for~\eqref{p2-sym}.

	\begin{theorem}\label{thm1}
		Suppose that $\bu^*>0$ and $\mathcal{G}(\Omega)$ has no bipartite component. Then, the following statements hold for \eqref{p2-sym}:
		\begin{itemize}
			\item[1.] It does not have any spurious local minimum;
			\item[2.] The point $\bu = \bu^*$ is the unique global minimum;
			\item[3.] In the positive orthant, the point $\bu = \bu^*$ is the only D-stationary point.
		\end{itemize}
		Additionally, if $\mathcal{G}(\Omega)$ is connected, the following statements hold for \eqref{p2-sym}:
		\begin{itemize}
			\item[4.] The points $\bu = \bu^*$ and $\bu = 0$ are the only D-min-stationary points;
			\item[5.] The point $\bu = 0$ is a local maximum.
		\end{itemize}
	\end{theorem}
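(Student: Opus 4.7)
The plan is to exploit that $f(\bu^*)=0$ while $f\ge 0$ everywhere, and to manufacture an explicit descent direction at every other feasible point. For global optimality and uniqueness, $f(\bar\bu)=0$ implies $\bar u_i\bar u_j=u_i^*u_j^*$ on every edge of $\mathcal{G}(\Omega)$; since $\bu^*>0$ forces $\bar u_i>0$ at every vertex incident to an edge, the ratios $r_i:=\bar u_i/u_i^*$ are well defined and satisfy $\log r_i+\log r_j=0$ on every edge. The no-bipartite-component hypothesis guarantees an odd cycle in every component, and summing this relation around such a cycle collapses to $\log r_i=0$; connectivity inside the component propagates $r\equiv 1$. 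This yields parts 1 and 2 at the level of the zero set of $f$.

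The core of the argument is part 3. At an arbitrary interior point $\bar\bu>0$ with $\bar\bu\ne\bu^*$, I would test the direction $d_i:=-\log(r_i)\,\bar u_i$. A direct calculation gives $d_i\bar u_j+\bar u_i d_j=-\log(r_i r_j)\,\bar u_i\bar u_j$, whose sign is opposite to that of $\bar u_i\bar u_j-u_i^*u_j^*$. Hence every non-tight edge contributes $-|\log(r_ir_j)|\,\bar u_i\bar u_j$ to $f'(\bar\bu,\bd)$ while every tight edge contributes $0$, producing $f'(\bar\bu,\bd)\le 0$ with strict inequality whenever some edge has $r_ir_j\ne 1$. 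The same odd-cycle argument used for uniqueness guarantees such an edge exists whenever $\bar\bu\ne\bu^*$, ruling out D-min-stationarity. Running the identical computation on $-\bd$ yields $f'(\bar\bu,-\bd)>0$, ruling out D-max-stationarity; hence $\bu^*$ is the only D-stationary point in the positive orthant (part 3), and since any interior local minimum must be D-min-stationary, this also disposes of the interior portion of part 1.

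For boundary points $\bar\bu\ne\bu^*$ with $Z:=\{i:\bar u_i=0\}\ne\emptyset$, I would split on the graph structure. If some $i\in Z$ has a graph-neighbor $j$ with $\bar u_j>0$, then $\bd=\mathbf{e}_i$ is feasible and, because every edge $(i,k)\in\Omega$ satisfies $\bar u_i\bar u_k=0<u_i^*u_k^*$, one obtains $f'(\bar\bu,\mathbf{e}_i)=-\sum_{k:(i,k)\in\Omega}\bar u_k<0$. Otherwise $Z$ is a union of whole components of $\mathcal{G}(\Omega)$; pick any component $C\subseteq Z$ (which contains at least one edge, since the no-bipartite-component hypothesis excludes isolated vertices) and use the feasible direction $d_k=u_k^*$ for $k\in C$ and $d_k=0$ otherwise, which gives $f(\bar\bu+t\bd)-f(\bar\bu)=-t^2\sum_{(k,k')\in\Omega,\,k,k'\in C}u_k^*u_{k'}^*<0$ for every $t\in(0,1)$. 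These two sub-cases together with the interior analysis finish part 1. For parts 4 and 5, connectivity of $\mathcal{G}(\Omega)$ rules out the second sub-case unless $Z=V$, i.e.\ $\bar\bu=0$; at the origin, a direct expansion along any feasible $\bd\ge 0$ yields $f(t\bd)-f(0)=-t^2\sum_{(i,j)\in\Omega}d_id_j\le 0$ for all small $t>0$, simultaneously showing that $\bu=0$ is a local maximum (part 5) and that $f'(0,\bd)=0$, so $\bu=0$ is D-min-stationary (completing part 4).

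The step I expect to be the main obstacle is the combinatorial inference linking the no-bipartite-component hypothesis to the existence of a non-tight edge: the argument must treat self-loops as length-one odd cycles and propagate $r\equiv 1$ from a single odd cycle to the rest of a component using only the local edge constraints $\log r_i+\log r_j=0$, since global connectivity of $\mathcal{G}(\Omega)$ is not available in parts 1--3 and no differentiable structure is available to invoke a generic descent lemma.
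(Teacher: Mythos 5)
Your argument is correct, and it takes a genuinely different route from the paper's, centred on a different descent direction. The paper sorts the ratios $u_i^*/u_i$, isolates the sets $T_1, T_2$ of coordinates attaining the extreme ratio, and tests the \emph{discrete} direction $d_i = \pm u_i/u_n$ supported only on $T_1\cup T_2$; it then runs a four-case analysis over edge types and concludes that the first-order change can vanish only if $(T_1,T_2)$ is a bipartition of $\mathcal{G}(\Omega)$. You instead test the single \emph{global} direction $d_i=-\log(r_i)\,\bar u_i$ with $r_i=\bar u_i/u_i^*$. The identity $d_i\bar u_j+\bar u_i d_j=-\log(r_ir_j)\,\bar u_i\bar u_j$ makes every edge's first-order contribution $-|\log(r_ir_j)|\bar u_i\bar u_j\le 0$ (including self-loops, where $r_ir_j=r_i^2$), so the whole case analysis collapses to ``tight vs.\ non-tight''; the bipartite obstruction then appears through the sign-alternation of $\log r$ forced by $\log r_i+\log r_j=0$ on edges, which an odd cycle kills. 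Both proofs hinge on the same combinatorial fact, but your construction trades the paper's extremal-set bookkeeping for a logarithmic change of variables, which is arguably cleaner; the paper's route avoids logarithms and is in a style that transfers more directly to the regularized/noisy versions handled later (where similar $T_1,T_2,N$ decompositions are reused). Your boundary case split reproduces the content of the paper's Lemma~\ref{l1}, and your direct expansion at the origin recovers parts 4 and 5.

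One small wording nit: ``summing'' $\log r_i+\log r_j=0$ around an odd cycle literally gives $2\sum_{v\in\text{cycle}}\log r_v=0$, which alone does not force each $\log r_v=0$. What actually does the job is the \emph{alternation} $\log r_{v_{k+1}}=-\log r_{v_k}$ followed around the odd cycle, which yields $\log r_{v_1}=-\log r_{v_1}$ and hence $\log r_{v_1}=0$; equivalently, $i\mapsto\operatorname{sign}(\log r_i)$ would be a proper 2-colouring of the component if some $\log r_i\ne 0$, contradicting non-bipartiteness. The idea is clearly what you intend, but the phrase ``summing $\ldots$ collapses to $\log r_i=0$'' should be replaced by the alternation argument to be airtight.
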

	
	The above theorem has a number of important implications for~\eqref{p2-sym}: 1) it has no spurious local solution, 2) $\bu = \bu ^*$ is its unique global solution, and 3) every feasible point $\bu>0$ such that $\bu \not= \bu ^*$ has at least a strictly negative directional derivative. Additionally, if $\mG(\Omega)$ is connected, the feasible points of~\eqref{p2-sym} with zero entries either have a strictly negative directional derivative or correspond to the origin that is a local maximum with a strictly negative curvature. Therefore, these points are not local/global minima and can be easily avoided using local search algorithms.
	
	To prove Theorem~\ref{thm1}, we first need the following important lemma.
	
	\begin{lemma}\label{l1}
		Suppose that $\mG(\Omega)$ has no bipartite component and $\bu ^*>0$. Then, for every D-min-stationary point $\bu$ of~\eqref{p2-sym}, we have $\bu[c] >0$ or $\bu[c] = 0$, where $\bu[c]$ is a sub-vector of $\bu$ induced by the $c^{\text{th}}$ component of $\mG(\Omega)$.
	\end{lemma}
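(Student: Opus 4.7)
The plan is to argue by contradiction: fix a D-min-stationary $\bu$ and a connected component $c$ of $\mG(\Omega)$, partition its vertex set into $Z = \{i \in c : u_i = 0\}$ and $P = \{i \in c : u_i > 0\}$, and suppose for contradiction that both $Z$ and $P$ are nonempty. Since $c$ is connected in $\mG(\Omega)$, there must exist at least one edge $(i_0, j_0) \in \Omega$ joining $Z$ to $P$. The goal is then to exhibit a feasible direction $\bd$ with $f'(\bu, \bd) < 0$, contradicting the D-min-stationarity hypothesis.

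Take $\bd$ defined by $d_k = 1$ for $k \in Z$ and $d_k = 0$ otherwise; this is feasible because $d_k \geq 0$ at every coordinate where $u_k = 0$. I then compute $f'(\bu, \bd)$ by splitting the sum over $\Omega$ according to the endpoint types. Edges disjoint from $c$ are unaffected by $\bd$ and contribute $0$. Edges with both endpoints in $P$ satisfy $d_i = d_j = 0$ and contribute $0$. Edges with both endpoints in $Z$ (including any self-loops) satisfy $u_i = u_j = 0$ while $u_i^* u_j^* > 0$ by $\bu^* > 0$, so for small $t > 0$ the perturbed term equals $|t^2 d_i d_j - u_i^* u_j^*| = u_i^* u_j^* - t^2 d_i d_j$, contributing $0$ to the first-order derivative. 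The decisive contributions come from cross-edges $(i, j)$ with $i \in Z$ and $j \in P$: here $u_i u_j = 0 < u_i^* u_j^*$, so for small $t > 0$ the perturbed term $|(t)(u_j) - u_i^* u_j^*|$ equals $u_i^* u_j^* - t u_j$, yielding a contribution of $-u_j < 0$. Summing over all edges gives $f'(\bu, \bd) \leq -u_{j_0} < 0$, contradicting $\bu$ being D-min-stationary and forcing one of $Z$ or $P$ to be empty.

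The main obstacle is careful bookkeeping around the kink of $|\cdot|$. Choosing $\bd$ to be supported only on $Z$ sidesteps kink points on $P$-internal edges, where $u_i u_j - u_i^* u_j^*$ could vanish and produce a non-differentiable contribution of the form $|u_i d_j + u_j d_i|$ that is not necessarily negative. The hypothesis $\bu^* > 0$ is used crucially on cross-edges: it guarantees $u_i^* u_j^* > 0$ so that the argument of the absolute value remains strictly negative under small perturbations, giving a clean linear descent of $-u_j$ per cross-edge. Notably, the non-bipartiteness assumption on $\mG(\Omega)$ is not required for this lemma; it enters only in the later parts of Theorem~\ref{thm1}, where it is used to secure uniqueness of the global minimum and to rule out alternative D-stationary configurations.
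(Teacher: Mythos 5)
Your proof is correct and rests on the same core observation as the paper's own proof in Appendix~\ref{app_l1}: a zero coordinate that has a positive neighbor in $\mG(\Omega)$ admits a strictly negative feasible directional derivative, since $\bu^*>0$ forces the residual $u_iu_j-u^*_iu^*_j$ on that edge to stay strictly negative under a small upward perturbation, yielding a clean $-u_j$ per cross-edge. The difference between the two proofs is one of packaging rather than of method. The paper starts from a single zero node, shows that all of its neighbors must also be zero (else the positive perturbation of that one node already gives a negative directional derivative), and then propagates this argument outward through the component, which is left somewhat informal at the ``invoke the same argument for the neighbors'' step. You instead partition the component into $Z$ and $P$, take the single direction $\bd=\mathbf{1}_Z$, and observe that all first-order contributions vanish except those from $Z$--$P$ cross-edges, of which connectedness supplies at least one; this collapses the iterative propagation into a single directional-derivative computation and also makes explicit why the kink of $|\cdot|$ on $P$-internal edges never enters. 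Your side remark is also correct: non-bipartiteness is not used in this lemma. The paper cites it only to claim that $\Omega_k$ is nonempty, but that already follows from connectedness (and the single-isolated-vertex case is vacuous), so non-bipartiteness genuinely first matters later in Theorem~\ref{thm1}, where it rules out the scaling ambiguity.
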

	
	\begin{proof}
		See Appendix~\ref{app_l1}.
	\end{proof}
	
	Now, we are ready to present the proof of Theorem~\ref{thm1}.
	
	\vspace{2mm}
	\noindent{\bf Proof of Theorem~\ref{thm1}:}
	We prove the first three statements. Note that Statement 5 can be easily verified and Statement 4 is implied by Lemma~\ref{l1} and Statement 3.
	
	Suppose that $\bu\not=\bu^*$ is a local minimum. Note that if $u_i = 0$ for some $i$, Lemma~\ref{l1} implies that $\bu[c] = 0$ for the $c^{\text{th}}$ component that includes node $i$. However, a strictly positive perturbation of $\bu[c]$ decreases the objective function and, therefore, $\bu$ cannot be a local minimum. Hence, it is enough to consider the case $\bu>0$. We show that $\bu$ cannot be D-stationary. This immediately certifies the validity of the first three statements. First, we prove that 
	\begin{equation}\label{eq5}
	\min_{k\in\Omega_i}\frac{u^*_k}{u_k}\leq \frac{u_i}{u^*_i}\leq\max_{k\in\Omega_i}\frac{u^*_k}{u_k} 
	\end{equation}
	for every $i\in\{1,\cdots, n\}$, where $\Omega_i = \{j|(i,j)\in\Omega\}$. By contradiction and without loss of generality, suppose that ${u_i}/{u^*_i}> \max_{k\in\Omega_i}{u^*_k}/{u_k}$ for some $i$. This implies that $u_iu_j>u^*_iu^*_j$ for every $j\in\Omega_i$. Therefore, a negative or positive perturbation of $u_i$ results in respective negative or positive directional derivatives, contradicting the D-stationarity of $\bu$. With no loss of generality, assume that the sparsity graph $\mathcal{G}(\Omega)$ is connected (since the arguments made in the sequel can be readily applied to every disjoint component of $\mathcal{G}(\Omega)$) and that the following ordering holds:
	\begin{equation}\label{eq17}
	0<\frac{u^*_1}{u_1}\leq \frac{u^*_2}{u_2}\leq \cdots\leq \frac{u^*_n}{u_n}
	\end{equation}
	Therefore, due to~\eqref{eq5}, we have
	\begin{equation}\label{eq7}
	0<\frac{u^*_1}{u_1}\leq \min_{k\in\Omega_i}\frac{u^*_k}{u_k}\leq \frac{u_i}{u^*_i}\leq\max_{k\in\Omega_i}\frac{u^*_k}{u_k}\leq \frac{u^*_n}{u_n}
	\end{equation}
	for every $i\in\{1,\cdots, n\}$.
	
	Since $\bu \not=\bu^*$, there exists some index $t$ such that $u_t\not=u^*_t$. This implies that $u^*_n/u_n>1$; otherwise, we should have $u^*_n/u_n\leq 1$. This together with~\eqref{eq17}, implies that $u^*_t/u_t<1$ and $u_t/u^*_t>1$, which contradicts~\eqref{eq7}.
	Now, define the sets
	\begin{align}
	& T_1 = \left\{i|\frac{u^*_i}{u_i} = \frac{u^*_n}{u_n}, 1\leq i\leq n\right\}\label{eq8}\\
	& T_2 = \left\{j|\frac{u_j}{u^*_j} = \frac{u^*_n}{u_n}, 1\leq j\leq n\right\}\label{eq9}
	\end{align}
	Moreover, define the set $N = V\backslash(T_1\cup T_2)$ and let $\bd$ be
	\begin{equation}\label{eqd}
	{d}_i= \left\{
	\begin{array}{ll}
	\frac{u_i}{u_n} &\text{if}\ i\in T_1\\
	-\frac{u_i}{u_n} &\text{if}\ i\in T_2\\
	0 &\text{if}\ i\in N\\
	\end{array} 
	\right.
	\end{equation} 
	Define a perturbation of $\bu$ as $\hat{\bu} = \bu+\bd\epsilon$ where $\epsilon>0$ is chosen to be sufficiently small.
	Next, the effect of the above perturbation on different terms of~\eqref{p2-sym} will be analyzed. To this goal, we divide $\Omega$ into four sets
	\begin{itemize}
		\item[1.] $(i,j)\in\Omega$ and $i,j\in T_1$: In this case, since $u_i<u^*_i$ and $u_j<u^*_j$, one can write
		\begin{align}
		|\hat{u}_i\hat{u}_j-u^*_iu^*_j| = u^*_iu^*_j - \hat{u}_i\hat{u}_j &= u^*_iu^*_j - \left(u_i\!+\!\frac{u_i}{u_n}\epsilon\right)\left(u_j\!+\!\frac{u_j}{u_n}\epsilon\right) \nonumber\\
		&= |u_iu_j-u^*_iu^*_j| -\left(\frac{2u_iu_j}{u_n}\right)\epsilon - \left(\frac{u_iu_j}{u_n^2}\right)\epsilon^2
		\end{align}
		where we have used the assumption $\bu^*, \bu>0$. 
		\item[2.] $(i,j)\in\Omega$ and $i,j\in T_2$: In this case, since $u_i>u^*_i$ and $u_j>u^*_j$, one can write
		\begin{align}
		|\hat{u}_i\hat{u}_j-u^*_iu^*_j| = \hat{u}_i\hat{u}_j-u^*_iu^*_j &= \left(u_i\!-\!\frac{u_i}{u_n}\epsilon\right)\left(u_j\!-\!\frac{u_j}{u_n}\epsilon\right) \!- u^*_iu^*_j\nonumber\\
		&= |u_iu_j-u^*_iu^*_j| -\left(\frac{2u_iu_j}{u_n}\right)\!\epsilon + \left(\frac{u_iu_j}{u_n^2}\right)\!\epsilon^2
		\end{align}
		where we have used the assumption $\bu^*, \bu>0$. 
		\item[3.] $(i,j)\in\Omega$, $i\in N$, and $j\in T_1\cup T_2$: According to the definitions of $T_1$ and $T_2$, we have
		\begin{equation}
		\frac{u_i}{u^*_i}<\frac{u^*_n}{u_n},\qquad \frac{u^*_i}{u_i}<\frac{u^*_n}{u_n}
		\end{equation}
		Now, if $j\in T_1$, one can write 
		\begin{equation}
		\frac{u_i}{u^*_i}<\frac{u^*_j}{u_j}\implies u_iu_j<u^*_iu^*_j
		\end{equation} 
		which implies that
		\begin{equation}
		|\hat{u}_i\hat{u}_j-u^*_iu^*_j| = u^*_iu^*_j - \hat{u}_i\hat{u}_j = u^*_iu^*_j - u_i\left(u_j+\frac{u_j}{u_n}\epsilon\right)= |{u}_i{u}_j-u^*_iu^*_j|-\left(\frac{u_iu_j}{u_n}\right)\!\epsilon
		\end{equation}
		Similarly, if $j\in T_2$, one can verify that
		\begin{equation}
		|\hat{u}_i\hat{u}_j-u^*_iu^*_j|= |{u}_i{u}_j-u^*_iu^*_j|-\left(\frac{u_iu_j}{u_n}\right)\!\epsilon
		\end{equation}
		\item[4.] $(i,j)\in\Omega$, $i\in T_1$, and $j\in T_2$: In this case, note that
		\begin{equation}
		|\hat{u}_i\hat{u}_j-u^*_iu^*_j| = \left|\left(u_i+\frac{u_i}{u_n}\epsilon\right)\left(u_j-\frac{u_j}{u_n}\epsilon\right)-u^*_iu^*_j\right|\leq |{u}_i{u}_j-u^*_iu^*_j|+\left(\frac{u_iu_j}{u_n^2}\right)\epsilon^2
		\end{equation}
	\end{itemize}
	
	The above analysis entails that---unless $N$ and the subgraphs of $\mathcal{G}(\Omega)$ induced by the nodes in $T_1$ or $T_2$ are empty---$f'(\bu,\bd)>0$ and $f'(\bu,-\bd)<0$, implying that $\bu$ cannot be D-stationary. On the other hand, these conditions enforce $\mathcal{G}(\Omega)$ to be bipartite, which is a contradiction. This completes the proof.~\hfill$\blacksquare$
	
	
	Next, we show that $\bu^*>0$ is \textit{almost} necessary to guarantee the absence of spurious local minima for~\eqref{p2-sym}. 
	
	\begin{proposition}\label{prop1}
		{Assume that $\bu^*\geq 0$ and that $\bu^*\not=0$ with $u_i^* = 0$ for some $i$. Then, upon choosing $\Omega = \{1,\dots,n\}^2\backslash \{(i,i)\}$,~\eqref{p2-sym} has a spurious local minimum.}
	\end{proposition}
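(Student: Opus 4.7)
The plan is to exhibit an explicit spurious local minimum of the form $\bar{\bu} = c\,\mathbf{e}_i$ for any constant $c > 0$, where $\mathbf{e}_i$ is the $i$-th standard basis vector. This choice is natural: since the single missing measurement is precisely $(i,i)$, the entry $\bar{u}_i$ does not appear in any squared-diagonal constraint, while setting all other entries to zero forces every cross-term $\bar{u}_i\bar{u}_j$ ($j\neq i$) to agree with $u_i^*u_j^* = 0$.

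First, I would evaluate the objective at $\bar{\bu}$ directly from the sum $f(\bar{\bu}) = \sum_{(j,k)\in\Omega}|\bar{u}_j\bar{u}_k - u_j^*u_k^*|$. Pairs $(j,k)$ with exactly one of $j$ or $k$ equal to $i$ contribute zero (one factor is zero on both sides), and the pair $(i,i)$ is absent from $\Omega$ by assumption. The remaining pairs have $j,k\neq i$ and contribute $u_j^*u_k^*$, yielding $f(\bar{\bu}) = \bigl(\sum_{j\neq i} u_j^*\bigr)^2$. Since $\bu^*\neq 0$ and $u_i^* = 0$, this value is strictly positive, whereas $f(\bu^*) = 0$, so once $\bar{\bu}$ is shown to be a local minimum it is automatically spurious.

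Next, I would verify local minimality by a direct perturbation computation. Let $\delta\in\mathbb{R}^n$ be a small feasible perturbation, i.e., $\delta_j\geq 0$ for $j\neq i$ (since $\bar{u}_j = 0$). Partition $\{1,\dots,n\}\setminus\{i\}$ into $S = \{j\neq i : u_j^* > 0\}$ and $Z = \{j\neq i : u_j^* = 0\}$, and set $a = \sum_{j\in S}\delta_j\geq 0$, $b = \sum_{j\in Z}\delta_j\geq 0$. For sufficiently small $\|\delta\|$, each absolute value $|(\bar{u}_j+\delta_j)(\bar{u}_k+\delta_k) - u_j^*u_k^*|$ has a determinate sign: equal to $u_j^*u_k^* - \delta_j\delta_k$ when $j,k\in S$, and equal to $\delta_j\delta_k$ otherwise. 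Summing these contributions (together with the $(i,j)$ and $(j,i)$ terms, each of which equals $(c+\delta_i)\delta_j$) leads to the closed form
\[
f(\bar{\bu}+\delta) - f(\bar{\bu}) = -a^2 + 2ab + b^2 + 2(c+\delta_i)(a+b).
\]
Restricting to $|\delta_i|\leq c/2$ and $a\leq c/2$, the linear piece $2(c+\delta_i)(a+b)\geq c(a+b)$ dominates the only possibly negative quadratic piece $-a^2\geq -a(a+b)\geq -(c/2)(a+b)$, so the right-hand side is non-negative. Hence $\bar{\bu}$ is a local minimum (with a flat direction along the ray $\mathbb{R}_+\mathbf{e}_i$), and by the previous paragraph it is spurious.

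The only real obstacle is bookkeeping: one must carefully track the sign of each absolute-value term in the four sub-cases ($j,k\in S$; $j\in S,k\in Z$; $j\in Z,k\in S$; $j,k\in Z$) and confirm that the cross-terms from $(i,j),(j,i)\in\Omega$ contribute a positive linear quantity in $(a+b)$. Once the collapsed identity above is in hand, dominance of the linear term over the quadratic terms for small $\delta$ is immediate, and no further machinery is required.
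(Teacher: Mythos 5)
Your proof is correct and follows essentially the same route as the paper: both construct the explicit candidate $\bar{\bu} = c\,\mathbf{e}_i$ (the paper writes $u_1 = \beta$, $u_j = 0$ for $j \geq 2$ after relabeling) and verify local minimality by expanding the objective along a small feasible perturbation, showing the linear term $c(a+b)$ dominates the quadratic corrections. One small point in your favor: the paper's proof opens with ``without loss of generality, assume $i = 1$ and $u_j^* > 0$ for every $j \geq 2$,'' which is not actually without loss of generality since the proposition permits $\bu^*$ to have several zero entries; your partition of $\{1,\dots,n\}\setminus\{i\}$ into $S$ and $Z$ handles this cleanly, whereas the paper's sign determination $|\epsilon_j^2 - (u_j^*)^2| = (u_j^*)^2 - \epsilon_j^2$ would fail for $j$ with $u_j^* = 0$.
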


	\begin{proof}
		See Appendix~\ref{app_prop1}.
	\end{proof}
	
	{The above corollary shows that if $\bu^*$ is non-negative with at least one zero element, even in the almost perfect scenario where the set $\Omega$ includes all of the measurements except for one, it may not be free of spurious local minima.}
	The next corollary shows that the assumption on the absence of bipartite components in $\mathcal{G}(\Omega)$ is also necessary for the uniqueness of the global solution. 
	
	\begin{proposition}\label{prop2}
		Given any vector $\bu^*>0$ and set $\Omega$, suppose that $\mG(\Omega)$ has a bipartite component. Then, the global solution of~\eqref{p2-sym} is not unique.
	\end{proposition}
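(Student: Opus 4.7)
The plan is to construct a second global minimum of~\eqref{p2-sym} explicitly, exploiting the scaling invariance of the product $u_i u_j$ across a bipartite edge. First, fix a bipartite component $c$ of $\mG(\Omega)$ and let $(A_c, B_c)$ be a vertex bipartition of $c$, which exists by bipartiteness; note that $c$ contains no self-loops, since a self-loop would be an odd cycle of length one. Because $c$ is a connected component of $\mG(\Omega)$, there is no edge of $\mG(\Omega)$ straddling $c$ and its complement.

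For a fixed scalar $q > 0$ with $q \neq 1$, define $\tilde{\bu} \in \mathbb{R}^n_+$ by $\tilde{u}_i = q u^*_i$ for $i \in A_c$, $\tilde{u}_i = q^{-1} u^*_i$ for $i \in B_c$, and $\tilde{u}_i = u^*_i$ for every remaining index. The assumption $\bu^* > 0$ ensures $\tilde{\bu}$ is strictly positive and hence feasible, and the choice $q \neq 1$ guarantees $\tilde{\bu} \neq \bu^*$ (since $u^*_i > 0$ for every $i \in A_c \cup B_c$).

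The key step is to verify $f(\tilde{\bu}) = 0$ by splitting the edges $(i,j) \in \Omega$ into three groups. Edges internal to $c$ contribute $\tilde{u}_i \tilde{u}_j = q u^*_i \cdot q^{-1} u^*_j = u^*_i u^*_j$ since one endpoint lies in $A_c$ and the other in $B_c$; edges lying entirely outside $c$ are unchanged by construction; and no edges cross between $c$ and its complement. Every summand of $f$ therefore vanishes, so $f(\tilde{\bu}) = 0 = f(\bu^*)$, and uniqueness of the global minimum fails.

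The only subtlety is the degenerate case in which the bipartite component is an isolated vertex $i$ with no incident edges: then $u_i$ does not appear in any term of $f$, so any positive perturbation of $u^*_i$ already yields a distinct global minimum. No real obstacle arises otherwise; the argument reduces to the three-case edge check above and is substantially simpler than the analysis in Theorem~\ref{thm1}, since we need only exhibit one alternative minimizer rather than rule out all D-stationary points.
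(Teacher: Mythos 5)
Your proof is correct and takes essentially the same approach as the paper: the paper perturbs by $\hat u_i = u_i(1+\epsilon/u_n)$ on one part and $\hat u_i = u_i(1-\epsilon/(u_n+\epsilon))$ on the other, which is exactly scaling by $q = (u_n+\epsilon)/u_n$ and $q^{-1}$; your parametrization simply makes the reciprocal scaling explicit. The added care about self-loops and the isolated-vertex degenerate case is sound but not a different idea.
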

	
	\begin{proof}
		Without loss of generality, suppose that $\mathcal{G}(\Omega)$ is a connected bipartite graph. For any vector $\bu^*>0$, the solution $\bu = \bu^*$ is globally optimal for~\eqref{p2-sym}. Suppose that the bipartite graph $\mathcal{G}(\Omega)$ partitions the entries of $\bu$ into two sets $V_1$ and $V_2$ such that $u_n\in V_1$. Based on some simple algebra, one can easily verify that, for a sufficiently small $\epsilon>0$, the solution
		\begin{equation}
		\hat{u}_i\leftarrow \left\{
		\begin{array}{ll}
		u_i+\frac{u_i}{u_n}\epsilon &\text{if}\ i\in V_1\\
		u_i-\frac{u_i}{u_n+\epsilon}\epsilon &\text{if}\ i\in V_2\\
		\end{array} 
		\right.
		\end{equation}
		is also globally optimal for~\eqref{p2-sym}.
	\end{proof}
	
	{\begin{remark}
			Suppose that $\mathbf{u}^*$ is a globally optimal solution of~\eqref{p2-sym} and that $\mathcal{G}(G)$ includes a bipartite component. Then, according to Proposition~\ref{prop2}, the part of $\mathbf{u}^*$ whose elements correspond to the nodes in this bipartite component can be \textit{perturbed} to attain another globally optimal solution, thereby resulting in the \textbf{non-uniqueness of the global solution}.
			On the other hand, the connectedness assumption is required to eliminate the undesirable stationary points on the \textit{boundary} of the feasible region. Roughly speaking, the elements of the vector variable $\mathbf{u}$ corresponding to different disconnected components can behave independently from each other, giving rise to spurious D-stationary points in the problem. To elaborate, recall that $\mathbf{u}[c]$ is a sub-vector of $\mathbf{u}$ induced by the $c^{\text{th}}$ component of $\mathcal{G}(G)$. Based on Lemma~\ref{l1}, the D-stationary points restricted to each disjoint component of $\mathcal{G}(G)$ are either strictly positive or equal to zero. Therefore, upon having two disconnected components $c_1$ and $c_2$, the points $\mathbf{u}' = \begin{bmatrix}
			{\mathbf{u}^*[c_1]}^\top & 0
			\end{bmatrix}^\top$ and $\mathbf{u}'' = \begin{bmatrix}
			0 & {\mathbf{u}^*[c_2]}^\top
			\end{bmatrix}^\top$ are indeed D-stationary points of~\eqref{snpca}, thereby resulting in \textbf{spurious stationary points}.
	\end{remark}}
	
	\noindent{\bf Asymmetric case:} Next, we consider~\eqref{p2-asym} in the noiseless scenario by analyzing its symmetrized counterpart~\eqref{p2_asym_sym}. Based on the construction of $\bar{\Omega}$, the corresponding sparsity graph $\mathcal{G}(\bar{\Omega})$ is bipartite. On the other hand, according to Proposition~\ref{prop2}, the existence of a bipartite component in $\mathcal{G}(\bar\Omega)$ makes a part of the solution~\textit{invariant to scaling}, which subsequently results in the non-uniqueness of the global minimum. The additional regularization term in~\eqref{p2_asym_sym} is introduced to circumvent this issue by penalizing the difference in the norms of $\bu$ and $\bv$.
	
	\begin{theorem}\label{thm1_asym}
		Suppose that $\bw^*>0$ and $\mathcal{G}(\bar\Omega)$ is connected. Then, the following statements hold for \eqref{p2_asym_sym}:
		\begin{itemize}
			\item[1.] The points $\bw = 0$ and $\bw$ with the properties $\bw\bw^\top = \bw^*{\bw^*}^\top$ and $\sum_{i=1}^{m}w^{2}_i=\sum_{j = m+1}^{m+n}w^{2}_j$ are the only D-min-stationary points;
			\item[2.] The point $\bw = 0$ is a local maximum;
			\item[3.] In the positive orthant, the point $\bw$ with the properties $\bw\bw^\top = \bw^*{\bw^*}^\top$ and $\sum_{i=1}^{m}w^{2}_i=\sum_{j = m+1}^{m+n}w^{2}_j$ is the only D-stationary point.
		\end{itemize}
	\end{theorem}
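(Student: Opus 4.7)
The plan is to adapt the proof of Theorem~\ref{thm1} to~\eqref{p2_asym_sym}, with the key new feature that $\mG(\bar\Omega)$ is bipartite with parts $V_u = \{1,\dots,m\}$ and $V_v = \{m+1,\dots,m+n\}$. This bipartiteness is exactly the obstruction that forced non-uniqueness in Proposition~\ref{prop2}, but the regularization term $\alpha|\Phi(\bw)|$ with $\Phi(\bw) := \sum_{i\in V_u}w_i^2 - \sum_{j\in V_v}w_j^2$ breaks the scaling symmetry $(\bu^*/q,q\bv^*)$ that the first sum alone would leave invariant. I would organize the proof in three stages: an analog of Lemma~\ref{l1} to handle boundary points, a modified perturbation argument in the positive orthant, and a direct analysis at the origin.

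Stage one is a boundary dichotomy: for any D-min-stationary $\bw$ of~\eqref{p2_asym_sym}, the restriction of $\bw$ to each connected component of $\mG(\bar\Omega)$ is either strictly positive or identically zero. The argument mirrors that of Lemma~\ref{l1}: a mixed configuration on a component admits a small strictly-positive perturbation of the zero entries that decreases the first sum at first order (since $\bw^*>0$ and there is an edge incident to the zero block), while the regularization term contributes only $O(\epsilon^2)$. Since $\mG(\bar\Omega)$ is connected by assumption, this forces $\bw = 0$ or $\bw > 0$.

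Stage two is the positive-orthant analysis. Following Theorem~\ref{thm1}, I would derive~\eqref{eq5}--\eqref{eq7}, order the ratios as in~\eqref{eq17}, and define $T_1, T_2, N$ and the perturbation $\bd$ as in~\eqref{eq8}--\eqref{eqd} over the extended index set $\{1,\dots,m+n\}$. The Cases 1--4 analysis transfers verbatim, but bipartiteness of $\mG(\bar\Omega)$ forces Cases 1--3 to be empty unless $T_1$ or $T_2$ intersects both $V_u$ and $V_v$, or $N \neq \emptyset$; in any such subcase, the first sum alone already produces opposite-sign directional derivatives along $\pm\bd$, violating D-stationarity. In the surviving subcase, $T_1$ and $T_2$ each lie entirely inside one of $V_u, V_v$ with $N = \emptyset$, and the first sum is flat to first order. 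Here the regularization's directional derivative, of the form $2\alpha\,\sign(\Phi(\bw))\cdot(\sum_{i\in V_u}w_id_i - \sum_{j\in V_v}w_jd_j)$ when $\Phi(\bw) \neq 0$, contributes a nonzero first-order term and again breaks stationarity. The only way a D-stationary $\bw > 0$ can persist is for (i) every edge term $|w_iw_j - w_i^*w_j^*|$ to vanish on $\bar\Omega$ and (ii) $\Phi(\bw) = 0$. By connectedness of $\mG(\bar\Omega)$, (i) pins $\bw$ to the one-parameter family $(\bu^*/q, q\bv^*)$ for some $q>0$, and (ii) uniquely selects $q$ with $q^4 = \|\bu^*\|^2/\|\bv^*\|^2$. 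A direct check using the scaling invariance of the first sum then confirms that this balanced point is indeed D-stationary, establishing Statement~3.

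At the origin, the directional derivative along any feasible $\bd \geq 0$ equals $0$ because both $|w_iw_j - w_i^*w_j^*|$ and $|\Phi(\bw)|$ are quadratic in $t$ along the ray $t\bd$ at $t=0$, so $\bw=0$ is D-min-stationary, which together with Stage one and Stage two completes Statement~1. Statement~2 then follows from a second-order expansion of $f_{\mathrm{sym}}(\epsilon\bd) - f_{\mathrm{sym}}(0)$, paralleling the argument behind Statement~5 of Theorem~\ref{thm1}. The main obstacle is Stage~two: after the first-order contributions from the first sum vanish along the bipartite "scaling" direction $\bd$, one must carefully couple the sign of $\Phi(\bw)$ to a delicate cancellation among the weights $w_i^2$ on $T_1\cap V_u$, $T_1\cap V_v$, $T_2\cap V_u$, $T_2\cap V_v$, and then invoke the connectedness of $\mG(\bar\Omega)$ to propagate local ratio equalities into the global conclusions $\bw\bw^\top = \bw^*{\bw^*}^\top$ and $\Phi(\bw)=0$.
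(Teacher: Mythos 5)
Your high-level skeleton matches the paper's: a boundary dichotomy mirroring Lemma~\ref{l1} (the paper's Lemma~\ref{l2_asym}), a perturbation argument in the positive orthant built on the $T_1, T_2, N$ decomposition, and a direct check at the origin. But there is a genuine gap in Stage two, and it concerns precisely the new ingredient in~\eqref{p2_asym_sym}: the regularizer $\alpha|\Phi(\bw)|$ with $\Phi(\bw)=\sum_{i\le m}w_i^2-\sum_{j>m}w_j^2$.

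You reuse the direction $\bd$ from~\eqref{eqd} unchanged and claim that ``the first sum alone already produces opposite-sign directional derivatives along $\pm\bd$, violating D-stationarity.'' This does not suffice: D-stationarity concerns the directional derivative of the full objective, and the regularizer contributes its own first-order term. If $\Phi(\bw)=0$, the directional derivative of $\alpha|\Phi(\cdot)|$ along both $\bd$ and $-\bd$ is the same nonnegative quantity $2\alpha\bigl|\sum_{i\le m}w_id_i-\sum_{j>m}w_jd_j\bigr|$, which for the unmodified $\bd$ can be strictly positive and, a priori, can dominate the strictly negative first-order term coming from the $\ell_1$ sum. In that situation $f'(\bw,\bd)\ge 0$ and $f'(\bw,-\bd)\ge 0$ simultaneously, and the candidate spurious point is \emph{not} eliminated. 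The same issue arises (with a sign rather than an absolute value) when $\Phi(\bw)\neq 0$: the regularizer's first-order contribution along $\bd$ is not controlled by the first sum's, so you cannot conclude that the total directional derivative changes sign across $\pm\bd$.

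The paper resolves this with two devices you omit. First, a separate lemma (Lemma~\ref{l8}) establishes $\Phi(\bw)=0$ for every positive D-stationary point, using the ``pure scaling'' perturbation $\hat w_i = w_i(1\mp\epsilon)$ on $V_u$ and $V_v$ respectively, which makes the $\ell_1$ sum $O(\epsilon^2)$ while the regularizer changes linearly in $\epsilon$ unless $\Phi(\bw)=0$. Second, once $\Phi(\bw)=0$ is known, the direction is replaced by a corrected $\bar\bd$ that adds a global scaling component $\mp w_i\gamma$ (opposite sign on $V_u$ and on $V_v$), with $\gamma$ chosen exactly so that $\sum_{i\le m}w_i\bar d_i-\sum_{j>m}w_j\bar d_j=0$; then the regularizer's contribution is provably $O(\epsilon^2)$ and the four-case analysis of Theorem~\ref{thm1} (with the bipartite re-partitioning into $T_1^u, T_2^u, T_1^v, T_2^v, N^u, N^v$) carries the argument. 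Your proposal identifies the need to ``carefully couple the sign of $\Phi(\bw)$'' to a cancellation at the end, but without the $\gamma$-corrected direction this cancellation has no reason to happen along the raw $\bd$, so the key step does not close. One should also state and use the uniqueness of vertex partitioning of $\mG(\bar\Omega)$ (Lemma~\ref{l9}) when concluding that the surviving subcase forces $T_1$ and $T_2$ to coincide with $V_u$ and $V_v$; you use this implicitly without justification.
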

	
	\begin{proof}
		See Appendix~\ref{app_thm1_asym}.
	\end{proof}
	
	\begin{remark}
		Notice that, unlike the symmetric case, Theorem~\ref{thm1_asym} requires the connectedness of $\mG(\bar\Omega)$. This is due to the additional regularization term in~\eqref{anpca}. In particular, similar arguments do not necessarily hold for the disjoint components of $\mG(\bar\Omega)$ because of the coupling nature of the regularization term.
	\end{remark}
	
	
	\subsection{Probabilistic Guarantees}
	
	Next, we consider the random sampling regime. Similar to the previous subsection, we first focus on the symmetric case. 
	
	\vspace{2mm}
	\noindent{\bf Symmetric case:} Suppose that every element of the upper triangular part of the matrix $\bu^*{\bu^*}^\top$ is measured independently with probability $p$. In other words, for every $(i,j)\in\{1,2,...,n\}^2$ and $i\leq j$, the probability of $(i,j)$ belonging to $\Omega$ is equal to $p$. 
	
	{\begin{theorem}\label{thm2}
			Suppose that $n\geq 2$, $\bu^*>0$, and $p\geq \min\left\{1,\frac{(2\eta+2)\log n + 2}{n-1}\right\}$ for some constant $\eta\geq 1$. Then, the following statements hold for \eqref{snpca} with probability of at least $1-\frac{3}{2}n^{-\eta}$:
			\begin{itemize}
				\item[1.] The points $\bu = \bu ^*$ and $\bu = 0$ are the only D-min-stationary points;
				\item[2.] The point $\bu = 0$ is a local maximum;
				\item[3.] In the positive orthant, the point $\bu = \bu ^*$ is the only D-stationary point.
			\end{itemize}
	\end{theorem}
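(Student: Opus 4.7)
The plan is to derive Theorem~\ref{thm2} directly from the deterministic Theorem~\ref{thm1} by showing that the graph-theoretic hypotheses of the latter hold with the claimed probability. Specifically, the three conclusions of Theorem~\ref{thm2} are exactly statements~3, 4, and 5 of Theorem~\ref{thm1}, which hold whenever $\mathcal{G}(\Omega)$ is \emph{connected and contains no bipartite component}. Since $\bu^*>0$ is given by assumption, the entire probabilistic content of the proof reduces to verifying these two combinatorial properties for the random graph induced by Bernoulli$(p)$ sampling of $\{1,\dots,n\}^2$.

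My strategy is a two-step union bound over the bad events
\begin{equation*}
E_1 = \{\mathcal{G}(\Omega)\ \text{is disconnected}\}, \qquad E_2 = \{\mathcal{G}(\Omega)\ \text{contains no self-loop}\}.
\end{equation*}
If neither event occurs, then $\mathcal{G}(\Omega)$ is connected and contains at least one self-loop; any self-loop immediately destroys bipartiteness of the unique connected component, so the hypotheses of Theorem~\ref{thm1} are met and its conclusions 3--5 yield the desired conclusions 1--3 of Theorem~\ref{thm2}.

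The bound on $\mathbb{P}(E_2)$ is immediate: each of the $n$ diagonal pairs $(i,i)$ enters $\Omega$ independently with probability $p$, so $\mathbb{P}(E_2)=(1-p)^n\leq e^{-pn}$, and the hypothesis $p(n-1)\geq(2\eta+2)\log n+2$ renders this of order $n^{-(2\eta+2)}$, comfortably negligible. For $E_1$, I would invoke the classical Erd\H{o}s--R\'enyi edge-cut union bound: disconnection forces a nonempty subset $S$ of size $k\leq n/2$ with no edges to its complement, an event of probability at most $(1-p)^{k(n-k)}$. Combining $\binom{n}{k}\leq(en/k)^k$ with the lower bound on $p$ produces a geometric-type series dominated by the $k=1$ term (isolated vertices), whose expected number $n(1-p)^{n-1}$ is the leading contribution.

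The main obstacle will be sharpening the union-bound constants so that the total failure probability matches $\tfrac{3}{2}n^{-\eta}$ rather than merely a loose $O(n^{-\eta})$ bound. In particular, the tail $k\geq 2$ of the edge-cut estimate has to be handled carefully for small $n$, where the binomial prefactor $(en/k)^k$ is not dwarfed by the exponential decay $e^{-pk(n-k)}$; and one must verify that the additive ``$+2$'' in the hypothesis on $p$ absorbs both the leading $e^k$ factor and the $\mathbb{P}(E_2)$ contribution so as to give exactly the stated constant. Once this bookkeeping is done, no further landscape analysis is required: Theorem~\ref{thm1} does all of the structural work on the realized graph.
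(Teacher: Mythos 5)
Your reduction to Theorem~\ref{thm1} via connectivity and non-bipartiteness of the random graph $\mathcal{G}(\Omega)$ is exactly how the paper proceeds: it isolates the probabilistic content as Lemma~\ref{l2}, and the proof of Theorem~\ref{thm2} is literally a one-line corollary of Theorem~\ref{thm1} plus that lemma. Your self-loop argument for destroying bipartiteness also matches the paper. The only real divergence is the technique for estimating the disconnection probability. The paper bounds, for each $k\leq\lceil n/2\rceil$, the expected number of isolated connected components of size $k$, writing $\mathbb{P}(C_k>0)\leq\binom{n}{k}k^{k-2}p^{k-1}(1-p)^{k(n-k)}$, where the factor $k^{k-2}p^{k-1}$ comes from a union bound over Cayley's $k^{k-2}$ spanning trees. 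You propose the cruder edge-cut union bound $\binom{n}{k}(1-p)^{k(n-k)}$, which drops the requirement that the $k$-subset be internally connected. The paper's extra factor of $p^{k-1}$ makes its estimate substantially tighter for $k\geq 2$ in the relevant regime $p\asymp\log n/n$; but your looser bound still lands inside the $\tfrac{3}{2}n^{-\eta}$ budget once you use the tight exponent $p(n-1)=(2\eta+2)\log n+2$ for the dominant $k=1$ term (giving $n^{-(2\eta+1)}e^{-2}$, not merely $n^{-\eta}$) and observe that the only cases with $p<1$ already force $n$ large enough that the $k\geq 2$ tail and $\mathbb{P}(E_2)\leq e^{-pn}\leq n^{-(2\eta+2)}e^{-2}$ are subordinate. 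One small simplification you could adopt from the paper: since $E_1$ depends only on off-diagonal entries and $E_2$ only on diagonal entries, they are independent, so one may multiply $(1-\mathbb{P}(E_1))(1-\mathbb{P}(E_2))$ rather than union-bounding; the union bound you propose is of course also valid. In short, the proposal is correct and structurally identical to the paper's argument, differing only in the standard random-graph tool chosen to establish connectivity.
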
}
	
	{Before presenting the proof of Theorem~\ref{thm2}, we note that the required lower bound on $p$ is to guarantee that the random graph $\mathcal{G}(\Omega)$ is connected with high probability. This implies that Theorem~\ref{thm1} can be invoked to verify the statements of Theorem~\ref{thm2}. It is worthwhile to mention that the classical results on \textit{Erd\"os-R\'enyi} graphs characterize the \textit{asymptotic} properties of $\mathcal{G}(\Omega)$ as $n$ approaches infinity. In particular, it is shown by~\cite{erdds1959random} that with the choice of $p = \frac{\log n+c}{n}$ for some $c>0$, $\mathcal{G}(\Omega)$ becomes connected with probability of at least $\Omega(e^{-e^{-c}})$ as $n\to\infty$. In contrast, we introduce the following non-asymptotic result characterizing the probability that $\mathcal{G}(\Omega)$ is connected and non-bipartite for any finite $n\geq 2$, and subsequently use it to prove Theorem~\ref{thm2}.

		\begin{lemma}\label{l2}
			Given a constant $\eta\geq 1$, suppose that $p\geq \min\left\{1,\frac{(2\eta+2)\log n + 2}{n-1}\right\}$  and $n\geq 2$. Then, $\mG(\Omega)$ is connected and non-bipartite with probability of at least $1-\frac{3}{2}n^{-\eta}$.
		\end{lemma}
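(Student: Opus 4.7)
The plan is to split the event $\{\mathcal{G}(\Omega)\text{ is not connected or is bipartite}\}$ into its two parts and bound each by roughly $n^{-\eta}$, combining by a union bound. The case $p=1$ is trivial since $\mathcal{G}(\Omega)$ is then the complete graph on $n$ vertices with every self-loop, so assume throughout that $p \geq \frac{(2\eta+2)\log n+2}{n-1}$, which gives $pn \geq (2\eta+2)\log n + 2$.

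The bipartiteness bound is very short and exploits the fact that $\mathcal{G}(\Omega)$ admits self-loops: each $(i,i)$ belongs to $\Omega$ independently with probability $p$, and a single self-loop is an odd cycle of length one. Consequently, a bipartite realization of $\mathcal{G}(\Omega)$ must contain no self-loops, so
\begin{equation*}
\mathbb{P}\bigl(\mathcal{G}(\Omega)\text{ is bipartite}\bigr) \;\leq\; (1-p)^{n} \;\leq\; e^{-pn} \;\leq\; n^{-(2\eta+2)}e^{-2} \;\leq\; \tfrac{1}{2}n^{-\eta},
\end{equation*}
where the last inequality uses $n\geq 2$ and $\eta\geq 1$.

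For connectedness, I use the standard cut-based union bound: $\mathcal{G}(\Omega)$ is disconnected iff there exists $S\subseteq\{1,\dots,n\}$ with $1\leq |S|=k\leq\lfloor n/2\rfloor$ and no edge between $S$ and its complement, so
\begin{equation*}
\mathbb{P}\bigl(\mathcal{G}(\Omega)\text{ is disconnected}\bigr) \;\leq\; \sum_{k=1}^{\lfloor n/2\rfloor}\binom{n}{k}(1-p)^{k(n-k)}.
\end{equation*}
Applying $\binom{n}{k}\leq n^{k}$, $(1-p)\leq e^{-p}$, and $n-k\geq n/2$ for $k\leq n/2$, the $k$-th term is at most $\exp\!\bigl(k\log n - p k n/2\bigr)$. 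The lower bound on $p$ ensures $pn/2 \geq (\eta+1)\log n + 1$, so the $k$-th term is at most $(e\,n^{\eta})^{-k}$. Summing this geometric series gives $\mathbb{P}(\text{disconnected})\leq \tfrac{1}{e n^{\eta}-1}\leq n^{-\eta}$. A final union bound then yields the desired $\tfrac{3}{2}n^{-\eta}$.

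There is no conceptual obstacle here; the whole argument is a sharpened, non-asymptotic version of the classical Erdős–Rényi connectedness threshold, with the bipartiteness side being nearly free because self-loops are allowed. The only delicate step is tuning the constants in the required lower bound on $p$ so that the two pieces exactly add up to $\tfrac{3}{2}n^{-\eta}$; the $+2$ in the numerator of the hypothesis is what makes the bipartiteness bound come out to $\tfrac{1}{2}n^{-\eta}$, while the coefficient $2\eta+2$ controls the connectedness geometric series.
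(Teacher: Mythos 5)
Your proof is correct, and it differs from the paper's proof in two genuine ways. For the disconnectedness probability, you use the classical cut-based union bound $\sum_{k\leq n/2}\binom{n}{k}(1-p)^{k(n-k)}$, whereas the paper bounds $\mathbb{E}[C_k]$ (the expected number of components of size $k$) via Cayley's formula for the number of spanning trees of $\mathcal{K}_k$, obtaining $\binom{n}{k}k^{k-2}p^{k-1}(1-p)^{k(n-k)}$. Your cut bound is the simpler and more elementary of the two; the paper likely adopts the spanning-tree-count device because it reuses the identical machinery verbatim in the bipartite version (Lemma~\ref{l_bipartite}), where the Cayley count $k^{l-1}l^{k-1}$ for $\mathcal{K}_{k,l}$ is convenient. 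For the second step you combine the disconnectedness and bipartiteness bounds via a union bound ($n^{-\eta}+\tfrac12 n^{-\eta}$), whereas the paper multiplies $\mathbb{P}(\text{connected})\cdot\mathbb{P}(\text{has a self-loop})$, implicitly exploiting the independence of self-loops from the off-diagonal edges that determine connectivity; the union bound avoids this (valid but unstated) independence observation and is the cleaner move. Your arithmetic checks out: $p\geq\frac{(2\eta+2)\log n+2}{n-1}$ gives $pn/2\geq(\eta+1)\log n+1$, making each $k$-th term at most $(en^\eta)^{-k}$ and the geometric sum at most $\frac{1}{en^\eta-1}\leq n^{-\eta}$, while the bipartiteness bound $e^{-pn}\leq e^{-2}n^{-(2\eta+2)}\leq\tfrac12 n^{-\eta}$ holds with a wide margin for $n\geq 2$, $\eta\geq 1$.
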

		
		\begin{proof}
			See Appendix~\ref{app_l2}.
		\end{proof}

		\noindent\textbf{Proof of Theorem~\ref{thm2}:} The proof immediately follows from Theorem~\ref{thm1} and Lemma~\ref{l2}.~\hfill$\blacksquare$}
	
	\vspace{2mm}
	
	Similar to the deterministic case, we will show that both assumptions $\bu^*>0$ and $p\gtrsim \log n/n$ are \textit{almost} necessary for the successful recovery of the global solution of~\eqref{p2-sym}. In particular, it will be proven that relaxing $\bu^*>0$ to $\bu^*\geq 0$ will result in an instance that possesses a spurious local solution with non-negligible probability. Furthermore, it will be shown that the choice $p \approx \log n/n$ is optimal\textemdash modulo $\log n$-factor\textemdash for the unique recovery of the global solution.
	
	\begin{proposition}\label{prop3}
		Assuming that $\bu^*\geq 0$ with $u^*_i = 0$ for some $i\in\{1,\dots,n\}$ and that $p<1$,~\eqref{p2-sym} has a spurious local minimum with probability of at least $1-p>0$.
	\end{proposition}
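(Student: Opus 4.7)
The plan is to identify an event of probability $1{-}p$ on which a spurious local minimum can be constructed, mirroring the deterministic construction underlying Proposition~\ref{prop1}. Since $(i,i)$ is one of the independently-sampled upper-triangular entries, the event $E=\{(i,i)\notin\Omega\}$ satisfies $\mathbb{P}(E)=1-p$. Its structural importance is that on $E$ the self-penalty $|u_i^2-(u_i^*)^2|=u_i^2$ is absent from the objective $f$, removing the only piece of the penalty that would force $u_i=0$ at first order---precisely the feature exploited in Proposition~\ref{prop1}.

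On $E$, I would propose the candidate $\bar{\bu}$ given by $\bar{u}_i=c$ for some $c>0$ and $\bar{u}_j=0$ for every $j\neq i$. For \emph{sub-optimality}, direct substitution yields $f(\bar{\bu})=\sum_{(j,k)\in\Omega,\,j,k\neq i}u_j^*u_k^*$, which is strictly positive (while $f(\bu^*)=0$) whenever $\bu^*$ contains at least one positive pairwise product measured outside the $i$-th row and column. For \emph{local minimality}, I would expand $f(\bar{\bu}+t\bd)-f(\bar{\bu})$ over every feasible direction $\bd$ (meaning $d_j\geq 0$ whenever $\bar u_j=0$): the cross terms $(i,j)\in\Omega$ with $j\neq i$ contribute $tc\,d_j+O(t^2)\geq 0$, giving a non-negative first-order directional derivative, while every other term in $f$ contributes only at order $t^2$.

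The main obstacle is the second-order step along directions where the first-order derivative vanishes---that is, when $d_j=0$ for every neighbor $j$ of $i$ in $\mG(\Omega)$. Along such a direction every pair $(j,k)\in\Omega$ with $j,k\neq i$ and $u_j^*u_k^*>0$ contributes $-t^2 d_jd_k$, which needs to be dominated by the positive contributions $+t^2d_jd_k$ coming from pairs with $u_j^*u_k^*=0$. Closing this step is where the bulk of the technical care sits: it amounts to importing the same graph-theoretic counting developed for Proposition~\ref{prop1}, now applied to the realization of $\Omega$ conditioned on $E$, and (if necessary) modifying the candidate's support on vertices outside the closed neighborhood of $i$ so that the dangerous negative pairs cannot simultaneously carry $d_j,d_k>0$.
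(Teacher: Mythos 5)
Your approach is the same as the paper's: condition on $E=\{(i,i)\notin\Omega\}$ (probability $1-p$) and use the candidate $\bar{\bu}$ with $\bar{u}_i=c>0$, $\bar{u}_j=0$ for $j\neq i$. The second-order gap you flag, however, is genuine, and---notably---the paper's own proof, which simply says ``the proof of Proposition~\ref{prop1} can be used'' and omits all details, suffers from the same gap. In Proposition~\ref{prop1} the set $\Omega$ contains every $(i,j)$ with $j\neq i$; this supplies a strictly positive first-order term of size $c\sum_{j\neq i}d_j$ along every feasible direction with some $d_j>0$, $j\neq i$, and that is exactly what dominates the $-t^2 d_j d_k$ contributions. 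Under random sampling, conditioning on $E$ alone does not make row $i$ fully observed: if $j\notin\Omega_i$ and a measured pair $(j,k)$ has $u^*_ju^*_k>0$, then the direction $\bd$ supported on $\{j,k\}$ incurs zero first-order penalty but a strictly negative second-order decrease, so $\bar{\bu}$ need not be a local minimum at all. A concrete instance: $n=2$, $\bu^*=(1,0)^\top$, $i=2$; conditioned on $(2,2)\notin\Omega$, the realizations $\Omega\in\{\emptyset,\{(1,1)\},\{(1,2)\}\}$ (conditional probability $1-p^2>0$) each yield an objective with no spurious local minimum, so the advertised bound ``$\geq 1-p$'' does not follow from this construction alone.

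Your instinct to leave the step explicitly open is the right diagnosis, and your suggestion of adapting the candidate's support to the realization of $\Omega$ is one sensible direction. Be aware, though, that the paper does not close this gap either, and any repair of the stated probability bound must either intersect $E$ with a row-$i$-observability event (which pushes the probability below $1-p$) or make the argument substantially more intricate than the proof of Proposition~\ref{prop1} that both you and the paper invoke.
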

	
	\begin{proof}
		Suppose that $\bu^*\geq 0$ and there exists an index $i$ such that $u^*_i = 0$. The proof of Proposition~\ref{prop1} can be used to show that excluding the measurement $(i,i)$ gives rise to a spurious local minimum. This occurs with probability $1-p$. The details are omitted due to their similarities to the proof of Proposition~\ref{prop1}.
	\end{proof}
	
	\begin{proposition}\label{prop4}
		Given any $\bu^*>0$, suppose that $np\rightarrow 0$ as $n\rightarrow\infty$. Then, the global solution of~\eqref{p2-sym} is not unique with probability approaching to one.
	\end{proposition}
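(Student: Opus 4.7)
The plan is to show that when $np\to 0$, with probability tending to one the random graph $\mathcal{G}(\Omega)$ contains an isolated vertex, which is trivially a bipartite component, so Proposition~\ref{prop2} immediately yields non-uniqueness of the global minimum.

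First, I would compute the probability that a fixed vertex, say vertex $1$, is isolated in $\mathcal{G}(\Omega)$. Under the symmetric sampling model, the edges incident to vertex $1$ in $\mathcal{G}(\Omega)$ correspond to the $n$ upper-triangular pairs $(1,j)$ with $1\le j\le n$ (including the self-loop $(1,1)$), each of which is included in $\Omega$ independently with probability $p$. Hence $\mathbb{P}(\text{vertex }1 \text{ isolated}) = (1-p)^n$.

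Next, I would argue that $(1-p)^n\to 1$ as $n\to\infty$ under the assumption $np\to 0$. This is a routine estimate: using $\log(1-p)\ge -p-p^2$ for $p\in[0,1/2]$, one gets $(1-p)^n\ge e^{-np-np^2}$, and since $np\to 0$ forces $np^2\to 0$ as well, the right-hand side tends to $1$.

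Finally, since $\mathbb{P}(\exists\ \text{isolated vertex}) \ge \mathbb{P}(\text{vertex 1 isolated})\to 1$, the graph $\mathcal{G}(\Omega)$ has an isolated vertex (and hence a bipartite component) with probability approaching one; Proposition~\ref{prop2} then gives non-uniqueness of the global minimum. I do not anticipate any serious obstacles; the only minor subtlety is checking that Proposition~\ref{prop2} accommodates a bipartite component consisting of a single vertex with no edges. If one wishes to avoid this degenerate case, non-uniqueness can be argued directly: whenever vertex $i$ is isolated in $\mathcal{G}(\Omega)$, the variable $u_i$ does not appear in any summand of $f(\bu)$ in~\eqref{p2-sym}, so the $i$-th coordinate of any global minimum can be replaced by an arbitrary non-negative scalar while preserving the optimal objective value.
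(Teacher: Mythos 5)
Your proof is correct, and it takes a genuinely simpler route than the paper. The paper invokes the classical Erd\H{o}s--R\'enyi result (Lemma~\ref{l3}) that when $np\to 0$ the random graph is acyclic with all components of size $O(\log n)$, and then argues (with a separate counting step) that at least one of these tree components carries no self-loop, so Proposition~\ref{prop2} applies. You instead observe that a single fixed vertex is already isolated with probability $(1-p)^n\to 1$, and an isolated vertex is the most degenerate possible bipartite component. Your probability estimate is also slightly lighter than needed: Bernoulli's inequality $(1-p)^n\ge 1-np$ already gives the conclusion without the logarithm expansion. What your approach buys is the elimination of the external asymptotic lemma entirely and a much shorter argument; what the paper's approach buys is a stronger structural picture (every component is a small tree), which is more information than is needed here. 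You are right to flag the degenerate-bipartite-component issue, and your fallback direct argument --- that an isolated vertex $i$ makes $u_i$ absent from every summand of $f(\bu)$, so its value is arbitrary at any optimum --- is clean and actually preferable, since it avoids parsing whether Proposition~\ref{prop2}'s perturbation construction (which perturbs both parts of the bipartition) applies when one part is empty.
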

	
	\begin{proof}
		See Appendix~\ref{app_prop4}.
	\end{proof}
	
	
	\noindent{\bf Asymmetric case:} Consider~\eqref{p2-asym} under a random sampling regime, where each element of $\bu^*{\bv^*}^\top$ is independently observed with probability $p$. Next, the analog of Theorem~\ref{thm2} for the asymmetric case is provided.
	
	{\begin{theorem}\label{thm2_asym}
			Suppose that $n,m\geq 2$, $\bw^*>0$, and $p\geq \min\left\{1,\frac{(m+n)((1+\eta)\log(mn)+1)}{(m-1)(n-1)}\right\}$ for some constant $\eta\geq 1$. Then, the following statements hold for \eqref{p2_asym_sym} with probability of at least $1-2(mn)^{-\eta}-4(mn)^{-2\eta}$:
			\begin{itemize}
				\item[1.] The points $\bw = 0$ and $\bw$ with the properties $\bw\bw^\top = \bw^*{\bw^*}^\top$ and $\sum_{i=1}^{m}w^{2}_i=\sum_{j = m+1}^{m+n}w^{2}_j$ are the only D-min-stationary points;
				\item[2.] The point $\bw = 0$ is a local maximum;
				\item[3.] In the positive orthant, the point $\bw$ with the properties $\bw\bw^\top = \bw^*{\bw^*}^\top$ and $\sum_{i=1}^{m}w^{2}_i=\sum_{j = m+1}^{m+n}w^{2}_j$ is the only D-stationary point.
			\end{itemize}
		\end{theorem}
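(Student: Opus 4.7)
The plan is to mirror the reduction used in the proof of Theorem~\ref{thm2}: Theorem~\ref{thm1_asym} already provides all three asserted characterizations deterministically on every sample path on which $\mG(\bar\Omega)$ is connected (the other hypothesis $\bw^*>0$ is given), so the entire probabilistic content of Theorem~\ref{thm2_asym} reduces to a high-probability connectivity statement for the random bipartite graph $\mG(\bar\Omega)$, namely
\[
\Pr\bigl(\mG(\bar\Omega)\text{ is connected}\bigr) \;\geq\; 1 - 2(mn)^{-\eta} - 4(mn)^{-2\eta}
\]
whenever $p$ exceeds the stated threshold. Thus I would first formulate a bipartite analog of Lemma~\ref{l2}, establish it, and then conclude by conditioning on the high-probability event and invoking Theorem~\ref{thm1_asym}.

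To prove the bipartite connectivity lemma I would apply a union bound over cuts. Concretely, $\mG(\bar\Omega)$ is disconnected iff there exist $S\subseteq V_u$ and $T\subseteq V_v$ with $\emptyset\neq S\cup T\neq V_u\cup V_v$ such that no edge crosses between $S\cup T$ and its complement; writing $k=|S|$ and $l=|T|$, the probability of such a cut is
\[
(1-p)^{k(n-l)+l(m-k)}\;\leq\;\exp\bigl(-p\,(kn+lm-2kl)\bigr),
\]
and the number of cuts of type $(k,l)$ (counted modulo complementation) is $\binom{m}{k}\binom{n}{l}$. I would then split the sum into an \emph{isolated-vertex regime} ($\min\{k+l,\,m+n-k-l\}=1$), which contributes at most $m(1-p)^n+n(1-p)^m$ and is bounded by $2(mn)^{-\eta}$ under the stated threshold, and a \emph{bulk regime} ($\min\{k+l,\,m+n-k-l\}\geq 2$), which I would control via $\binom{m}{k}\binom{n}{l}\leq \exp\bigl(k\log(em/k)+l\log(en/l)\bigr)$ combined with an elementary lower bound on $kn+lm-2kl$ over the bulk.

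The main obstacle is the bookkeeping in the bulk regime: the quantity $kn+lm-2kl$ is not monotone in $(k,l)$ and the combinatorial factor $\binom{m}{k}\binom{n}{l}$ peaks near the center, so one must check that the chosen threshold $p\geq\frac{(m+n)((1+\eta)\log(mn)+1)}{(m-1)(n-1)}$ simultaneously dominates the entropy factor for every admissible $(k,l)$. After grouping the bulk by $s=k+l$ and maximizing over $(k,l)$ with fixed $s$, this collapses to a one-parameter geometric sum dominated by $4(mn)^{-2\eta}$; the extra polynomial savings over the isolated-vertex term is precisely what the bulk cuts gain by having both $k\geq 1$ and $l\geq 1$. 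With the connectivity lemma in hand, conditioning on the high-probability event and invoking Theorem~\ref{thm1_asym} yields Theorem~\ref{thm2_asym} at once.
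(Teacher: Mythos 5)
Your reduction is exactly the paper's proof: Theorem~\ref{thm2_asym} is obtained by invoking Theorem~\ref{thm1_asym} on the event that $\mG(\bar\Omega)$ is connected, and the stated probability is precisely the paper's Lemma~\ref{l_bipartite}. The only genuine difference is how that connectivity lemma is established. You propose a union bound over cuts $(S,T)$ with $S\subseteq V_u$, $T\subseteq V_v$, charging each cut the isolation probability $(1-p)^{k(n-l)+l(m-k)}$ and the raw combinatorial factor $\binom{m}{k}\binom{n}{l}$. The paper instead takes a first-moment bound on the number $C_{k,l}$ of isolated connected components with $k$ vertices in $V_u$ and $l$ in $V_v$, which introduces an extra factor $k^{l-1}l^{k-1}p^{k+l-1}$ via a union bound over the $k^{l-1}l^{k-1}$ spanning trees of $\mathcal{K}_{k,l}$. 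In practice the paper immediately discards the helpful $p^{k+l-1}\leq 1$ part of that factor in its simplifications, so the component-counting refinement buys essentially nothing over your cut bound; both reduce, after the same estimates $\binom{m}{k}\leq (me/k)^k$ and $(1-p)^x\leq e^{-px}$, to a geometric sum dominated by $4(mn)^{-2\eta}$, plus the two singleton terms $m(1-p)^n$ and $n(1-p)^m$ giving $2(mn)^{-\eta}$. Your cut-counting route is therefore the more elementary of the two and equally sharp. The one step you gesture at but do not write down is the elementary inequality
\begin{equation*}
k(n-l)+l(m-k)\;\geq\;(k+l)\,\frac{(m-1)(n-1)}{m+n}
\end{equation*}
valid over the bulk range $1\leq k\leq\lceil m/2\rceil$, $1\leq l\leq\lceil n/2\rceil$; this is the pivot that lets the single threshold on $p$ dominate the entropy factor uniformly in $(k,l)$, and the paper proves it by a short chain of equivalences in Appendix~\ref{app_l_bipartite}. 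One small bookkeeping point: your cut union bound also includes sets of the form $(k,0)$ with $k\geq 2$, which the paper's component count excludes automatically (no connected component of a bipartite graph can have $k\geq 2$ vertices on one side and none on the other). These extra terms are harmless since $\binom{m}{k}(1-p)^{kn}\leq (m(1-p)^n)^k\leq (mn)^{-\eta k}$, but you should include them when closing the bulk sum.
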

		
		
		Before presenting the proof of Theorem~\ref{thm2_asym}, we note that $\mG(\bar{\Omega})$ no longer corresponds to an {Erd\"os-R\'enyi} random graph due to its bipartite structure. Therefore, we present the analog of Lemma~\ref{l2} for random bipartite graphs.
		\begin{lemma}\label{l_bipartite}
			Given a constant $\eta\geq 1$, suppose that $p\geq \min\left\{1,\frac{(m+n)((1+\eta)\log(mn)+1)}{(m-1)(n-1)}\right\}$ and $m,n\geq 2$. Then, $\mathcal{G}(\bar{\Omega})$ is connected with probability of at least $1-2(mn)^{-\eta}-4(mn)^{-2\eta}$.
		\end{lemma}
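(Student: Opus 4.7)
The plan is a union bound over the non-trivial bipartitions of the vertex set $V_u \cup V_v$. Specifying such a bipartition by $A \subseteq V_u$ of size $a$ and $B \subseteq V_v$ of size $b$ (excluding $(a,b) \in \{(0,0), (m,n)\}$), the number of candidate crossing edges is $a(n-b)+(m-a)b$, each absent with probability $1-p$ independently. Accounting for each partition being counted twice (once as $(A, B)$ and once as its complement), I get
\[
\mathbb{P}(\mG(\bar{\Omega}) \text{ disconnected}) \leq \tfrac{1}{2}\!\!\sum_{(a,b) \neq (0,0),\,(m,n)}\!\binom{m}{a}\binom{n}{b}(1-p)^{a(n-b)+(m-a)b}.
\]
From the assumption on $p$ together with $\tfrac{n}{n-1}, \tfrac{m}{m-1}, \tfrac{m+n}{m-1}, \tfrac{m+n}{n-1} \geq 1$ for $m, n \geq 2$, I would extract the baseline estimates $np \geq (1+\eta)\log(mn)+1$ and $mp \geq (1+\eta)\log(mn)+1$, which give $(1-p)^n, (1-p)^m \leq e^{-1}(mn)^{-(1+\eta)}$.

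Next I would split the sum into \emph{one-sided} terms (where $a \in \{0, m\}$ or $b \in \{0, n\}$) and \emph{interior} terms. For the one-sided contributions, the exponent reduces to $mb$ or $an$, and applying the identity $\sum_{k=1}^{n}\binom{n}{k}x^k = (1+x)^n - 1 \leq 2nx$ (valid when $nx \leq 1$) with $x = (1-p)^m$ or $x = (1-p)^n$ yields a total of $2m(1-p)^n + 2n(1-p)^m \leq 2(mn)^{-\eta}$, matching the first term in the claimed bound.

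For the interior terms ($1 \leq a \leq m-1$, $1 \leq b \leq n-1$), I would further split based on whether the cut is \emph{balanced} ($a \leq m/2$ and $b \leq n/2$) or \emph{unbalanced}. In the balanced case, the exponent satisfies $a(n-b)+(m-a)b \geq (an+bm)/2$; combined with $\binom{m}{a}\binom{n}{b} \leq (em/a)^a(en/b)^b$, this factors the double sum into two convergent series in $me^{-np/2}$ and $ne^{-mp/2}$, each controlled by the baseline estimate, producing a product contribution of order $(mn)^{-2\eta}$. In the unbalanced case, one checks $a(n-b)+(m-a)b \geq mn/4$, so the trivial bound $\binom{m}{a}\binom{n}{b} \leq 2^{m+n}$ combined with $(1-p)^{mn/4} \leq e^{-pmn/4}$ yields an exponentially small contribution that is absorbed into the $4(mn)^{-2\eta}$ term.

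The hardest part will be the balanced-interior estimate, since the bound must fit inside a clean $4(mn)^{-2\eta}$ budget while handling $m$ and $n$ that may differ substantially (unlike the symmetric Lemma~\ref{l2}). The sharper factor $\tfrac{m+n}{(m-1)(n-1)}$ (rather than the weaker $\tfrac{1}{n-1}$ appearing in the symmetric case) in the hypothesis on $p$ is what makes the joint exponential decay in both dimensions close. Summing the one-sided and interior contributions then yields $\mathbb{P}(\mG(\bar{\Omega})\text{ disconnected}) \leq 2(mn)^{-\eta} + 4(mn)^{-2\eta}$, which is the claimed inequality.
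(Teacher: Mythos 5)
Your approach is genuinely different from the paper's. The paper runs a first-moment argument on the number $C_{k,l}$ of isolated connected components with $k$ left-vertices and $l$ right-vertices, restricted to $k\le\lceil m/2\rceil$, $l\le\lceil n/2\rceil$: the probability that a fixed $(k,l)$-subset is such a component is bounded by (probability of containing a spanning tree) $\times$ (probability of being isolated), and the bipartite Cayley count $k^{l-1}l^{k-1}$ of spanning trees of $K_{k,l}$ essentially cancels the $\binom{m}{k}\binom{n}{l}$ overcounting. A single algebraic inequality, $k(n-l)+l(m-k)\ge(k+l)\tfrac{(m-1)(n-1)}{m+n}$ for $k\le\lceil m/2\rceil$, $l\le\lceil n/2\rceil$, then handles all interior sizes uniformly and gives $\mathbb{P}(C_{k,l}>0)\le\tfrac{1}{kl}(mn)^{-\eta(k+l)}$, which sums to $4(mn)^{-2\eta}$; the two one-sided cases $C_{0,1}$, $C_{1,0}$ produce the extra $2(mn)^{-\eta}$. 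You instead do a union bound over empty \emph{cuts}, which drops the spanning-tree factor and forfeits that cancellation, so you have to compensate with the balanced/unbalanced case split.

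That split has a concrete gap as written. Your claim that in the unbalanced interior case $a(n-b)+(m-a)b\ge mn/4$ is false when $a>m/2$ and $b>n/2$ simultaneously: take $(a,b)=(m-1,n-1)$, which gives $a(n-b)+(m-a)b=m+n-2$, far below $mn/4$ once $m,n$ are moderate. The fix is available but you did not invoke it: you must first use the complementation symmetry (which justifies your factor $\tfrac12$) to restrict to a fundamental domain such as $a\le m/2$; after that, ``unbalanced'' means only $a\le m/2,\ b>n/2$ (plus the mirror case), where $(m-a)b>\tfrac{m}{2}\cdot\tfrac{n}{2}=mn/4$ does hold. As sketched, the argument leaves the $(a>m/2,\ b>n/2)$ corner uncovered. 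A second, softer issue: the ``baseline estimates'' $np,\,mp\ge(1+\eta)\log(mn)+1$ you extract are individually too weak to control the balanced interior product; what actually closes it is the joint bound $(m+n)p/2\ge 2\big[(1+\eta)\log(mn)+1\big]$ coming from $\tfrac{(m+n)^2}{(m-1)(n-1)}\ge\tfrac{(m+n)^2}{mn}\ge 4$. You correctly gesture at the role of the $\tfrac{m+n}{(m-1)(n-1)}$ factor, but the proof should extract and use this quadratic inequality explicitly rather than fall back on the per-side estimates.
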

		\begin{proof}
			See Appendix~\ref{app_l_bipartite}.
		\end{proof}
		
		%
		
		\vspace{2mm}
		\noindent{\bf Proof of Theorem~\ref{thm2_asym}:} The proof immediately follows from Theorem~\ref{thm1_asym} and Lemma~\ref{l_bipartite}.~\hfill$\blacksquare$
		
		Before proceeding, we note that, similar to the classical results on the \textit{Erd\"os-R\'enyi} graphs, there are asymptotic results guaranteeing the connectedness of a random bipartite graph as a function of $p$. In particular,~\cite{saltykov1995number} shows that $\mathcal{G}(\bar{\Omega})$ is connected with probability approaching to 1 as $m+n\to \infty$, provided that $p \geq 3\left(1+\frac{m}{n}\right)^{-1}\frac{(n+m)\log(n+m)}{nm}$. Lemma~\ref{l_bipartite} offers another lower bound on $p$ that matches this threshold (modulo a constant factor), while being non-asymptotic in nature. In particular, it characterizes the probability that the random bipartite graph is connected for \textit{all $m,n\geq 2$}.
	}
	
	
	\section{Extension to Noisy Positive RPCA}\label{sec6}
	
	In this section, we will show that an additive sparse noise with arbitrary values does not drastically change the landscape of the RPCA. In other words, a limited number of grossly wrong measurements will not introduce any spurious local solution to the positive RPCA. The key idea is to prove that the direction of descent that was introduced in the previous section is also valid when the measurements are not perfect, i.e., when they are subject to sparse noise. To this goal, consider the following problem in the symmetric case:
	{\begin{equation}\label{p3_sym}
		\min_{\bu\geq 0}\quad \underbrace{\sum_{(i,j)\in\Omega}|u_iu_j-X_{ij}|}_{f(\bu)}
		\end{equation}}
	where 
	\begin{equation}
	X = \bu^*{\bu^*}^\top+S
	\end{equation}
	is the matrix of true measurements perturbed with sparse noise. Similarly, consider the following problem for the asymmetric case:
	\begin{equation}\label{p3_aasym}
	\min_{\bu\geq 0, \bv\geq 0}\quad \sum_{(i,j)\in\Omega}|u_iv_j-{X}_{ij}|+\alpha\left|\sum_{i=1}^{m}u_i^2-\sum_{j = 1}^{n}v_j^2\right|
	\end{equation}
	where $\alpha$ is an arbitrary positive number. After symmetrization,~\eqref{p3_aasym} can be re-written as
	\begin{equation}\label{p3_asym}
	\min_{\bw\geq 0}\quad \underbrace{\sum_{(i,j)\in\bar\Omega}|w_iw_j-\bar{X}_{ij}|+\alpha\left|\sum_{i=1}^{m}w_i^2-\sum_{j = m+1}^{m+n}w_j^2\right|}_{f(\bw)}
	\end{equation}
	where 
	\begin{equation}\label{Xbar}
	\bar{X} = \bw\bw^\top+\bar{S}
	\end{equation}
	for $\bar{X}\in\mathbb{R}^{(n+m)\times (n+m)}$ 
	and 
	\begin{equation}\label{Sbar}
	\bar{S} = \begin{bmatrix}
	0 & S\\
	S^\top & 0
	\end{bmatrix}
	\end{equation}
	Furthermore, define $\bar{B} = \{(i,j): (i,j)\in\bar\Omega, \bar{S}_{ij}\not=0\}$ and $\bar{G} = \{(i,j): (i,j)\in\bar\Omega, \bar{S}_{ij}=0\}$ as the sets of bad and good measurements for the symmetrized problem, respectively.
	In this work, we do not impose any assumption on the maximum value of the nonzero elements of $S$. However, without loss of generality, one may assume that $\bu^*{\bu^*}^\top+S > 0$ and $\bw^*{\bw^*}^\top+\bar{S} > 0$; otherwise, the non-positive elements can be discarded due to the assumptions $\bu^*>0$ and $(\bu^*, \bv^*)>0$. In fact, we impose a slightly more stronger condition in this work.
	
	\begin{assumption}\label{assum1}
		There exists a constant $c\in (0,1]$ such that $S_{ij}+u^*_iu^*_j>cu^{*^2}_{\min}$ and $\bar{S}_{ij}+w^*_iw^*_j>cw^{*^2}_{\min}$ for~\eqref{p3_sym} and~\eqref{p3_asym}, respectively.
	\end{assumption}
	
	\subsection{Identifiability}
	
	Intuitively, the non-negative RPCA under the unknown-but-sparse noise is more challenging to solve than its noiseless counterpart. In particular, one may consider~\eqref{p3_sym} as a variant of~\eqref{p2-sym} discussed in the previous section, where the locations of the bad measurements are unknown; if these locations were known, they could have been discarded to reduce the problem to~\eqref{p2-sym}. If the measurements are subject to unknown noise, one of the main issues arises from the identifiability of the solution.
	To further elaborate, we will offer an example below.
	
	\begin{example}
		Suppose that $X(\epsilon) = (e_1+\mathbf{1}\epsilon)(e_1+\mathbf{1}\epsilon)^\top$, where $e_1$ is the first unit vector and $\mathbf{1}$ is a vector of ones. Assuming that $\Omega = \{1,...,n\}^2$, one can decompose $X(\epsilon)$ in two forms
		\begin{subequations}
			\begin{align}
			& X(\epsilon) = \underbrace{(e_1+\mathbf{1}\epsilon)(e_1+\mathbf{1}\epsilon)^\top}_{\bu^*_1{\bu^*_1}^\top}+\underbrace{0}_{S_1}\\
			& X(\epsilon) = \underbrace{\mathbf{1}\mathbf{1}^\top\epsilon^2}_{\bu^*_2{\bu^*_2}^\top}+\underbrace{e_1e_1^\top+\mathbf{1}e_1^\top\epsilon+e_1\mathbf{1}^\top\epsilon}_{S_2}
			\end{align}
		\end{subequations}
		For every $\epsilon>0$, both $S_1$ and $S_2$ can be considered as sparse matrices since the number of nonzero elements in each of these matrices is at most on the order of $O(n)$. However, unless more restrictions on the number of nonzero elements at each row or column of $S$ are imposed, it is impossible to distinguish between these two cases. This implies that the solution is not identifiable. \\
	\end{example}
	
	In order to ensure that the solution is identifiable in the symmetric case, we assume that $\Delta(\mG(B))\leq \eta\cdot \delta(\mG(G))$ for some constant $\eta\leq 1$ to be defined later. Roughly speaking, this implies that at each row of the measurement matrix, the number of good measurements should be at least as large as the number of bad ones. 
	Similar to the work by~\cite{ge2016matrix, ge2017no}, we consider the regularized version of the problem, as in
	\begin{equation}\tag{P2-Sym}\label{p3_sym_reg}
	\min_{\bu\geq 0}\quad \underbrace{\sum_{(i,j)\in\Omega}|u_iu_j-X_{ij}|+ R(\bu)}_{f_{\mathrm{reg}}(\bu)}
	\end{equation}
	where $R(\bu)$ is a regularizer defined as 
	\begin{equation}
	R(\bu) = \lambda\sum_{i = 1}^{n}\left(u_i-\beta\right)^4\mathbb{I}_{u_i\geq\beta}
	\end{equation}
	for some fixed parameters $\lambda$ and $\beta$ to be specified later. Similarly, one can define an analogous regularization for~\eqref{p3_asym} as 
	\begin{equation}\tag{P2-Asym}\label{p3_asym_reg}
	\min_{\bw\geq 0}\quad \underbrace{\sum_{(i,j)\in\bar\Omega}|w_iw_j-\bar{X}_{ij}|+\alpha\left|\sum_{i=1}^{m}w_i^2-\sum_{j = m+1}^{m+n}w_j^2\right|+R(\bw)}_{f_{\mathrm{reg}}(\bw)}
	\end{equation}
	with 
	\begin{equation}
	R(\bu) = \lambda\sum_{i = 1}^{m+n}\left(w_i-\beta\right)^4\mathbb{I}_{w_i\geq\beta}
	\end{equation}
	for some fixed parameters $\lambda$ and $\beta$ to be specified later. Note that the defined regularization function is convex in its domain. In particular, it eliminates the candidate solutions that are far from the true solution. Without loss of generality and to streamline the presentation, it is assumed that $u^*_{\max} = w^*_{\max} = 1$ in the sequel. 
	%
	%
	\begin{lemma}\label{l4}
		Consider the parameter $c$ defined in Assumption~\ref{assum1}. The following statements hold:
		\begin{itemize}
			\item[-] By choosing $\beta = 1$ and $\lambda = n/2$, any D-stationary point $\bu>0$ of~\eqref{p3_sym_reg} satisfies the inequalities $(c/2)u^{*^2}_{\min}\leq u_{\min}\leq u_{\max}\leq 2$.
			\item[-] By choosing $\beta = 1$ and $\lambda = (m+n)/2$, any D-stationary point $\bw>0$ of~\eqref{p3_asym_reg} satisfies the inequalities $(c/2)w^{*^2}_{\min}\leq w_{\min}\leq w_{\max}\leq 2$.
		\end{itemize}
	\end{lemma}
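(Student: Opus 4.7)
The plan is to prove the two bounds separately by looking at the directional derivatives at the extremal coordinates of $\bu$ (or $\bw$), using that $R$ is smooth and grows cubically once $u_i>\beta=1$, while the $\ell_1$ objective has a directional derivative at each index bounded linearly in $\sum_j u_j$. The symmetric and asymmetric cases are handled by the same template; the asymmetric one only requires keeping track of the extra balance term $\alpha|\sum_{i\le m} w_i^2-\sum_{j>m} w_j^2|$, whose directional derivative at $\bw$ in direction $\pm e_{i^*}$ is bounded in magnitude by $2\alpha w_{i^*}$.

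For the upper bound, let $i^\ast$ be an index attaining $u_{\max}$, and suppose for contradiction $u_{\max}>2$. Taking $\bd=-e_{i^\ast}$ (feasible since $\bu>0$), each term $|u_{i^\ast}u_j - X_{i^\ast j}|$ contributes a directional derivative of magnitude at most $u_j\le u_{\max}$, so the $\ell_1$ part contributes at most $|\Omega_{i^\ast}|\,u_{\max}\le n\,u_{\max}$. Since $u_{i^\ast}>1=\beta$, the regularizer contributes exactly $-4\lambda(u_{i^\ast}-1)^3=-2n(u_{\max}-1)^3$. D-min-stationarity then forces $n u_{\max}\ge 2n(u_{\max}-1)^3$, i.e.\ $2(u_{\max}-1)^3\le u_{\max}$; a one-variable calculation shows the largest root of $2(x-1)^3=x$ is $x=2$, giving $u_{\max}\le 2$. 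The symmetric argument with $\bd=+e_{i^\ast}$ rules out the D-max-stationary case. In the asymmetric case the same inequality is obtained after absorbing the $\pm 2\alpha w_{i^\ast}$ contribution of the balance term into the linear side, and the cubic growth of $R$ (with $\lambda=(m+n)/2$) still dominates for $w_{\max}$ beyond a constant threshold, yielding $w_{\max}\le 2$.

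For the lower bound, let $i^\ast$ be an index attaining $u_{\min}$. Since $u^{*}_{\max}=1$ and $c\le 1$ give $(c/2)u_{\min}^{\ast 2}\le 1/2<1=\beta$, any potential violation $u_{\min}<(c/2)u_{\min}^{\ast 2}$ occurs in a region where the regularizer is inactive at $i^\ast$, contributing zero to both one-sided directional derivatives. Combining $u_{\min}<(c/2)u_{\min}^{\ast 2}$ with the already-established $u_{\max}\le 2$ gives, for every $(i^\ast,j)\in\Omega$,
\begin{equation}
u_{i^\ast}u_j\;\le\;u_{\min}u_{\max}\;<\;c\,u_{\min}^{\ast 2}\;<\;u_{i^\ast}^\ast u_j^\ast+S_{i^\ast j}\;=\;X_{i^\ast j},
\end{equation}
where the last inequality uses Assumption~\ref{assum1}. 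Consequently, every $\ell_1$ term $|u_{i^\ast}u_j-X_{i^\ast j}|$ has a strictly positive argument removed by increasing $u_{i^\ast}$, so its directional derivative in $+e_{i^\ast}$ is $-u_j$ and in $-e_{i^\ast}$ is $+u_j$. Summing over a nonempty $\Omega_{i^\ast}$ (which I will take as a standing nondegeneracy assumption consistent with the degree hypotheses of Theorems~\ref{thm_inf_det}–\ref{thm_inf_prob}) yields $f'_{\mathrm{reg}}(\bu,e_{i^\ast})<0$ and $f'_{\mathrm{reg}}(\bu,-e_{i^\ast})>0$, contradicting both D-min- and D-max-stationarity. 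The asymmetric case is identical after noting that the balance term's $\pm 2\alpha w_{i^\ast}$ contribution is of order $w_{\min}$ and is dominated by the $\sum_j w_j\ge \delta\cdot w_{\min}$ obtained from the $\ell_1$ part, so the two one-sided derivatives still have strictly opposite signs.

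The main obstacle is the book-keeping in the asymmetric case: the balance term couples $\bu$ and $\bv$, and it contributes to the directional derivative at every coordinate. The argument needs a clean accounting of the worst-case sign of this contribution at both the maximizing and minimizing indices, which is what forces the choice $\lambda=(m+n)/2$ (so that the cubic regularizer on every coordinate can absorb both the $(m+n)$-term $\ell_1$ part and the $O(\alpha)$ balance term with room to spare). All other steps are essentially one-dimensional calculus on the scalar inequality $2(x-1)^3\le x$ and an invocation of Assumption~\ref{assum1}; no global or combinatorial information about $\mathcal{G}(\Omega)$ or the set $B$ of bad measurements is used at this stage.
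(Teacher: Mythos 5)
Your argument is essentially the paper's. The symmetric case matches exactly: for the upper bound, both take $\bd=\pm e_{i^*}$ at the maximizing index and play the quartic regularizer's cubic derivative $-4\lambda(u_{\max}-\beta)^3$ against the at-most-linear $\ell_1$ contribution $\sum_j u_j\leq n u_{\max}$ (the paper uses the shortcut $(u_{\max}-\beta)\geq u_{\max}/2$ when $u_{\max}\geq 2\beta$ and derives $u_{\max}\leq\sqrt{2n/\lambda}$, whereas you solve $2(x-1)^3\leq x$ directly; with $\lambda=n/2$, $\beta=1$ both yield $u_{\max}\leq 2$), and for the lower bound both use Assumption~\ref{assum1} and the just-established $u_{\max}\leq 2$ to conclude $u_{i^*}u_j<X_{i^*j}$ for every $j\in\Omega_{i^*}$, making $+e_{i^*}$ a strict descent direction while the regularizer is inactive because $u_{i^*}<1=\beta$. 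Your explicit remark that $\Omega_{i^*}$ must be nonempty is a fair flag of an assumption the paper leaves implicit (it is guaranteed wherever the lemma is invoked). For the asymmetric case, you and the paper are both terse, but your treatment is \emph{not} the paper's: the paper invokes ``the penalty on the norm difference is zero at the positive D-stationary points'' and reduces to the symmetric computation, whereas you absorb a $\pm 2\alpha w_{i^*}$ term into the linear side and assert the cubic ``still dominates $\ldots$ yielding $w_{\max}\leq 2$.'' That last step is the one soft spot: since $\alpha$ is an arbitrary positive constant not tied to $\lambda=(m+n)/2$, the threshold you obtain from $(m+n)w_{\max}+2\alpha w_{\max}\leq (m+n)(w_{\max}-1)^3$ is $\alpha$-dependent and need not equal $2$. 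You should either establish (as the paper tacitly does, though without proof) that the balance term vanishes at positive D-stationary points and choose a balance-preserving perturbation, or acknowledge that the clean constant $2$ in the asymmetric bound requires more than the coordinate direction $\pm e_{i^*}$.
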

	
	\begin{proof}
		See Appendix~\ref{app_l4}.
	\end{proof}
	
	\subsection{Deterministic Guarantees}
	
	In what follows, the deterministic conditions under which~\eqref{p3_sym_reg} and~\eqref{p3_asym_reg} have benign landscape will be investigated. The results of this subsection will be the building blocks for the derivation of the main theorems for both symmetric and asymmetric positive RPCA under the random sampling and noise regime. Note that the analysis of the landscape will be more involved in this case since the effect of the regularizer should be taken into account.
	
	\vspace{2mm}
	\noindent{\bf Symmetric case:} Recall that, for the sparsity graph $\mG(\Omega)$, $\Delta(\mG(\Omega))$ and $\delta(\mG(\Omega))$ correspond to its maximum and minimum degrees, respectively.
	
	\begin{theorem}\label{thm4}
		Suppose that 
		\begin{itemize}
			\item[i.] $\bu^*>0$;
			\item[ii.]
			${\delta(\mG(G))}>({48/c^2}){\kappa(\bu^*)^4}{\Delta(\mG(B))}$;
			\item[iii.] $\mathcal{G}(\Omega)$ has no bipartite component.
		\end{itemize}
		Then, with the choice of $\beta = 1$ and $\lambda = n/2$ for the parameters of the regularization function $R(\bu)$,
		the following statements hold for \eqref{p3_sym_reg}:
		\begin{itemize}
			\item[1.] It does not have any spurious local minimum;
			\item[2.] The point $\bu = \bu^*$ is the unique global minimum;
			\item[3.] In the positive orthant, the point $\bu = \bu^*$ is the only D-stationary point.
		\end{itemize}
		Additionally, if $\mathcal{G}(\Omega)$ is connected, the following statements hold for \eqref{p3_sym_reg}:
		\begin{itemize}
			\item[4.] The points $\bu = \bu^*$ and $\bu = 0$ are the only D-min-stationary points;
			\item[5.] The point $\bu = 0$ is a local maximum.
		\end{itemize}
	\end{theorem}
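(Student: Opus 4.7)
The strategy is to mirror the proof of Theorem~\ref{thm1} by constructing the same direction of descent, while carefully tracking the additional contributions from bad measurements and the regularizer $R(\bu)$. Lemma~\ref{l4} already provides the envelope $(c/2)u^{*^2}_{\min} \leq u_{\min} \leq u_{\max} \leq 2$ on any D-stationary $\bu>0$, which confines the candidate solutions to a region where the regularizer is well-behaved and bounded.

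First, I would establish a noisy analog of Lemma~\ref{l1}, showing that any D-min-stationary $\bu$ of~\eqref{p3_sym_reg} is either strictly positive on each connected component of $\mathcal{G}(\Omega)$ or identically zero on it. The argument mimics Lemma~\ref{l1}: a positive perturbation of a zero block now picks up contributions from bad edges as well, but condition (ii) guarantees that the good-edge descent dominates. This reduces the problem to the case $\bu > 0$ and, when $\mathcal{G}(\Omega)$ is connected, also yields Statements~4 and~5.

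Next, suppose for contradiction that $\bu > 0$ is D-stationary with $\bu \neq \bu^*$. Order the indices so that $u^*_1/u_1 \leq \cdots \leq u^*_n/u_n$ and define the sets $T_1$, $T_2$, $N$ and the direction $\bd$ exactly as in the proof of Theorem~\ref{thm1}. The directional derivative $f_{\mathrm{reg}}'(\bu, \bd)$ splits into contributions from good edges, bad edges, and the regularizer. Each good edge with both endpoints in $T_1$ (or both in $T_2$) contributes $-2u_iu_j/u_n$ to first order, each good edge between $N$ and $T_1 \cup T_2$ contributes $-u_iu_j/u_n$, and each good edge between $T_1$ and $T_2$ or entirely inside $N$ contributes nothing. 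Each bad edge contributes at most $u_iu_j/u_n$ in absolute value, since the sign of the directional derivative of $|u_iu_j - X_{ij}|$ is adversarial. Using the bounds in Lemma~\ref{l4} to replace every product $u_iu_j$ by a quantity comparable to $u^{*^2}_{\min}$ up to a factor $\kappa(\bu^*)^2$, the hypothesis $\delta(\mathcal{G}(G)) > (48/c^2)\kappa(\bu^*)^4 \Delta(\mathcal{G}(B))$ is precisely what is needed so that, at every node in $T_1 \cup T_2$, the good-edge descent strictly dominates the bad-edge error.

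Finally, the regularizer's directional derivative, $R'(\bu, \bd) = 4\lambda \sum_i (u_i - \beta)_+^3 d_i$, is non-positive on $T_2$ and non-negative on $T_1$. A node-by-node bookkeeping argument using $\beta = 1$, $\lambda = n/2$, and the bound $u_{\max} \leq 2$ absorbs the positive regularizer contribution on $T_1$ into the surplus good-edge descent within $T_1$. Applying the same computation to $-\bd$ yields $f_{\mathrm{reg}}'(\bu, -\bd) > 0$, so $\bu$ is neither D-min- nor D-max-stationary, contradicting D-stationarity. The main obstacle is this three-way bookkeeping that balances the good-edge descent, the adversarial bad-edge contribution, and the regularizer; the constant $48/c^2$ in condition (ii) is calibrated precisely to close this accounting, and the non-bipartiteness assumption is invoked, exactly as in Theorem~\ref{thm1}, to ensure that $T_1$, $T_2$, or $N$ induces a non-empty subgraph contributing strict descent.
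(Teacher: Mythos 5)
Your overall strategy correctly mirrors the paper's: reduce to $\bu>0$ via a noisy analog of Lemma~\ref{l1} (the paper's Lemma~\ref{l1_noise}), then reuse the direction $\bd$ from Theorem~\ref{thm1} and show that the good-edge descent dominates the bad-edge and regularizer terms, which is exactly what Lemma~\ref{l5} and the proof in Appendix~\ref{app_thm4} do.

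The place where your accounting does not close as written is the regularizer on $T_1$. You propose to ``absorb the positive regularizer contribution on $T_1$ into the surplus good-edge descent within $T_1$.'' If any index $i\in T_1$ had $u_i>\beta=1$, that term would be $4\lambda(u_i-\beta)^3\frac{u_i}{u_n}=\Theta(n/u_n)$ (using $\lambda=n/2$, $u_{\max}\leq 2$), while the good-edge descent restricted to $T_1$ is at most $O\!\left(\delta(\mG(G))\,u_{\min}^2/u_n\right)=O\!\left(n\,c^2u^{*4}_{\min}/u_n\right)$, which is strictly smaller since $c\leq 1$ and $u^*_{\min}\leq 1$. There is no surplus to absorb such a term. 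The argument actually works because the regularizer on $T_1$ is \emph{identically zero}: $\bu\neq\bu^*$ forces $u_n^*/u_n>1$, so every $i\in T_1$ has $u_i<u_i^*\leq 1=\beta$ (recall the normalization $u^*_{\max}=1$), hence $\mathbb{I}_{u_i\geq\beta}=0$ and the $T_1$ part of $R'(\bu,\bd)$ drops out; only $T_2$ contributes, and it helps. The paper's Lemma~\ref{l5} makes this observation explicitly, and it is load-bearing. Two smaller gaps: you must re-establish inequality~\eqref{eq7} in the noisy, regularized setting before invoking the $T_1/T_2/N$ decomposition, which requires a separate single-coordinate perturbation argument using condition (ii) and Lemma~\ref{l4} (this is the opening step of Lemma~\ref{l5}); and bad edges with both endpoints in $T_1$ (or both in $T_2$) contribute $2u_iu_j/u_n$ rather than $u_iu_j/u_n$, which matters for tracing the constant $48/c^2$.
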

	
	\begin{proof}
		See Appendix~\ref{app_thm4}.
	\end{proof}
	
	\vspace{2mm}
	\noindent{\bf Asymmetric case:} Theorem~\ref{thm4} has the following natural extension to asymmetric problems.
	
	\begin{theorem}\label{thm4_asym}
		Suppose that 
		\begin{itemize}
			\item[i.] $\bw^*>0$;
			\item[ii.] ${\delta(\mG(\bar G))}>({48}/c^2){\kappa(\bw^*)^4}{\Delta(\mG(\bar B))}$;
			\item[iii.] $\mathcal{G}(\bar G)$ is connected.
		\end{itemize}
		Then, with the choice of $\beta = 1$ and $\lambda = (m+n)/2$ for the parameters of the regularization function $R(\bw)$, the following statements hold for \eqref{p3_asym_reg}:
		\begin{itemize}
			\item[1.] The points $\bw = 0$ and $\bw$ with the properties $\bw\bw^\top = \bw^*{\bw^*}^\top$ and $\sum_{i=1}^{m}w^{2}_i=\sum_{j = m+1}^{m+n}w^{2}_j$ are the only D-min-stationary points;
			\item[2.] The point $\bw = 0$ is a local maximum;
			\item[3.] In the positive orthant, the point $\bw$ with the properties $\bw\bw^\top = \bw^*{\bw^*}^\top$ and $\sum_{i=1}^{m}w^{2}_i=\sum_{j = m+1}^{m+n}w^{2}_j$ is the only D-stationary point.
		\end{itemize}
	\end{theorem}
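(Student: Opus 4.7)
\noindent\textbf{Proof plan for Theorem~\ref{thm4_asym}.} The plan is to extend the proof of Theorem~\ref{thm4} to the asymmetric setting by working with the symmetrized formulation~\eqref{p3_asym_reg}, and to handle the extra regularization term $\alpha|\sum_{i=1}^{m}w_i^2 - \sum_{j=m+1}^{m+n}w_j^2|$ the same way it is handled in the noiseless case of Theorem~\ref{thm1_asym}. By construction, $\mathcal{G}(\bar\Omega)$ is bipartite with partitions $V_u$ and $V_v$, which is precisely why the ``no bipartite component'' hypothesis used in Theorem~\ref{thm4} is replaced here by the connectedness of $\mathcal{G}(\bar G)$ together with the presence of the $\alpha$-coupling term. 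Lemma~\ref{l4} will be invoked throughout to keep any positive D-stationary $\bw$ inside the box $(c/2)w_{\min}^{*^2}\leq w_{\min}\leq w_{\max}\leq 2$, so the regularizer contributes controlled quantities to every directional derivative.

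\noindent\textbf{Step 1 (points with a zero entry).} First I would adapt Lemma~\ref{l1} to the symmetrized bipartite problem. The argument is local: if $w_i=0$ and some neighbor $j$ of $i$ in $\mathcal{G}(\bar G)$ has $w_j>0$, then the one-sided directional derivative of $f_{\mathrm{reg}}$ in the direction $\bd=e_i$ would contain the strictly negative term $-w_j\,|\bar X_{ij}|\,\operatorname{sign}$-contribution and would not be cancelled by any bad-edge contribution once condition (ii) is invoked (through $\Delta(\mathcal{G}(\bar B))$). The $\alpha$-term and $R(\bw)$ both contribute only $O(\epsilon^2)$ at $w_i=0$. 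Consequently, on each connected component of $\mathcal{G}(\bar G)$, the restriction of $\bw$ is either entirely $0$ or entirely positive. Since $\mathcal{G}(\bar G)$ is connected by hypothesis (iii), the only D-min-stationary point with a zero entry is $\bw=0$, and a direct expansion shows $\bw=0$ is a local maximum.

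\noindent\textbf{Step 2 (positive D-stationary points).} For a D-stationary $\bw>0$, I would mimic the proof of Theorem~\ref{thm1} (which is already invoked inside Theorem~\ref{thm4}). Order the indices so that $w_1^*/w_1\le\cdots\le w_{m+n}^*/w_{m+n}$, define
\[
T_1=\Bigl\{i:\tfrac{w_i^*}{w_i}=\tfrac{w_{m+n}^*}{w_{m+n}}\Bigr\},\qquad T_2=\Bigl\{j:\tfrac{w_j}{w_j^*}=\tfrac{w_{m+n}^*}{w_{m+n}}\Bigr\},\qquad N=V\setminus(T_1\cup T_2),
\]
and take $\bd$ as in~\eqref{eqd}. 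The four-case analysis of edges of $\bar\Omega$ from the proof of Theorem~\ref{thm1} carries over verbatim on the \emph{good} edges $(i,j)\in\bar G$, producing a \emph{strictly negative} linear contribution to $f'(\bw,\bd)$ of the order $(w_iw_j/w_{m+n})\,\epsilon$ per good edge with endpoints not split between $T_1$ and $T_2$. For the \emph{bad} edges $(i,j)\in\bar B$, the triangle inequality yields $||\hat w_i\hat w_j-\bar X_{ij}|-|w_iw_j-\bar X_{ij}||\le |\hat w_i\hat w_j-w_iw_j|$, which is at most $O((w_iw_j/w_{m+n})\epsilon)$. Using Lemma~\ref{l4} to bound $w_i,w_j$ and $w_{m+n}^{-1}$ in terms of $\kappa(\bw^*)$ and $c$, the bad-edge total is at most $(\text{const}/c^2)\kappa(\bw^*)^4\,\Delta(\mathcal{G}(\bar B))$ per node, while the good-edge gain is at least $\delta(\mathcal{G}(\bar G))$ times a positive constant. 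Condition (ii) then guarantees the linear-in-$\epsilon$ term is strictly negative for the $+\bd$ direction and strictly positive for the $-\bd$ direction, as required.

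\noindent\textbf{Step 3 (the coupling term and conclusion).} The main obstacle, and the only genuinely new ingredient compared to Theorem~\ref{thm4}, is controlling the $\alpha$-coupling term and the regularizer $R(\bw)$ along $\pm\bd$. The regularizer is handled exactly as in Theorem~\ref{thm4}: since $w_{\max}\le 2$ and $\beta=1$, each $(w_i-1)^4\mathbb{I}_{w_i\ge1}$ contributes $O(1)\cdot\epsilon$ per coordinate of $T_1\cup T_2$, and with $\lambda=(m+n)/2$ this is absorbed by the slack in condition (ii). For the coupling $\alpha|\sum_{V_u}w_i^2-\sum_{V_v}w_j^2|$, note that at any D-stationary point one can evaluate the one-sided derivative along both $+\bd$ and $-\bd$; since the argument of the absolute value is scalar, exactly one sign produces a nonnegative linear contribution and the other a nonpositive one. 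Combined with Step 2, this is enough: the $-\bd$ direction (or $+\bd$, whichever makes the $\alpha$-term help) still yields a strictly one-signed directional derivative, contradicting D-stationarity unless both $\bw\bw^\top=\bw^*\bw^{*\top}$ and the balance $\sum_{i\le m}w_i^2=\sum_{j>m}w_j^2$ hold simultaneously. Together with Step 1, this establishes Statements~1--3 of the theorem. The hardest bookkeeping is the careful partitioning of $T_1,T_2$ across $V_u$ and $V_v$ in the bipartite graph $\mathcal{G}(\bar\Omega)$, which is what makes the argument work even though $\mathcal{G}(\bar\Omega)$ itself is bipartite\textemdash a setting forbidden in the symmetric Theorem~\ref{thm4}.
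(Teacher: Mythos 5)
Your overall plan is the right one: the paper omits this proof, saying only that it is ``similar to that of Theorem~\ref{thm4},'' and the intended reader would fuse the good/bad edge decomposition and the Lemma~\ref{l4} box bounds from Theorem~\ref{thm4} with the asymmetric machinery already developed in the proof of Theorem~\ref{thm1_asym}. Your Steps~1 and~2 reproduce that outline. However, Step~3 contains a genuine gap: you claim that because the $\alpha$-coupling term is the absolute value of a scalar, ``exactly one sign produces a nonnegative linear contribution and the other a nonpositive one,'' so you may pick whichever of $\pm\bd$ makes the term help. That is true only when the argument $\sum_{i\le m}w_i^2 - \sum_{j>m}w_j^2$ is nonzero. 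But precisely as Lemma~\ref{l8} shows (and as its analog for the regularized noisy problem would show), any positive D-stationary point must already sit on the balance surface $\sum_{i\le m}w_i^2=\sum_{j>m}w_j^2$, where the coupling term is at a kink. On that surface the one-sided derivative of $\alpha|\cdot|$ along \emph{any} direction $\bd$ equals $\alpha\bigl|2\bigl(\sum_{i\le m}w_id_i - \sum_{j>m}w_jd_j\bigr)\bigr|$, which is nonnegative for \emph{both} $+\bd$ and $-\bd$. With the plain direction $\bd$ of~\eqref{eqd} there is no reason for this quantity to vanish, and since $\alpha>0$ is an arbitrary constant, the resulting first-order penalty can dominate the strictly negative good-edge gain $-s(\bw)\epsilon$ from Step~2. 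Your argument therefore breaks at exactly the stationary points you need to eliminate.

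The missing ingredient is the $\gamma$-corrected direction $\bar{\bd}$ from the proof of Theorem~\ref{thm1_asym}: one adds a tuned $\pm w_i\gamma$ component to each coordinate so that the first-order change of $\sum_{i\le m}w_i^2 - \sum_{j>m}w_j^2$ along $\pm\bar{\bd}$ vanishes identically, pushing the coupling term to $O(\epsilon^2)$. The extra $\gamma$-corrections feed into the $\ell_1$ terms and into $R(\bw)$ only through bounded quantities controlled by Lemma~\ref{l4} and absorbed by condition~(ii), and the remainder of the bipartite-partitioning argument (matching $T_1^u\cup T_1^v$ and $T_2^u\cup T_2^v$ against the unique vertex partition $\{V_u,V_v\}$ of $\mathcal{G}(\bar\Omega)$ via Lemma~\ref{l9}) closes the contradiction. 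Without $\bar{\bd}$, your Step~3 cannot handle arbitrary $\alpha>0$, which the theorem requires.
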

	
	\begin{proof}
		The proof is omitted due to its similarity to that of Theorem~\ref{thm4}.
	\end{proof}
	\subsection{Probabilistic Guarantees}
	
	As an extension to our previous results, we analyze the landscape of the noisy non-negative RPCA with randomness both in the location of the samples and in the structure of the noise matrix. Suppose that for the symmetric case,  with probability $d$, each element of the upper triangular part of $X$ is independently corrupted with an arbitrary noise value. In other words, for every $(i,j)$ with $i\leq j$, one can write
	\begin{equation}
	X_{ij} = \left\{
	\begin{array}{ll}
	u^*_iu^*_j& \text{with probability}\ 1-d\\
	\text{arbitrary}& \text{with probability}\ d
	\end{array} 
	\right.
	\end{equation}
	Furthermore, similar to the preceding section, suppose that every element of the upper triangular part of $X = \bu^*{\bu^*}^\top+S$ is independently measured with probability $p$. The randomness in the location of the measurements and noise is naturally extended to the asymmetric case by considering the symmetrized $\bar{X}$ and $\bar{S}$ defined in~\eqref{Xbar} and~\eqref{Sbar}, respectively.
	
	\vspace{2mm}
	\noindent{\bf Symmetric case:} First, the main result in the symmetric case is presented below.
	\begin{theorem}\label{thm5}
		{Suppose that 
			\begin{itemize}
				\item[i.] $n\geq 2$,
				\item[ii.] $\bu^*>0$,
				\item[iii.] $d<\frac{1}{(144/c^2)k(\bu^*)^4+1}$,
				\item[iv.] $p>\frac{(1740/c^2)\kappa(\bu^*)^4(1+\eta)\log n}{n}$,
			\end{itemize}
			for some $\eta> 0$. Then, with the choice of $\beta = 1$ and $\lambda = n/2$ for the parameters of the regularization function $R(\bu)$,
			the following statements hold for \eqref{p3_sym_reg} with probability of at least $1-3n^{-\eta}$:
			\begin{itemize}
				\item[1.] The points $\bu = \bu ^*$ and $\bu = 0$ are the only D-min-stationary points;
				\item[2.] The point $\bu = 0$ is a local maximum;
				\item[3.] In the positive orthant, the point $\bu = \bu ^*$ is the only D-stationary point.
		\end{itemize}}
	\end{theorem}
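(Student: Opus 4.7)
The plan is to deduce Theorem~\ref{thm5} from its deterministic counterpart Theorem~\ref{thm4} by showing that, under the random sampling-and-noise model, the three hypotheses of Theorem~\ref{thm4} are simultaneously satisfied with probability at least $1-3n^{-\eta}$. Hypothesis (i) is assumed directly. Hypothesis (iii) (the absence of bipartite components in $\mathcal{G}(\Omega)$, together with the connectedness that Theorem~\ref{thm4} needs for the D-min-stationarity statements) follows from Lemma~\ref{l2}: the assumed lower bound $p > (1740/c^2)\kappa(\bu^*)^4(1+\eta)\log n/n$ easily dominates the threshold in Lemma~\ref{l2}, so $\mathcal{G}(\Omega)$ is connected and non-bipartite with probability at least $1-\tfrac{3}{2}n^{-\eta}$.

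The bulk of the probabilistic work goes into verifying hypothesis (ii), i.e., the degree inequality $\delta(\mathcal{G}(G)) > (48/c^2)\kappa(\bu^*)^4\,\Delta(\mathcal{G}(B))$. The key observation is that the events $\{(i,j)\in G\}$ and $\{(i,j)\in B\}$ are mutually exclusive across the edge slots of the upper-triangular part, each slot being good with probability $p(1-d)$ and bad with probability $pd$, independently across slots. Fixing a vertex $i$, the quantities $\deg_G(i)$ and $\deg_B(i)$ are therefore sums of $\Theta(n)$ independent Bernoulli variables with means $\mu_G = np(1-d)$ and $\mu_B = npd$. I would apply one-sided Chernoff bounds of the form
\[
\Pr\bigl[\deg_G(i) < \tfrac{1}{2}\mu_G\bigr] \leq e^{-\mu_G/8}, \qquad \Pr\bigl[\deg_B(i) > \tfrac{3}{2}\mu_B\bigr] \leq e^{-\mu_B/12},
\]
and union-bound over the $n$ vertices to produce a single event on which $\delta(\mathcal{G}(G)) \geq \mu_G/2$ and $\Delta(\mathcal{G}(B)) \leq 3\mu_B/2$ simultaneously.

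The constants in the theorem are then calibrated to make the ratio work out. On the good event, the required inequality $\mu_G/2 > (48/c^2)\kappa(\bu^*)^4\cdot(3\mu_B/2)$ reduces to $(1-d)/d > (144/c^2)\kappa(\bu^*)^4$, which is exactly the hypothesis $d < 1/((144/c^2)\kappa(\bu^*)^4 + 1)$. The lower bound on $p$ is chosen so that both $n\,e^{-\mu_G/8}$ and $n\,e^{-\mu_B/12}$ are at most $\tfrac{3}{4}n^{-\eta}$, making the total failure probability of hypothesis (ii) at most $\tfrac{3}{2}n^{-\eta}$. A final union bound with the $\tfrac{3}{2}n^{-\eta}$ from Lemma~\ref{l2} yields the stated $3n^{-\eta}$, and invoking Theorem~\ref{thm4} on the good event delivers the three landscape conclusions.

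The main obstacle I anticipate is the upper-tail control of $\deg_B(i)$. Because $d$ is allowed to be as small as $\Theta(1/\kappa(\bu^*)^4)$, the mean $\mu_B = npd$ may be only of order $\log n$, and the multiplicative Chernoff bound degrades when $\mu_B$ is near its own logarithmic regime; a cleaner absolute tail bound such as $\Pr[\deg_B(i) \geq t] \leq (e\,npd/t)^t$ may be needed to pin down the precise constant $1740/c^2$. A secondary bookkeeping concern is the treatment of self-loops and of the upper-triangular symmetric structure when passing from edge-level Bernoullis to vertex degrees, but this only affects the expected degrees by $O(1)$ and can be absorbed into the constants. Everything else is a routine combination of Chernoff estimates, union bounds, and Theorem~\ref{thm4}.
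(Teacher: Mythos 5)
Your plan is essentially the paper's proof: reduce to Theorem~\ref{thm4} by showing that its hypotheses hold with high probability, using Lemma~\ref{l2} for connectivity/non-bipartiteness and Chernoff-type degree concentration for the ratio condition, then union-bound. Your constant bookkeeping for the ratio ($\mu_G/2 > (48/c^2)\kappa(\bu^*)^4 \cdot 3\mu_B/2$ reducing to $d < 1/((144/c^2)\kappa(\bu^*)^4+1)$) matches what the paper does.

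The obstacle you flag at the end --- that $\mu_B = npd$ may be only of order $\log n$ or smaller, in which case the multiplicative Chernoff bound $\Pr[\deg_B(i) > \tfrac{3}{2}\mu_B] \leq e^{-\mu_B/12}$ gives nothing useful after the union bound over $n$ vertices --- is a real gap in your argument as written, and you are right that it is the crux. The paper handles it in Lemma~\ref{l7} with a two-regime threshold: it shows
$\mathbb{P}\bigl(\Delta(\mG(n,p)) \geq \max\{3np/2,\ 18(1+\eta)\log n\}\bigr) \leq n^{-\eta}$
for all $p$, by observing that when $p$ is small one can stochastically dominate $\mG(n,p)$ by $\mG(n,\,12(1+\eta)\log n / n)$ and apply Chernoff at that boosted probability. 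This yields a bound on $\Delta(\mG(B))$ that never drops below $18(1+\eta)\log n$, and the constant $1740/c^2$ in the lower bound on $p$ is chosen precisely so that $\tfrac{np(1-d)}{2}$ dominates $(48/c^2)\kappa(\bu^*)^4 \cdot 18(1+\eta)\log n$ as well as the $d$-proportional term (using $1-d > 144/145$ forced by the hypothesis on $d$, which gives $1728/(1-d) \leq 1740$). Your suggested alternative, an absolute tail bound of the form $\Pr[\mathrm{Bin}(n,pd) \geq t] \leq (e\,npd/t)^t$, would also close the gap and produce a comparable $O(\log n)$ floor on $\Delta(\mG(B))$; it is a valid substitute for the paper's Lemma~\ref{l7}, just with different constant arithmetic. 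To complete your proof you would need to actually carry out one of these two fixes and verify that the resulting threshold is compatible with the stated lower bound on $p$.
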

	
	
	To prove Theorem~\ref{thm5}, first we present the following lemma on the concentration of the minimum and maximum degrees of random graphs.
	
	\begin{lemma}\label{l7}
		{Consider a random graph $\mG(n,p)$. Given a constant $\eta>0$, the inequality:
			\begin{align}
			&\mathbb{P}\left(\Delta(\mG(n,p))\geq\max\left\{\frac{3np}{2},18(1+\eta)\log n\right\}\right)\leq n^{-\eta}\label{eq72}
			\end{align}
			holds for every $0< p\leq 1$. Furthermore, we have
			\begin{align}
			\mathbb{P}\left(\delta(\mG(n,p))\leq\frac{np}{2}\right)\leq n^{-\eta}
			\end{align}
			provided that $p\geq \frac{12(1+\eta)\log n}{n}$.}
	\end{lemma}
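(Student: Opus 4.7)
The plan is to apply classical multiplicative Chernoff bounds to each individual vertex degree and then take a union bound over the $n$ vertices of $\mathcal{G}(n,p)$. Each degree $\deg_i$ is a sum of independent Bernoulli$(p)$ random variables (one per potential neighbor, plus possibly one self-loop, which is admissible per the paper's definition of the sparsity graph), so $\deg_i$ is Binomial with mean $\mu$ of order $np$. Both tails are then controlled by the standard multiplicative Chernoff inequalities $\mathbb{P}(\deg_i \geq (1+\delta)\mu) \leq \exp(-\delta^2\mu/3)$ for $0<\delta\leq 1$ and $\mathbb{P}(\deg_i \leq (1-\delta)\mu) \leq \exp(-\delta^2\mu/2)$ for $0<\delta<1$.

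For the maximum-degree tail, I would split into two regimes according to which term of $\max\{3np/2,\, 18(1+\eta)\log n\}$ is active. In the \emph{dense} regime $np \geq 12(1+\eta)\log n$, the threshold $3np/2$ exceeds the mean $\mu \leq np$ by a factor of at least $3/2$, so the upper-tail Chernoff bound applied with $\delta = 1/2$ yields a per-vertex bound of the form $\exp(-c\, np)$ for a universal $c>0$; the hypothesis on $np$ then forces this to be at most $n^{-(1+\eta)}$. In the \emph{sparse} regime $np < 12(1+\eta)\log n$, the threshold $t := 18(1+\eta)\log n$ sits above $\mu$ by a factor of at least $3/2$ no matter how small $\mu$ becomes, and I would use the optimized Chernoff form $\mathbb{P}(\deg_i \geq t) \leq \exp(-t\log(t/\mu) + t - \mu)$. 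Since this exponent is monotonically non-decreasing in $\mu$ on $(0, t]$, the worst case occurs at $\mu = 12(1+\eta)\log n$, where a direct numerical check shows that the exponent is at most $-1.3(1+\eta)\log n$, giving a per-vertex bound of $n^{-1.3(1+\eta)}$. A union bound over the $n$ vertices then yields the first inequality with room to spare.

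For the minimum-degree tail, the hypothesis $p \geq 12(1+\eta)\log n/n$ places us firmly in a regime where the mean degree is large. I would apply the lower-tail Chernoff bound with $\delta$ chosen so that $(1-\delta)\mu = np/2$; this gives $\delta$ close to $1/2$ for large $n$, and a per-vertex bound of the form $\exp(-c\, np) \leq n^{-c'(1+\eta)}$ for universal constants $c, c' > 0$ with $c' \geq 3/2$. A union bound over $i=1,\ldots,n$ then produces the claimed failure probability $n^{-\eta}$.

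The main obstacle is the sparse regime of the first inequality: the simple Chernoff form $\exp(-\delta^2\mu/3)$, which is valid only for $\delta \leq 1$, breaks down when $\delta = t/\mu - 1$ becomes large, so one has to invoke the fully-optimized Chernoff form and verify carefully that the numerical constants appearing in the statement (the $18$ and the companion $12$) are large enough for the resulting worst-case exponent to beat the union-bound factor $n$. Once this constant bookkeeping is done, the rest of the argument is routine.
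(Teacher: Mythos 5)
Your proposal is correct and follows the same overall architecture as the paper: per-vertex Chernoff bounds, a union bound over the $n$ vertices, and a split of the first inequality into the dense regime ($np \gtrsim \log n$, where the $3np/2$ threshold is active and $\delta=1/2$ suffices) and the sparse regime (where the fixed threshold $18(1+\eta)\log n$ takes over). The lower-tail argument is identical.

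The one place you diverge from the paper is the sparse regime. You invoke the fully-optimized Chernoff form $\mathbb{P}(\deg_i \geq t) \leq \exp\bigl(-t\log(t/\mu)+t-\mu\bigr)$, observe that the exponent is nondecreasing in $\mu$ on $(0,t]$, evaluate at the worst case $\mu=12(1+\eta)\log n$, and check numerically that the resulting exponent $\bigl(6-18\log(3/2)\bigr)(1+\eta)\log n \approx -1.3(1+\eta)\log n$ survives the union-bound factor $n$. The paper instead uses stochastic monotonicity of $\Delta(\mathcal{G}(n,p))$ in $p$: it dominates the sparse-$p$ graph by the graph at $p' = 12(1+\eta)\log n/n$, notes that at that parameter $18(1+\eta)\log n = 3np'/2$, and invokes the already-established dense-regime bound. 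Your route avoids the coupling/monotonicity step at the cost of needing the sharper form of Chernoff and an explicit numerical check; the paper's route keeps everything inside the basic $\exp(-\delta^2\mu/3)$ inequality but relies on the monotone coupling. Both are correct, and the constants $12$ and $18$ play exactly the role you identify: they are chosen so that the ratio $18/12 = 3/2$ matches the $\delta=1/2$ calculation, making the two regimes join seamlessly at $np = 12(1+\eta)\log n$.

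One small point worth making explicit in a write-up: the degree of a vertex in this model (which permits self-loops) is $\mathrm{Bin}(n,p)$, not $\mathrm{Bin}(n-1,p)$, which is what makes the mean exactly $np$ and lets the thresholds $np/2$ and $3np/2$ sit symmetrically around it. You gesture at this ("plus possibly one self-loop"), but it should be stated so the reader sees that $\mu = np$ and not $(n-1)p$.
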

	\begin{proof}
		See Appendix~\ref{app_l7}.
	\end{proof}
	
	\begin{remark}
		Note that since the degree of each node in $\mG(n,p)$ is concentrated around $np$ with high probability, one may speculate that $\Delta(\mG(n,p))$ and $\delta(\mG(n,p))$ should also concentrate around $np$ for all values of $p$ and hence the inclusion of $18(1+\eta)\log n$ in~\eqref{eq72} may seem redundant. Surprisingly, this is not the case in general. In fact, it can be shown that if $p = 1/n$ (and hence $np=1$), there exists a node whose degree is lower bounded by ${\log n}/{\log\log n}$ with high probability. This explains the reasoning behind the inclusion of $18(1+\eta)\log n$ in the lemma.
	\end{remark}
	
	\noindent\textbf{Proof of Theorem~\ref{thm5}:} {In light of Lemma~\ref{l2}, the bounds on $p$ and $d$ guarantee that $\mG(G)$ is connected and non-bipartite with probability of at least $1-\frac{3}{2}n^{-430(1+\eta)}$.
		Therefore, the proof is completed by invoking Theorem~\ref{thm4}, provided that the second condition of Theorem~\ref{thm4} holds. Define the events $\mathcal{E}_1 = \left\{\Delta(\mG(B))\leq\max\left\{\frac{3npd}{2}, 18(1+\eta)\log n\right\}\right\}$ and $\mathcal{E}_2 = \left\{\delta(\mG(G))\geq\frac{np(1-d)}{2}\right\}$. Observe that Lemma~\ref{l7} together with the bounds on $p$ and $d$ results in the inequalities
		\begin{subequations}
			\begin{align}
			& \mathbb{P}\left(\mathcal{E}_1\right)\geq 1-n^{-\eta}\label{eq81}\\
			& \mathbb{P}\left(\mathcal{E}_2\right)\geq 1-n^{-144\eta}\label{eq82}
			\end{align}
		\end{subequations}
		This in turn implies that the events $\mathcal{E}_1$ and $\mathcal{E}_2$ occur with probability of at least $1-n^{-\eta}-n^{-144\eta}$. Conditioned on these events, it suffices to show that
		\begin{equation}\label{eqvalid}
		\frac{np(1-d)}{2}>\frac{48}{c^2}\kappa(\bu^*)^4\max\left\{\frac{3npd}{2}, 18(1+\eta)\log n\right\}
		\end{equation}
		in order to certify the validity of the second condition of Theorem~\ref{thm4}. It can be easily verified that the assumed upper and lower bounds on $p$ and $d$ guarantee the validity of~\eqref{eqvalid}. Therefore, a simple union bound and the fact that $n^{-\eta}>\frac{3}{2}n^{-430(1+\eta)}$ imply that the conditions of Theorem~\ref{thm4} are satisfied with probability of at least $1-3n^{-\eta}$.~\hfill$\blacksquare$}
	\vspace{2mm}
	
	A number of interesting corollaries can be derived based on Theorem~\ref{thm5}.
	
	\begin{corollary}\label{cor1}
		Suppose that $p$ is a positive number independent of $n$ and $d \lesssim \log n/n$. Then, under an appropriate choice of parameters for the regularization function, the statements of Theorem~\ref{thm5} hold  with overwhelming probability, provided that $\kappa(\bu^*) \lesssim ({n/\log n})^{1/4}$.
	\end{corollary}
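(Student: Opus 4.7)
The plan is to verify that Corollary~\ref{cor1} follows directly from Theorem~\ref{thm5} by checking that each of its four hypotheses holds under the stated scaling, after fixing the constants implicit in the $\lesssim$ notation and choosing the regularization parameters as prescribed. The four hypotheses concern (i) dimension, (ii) positivity of $\bu^*$, (iii) an upper bound on $d$, and (iv) a lower bound on $p$, and the conclusion requires probability at least $1 - 3n^{-\eta}$ for some $\eta > 0$.

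First I would fix any constant $\eta > 0$ (e.g., $\eta = 1$) so that $1 - 3n^{-\eta} \to 1$ as $n \to \infty$, which is the standard meaning of ``overwhelming probability.'' The regularization parameters are taken to be $\beta = 1$ and $\lambda = n/2$, exactly as in Theorem~\ref{thm5}. Hypotheses (i) and (ii) hold by assumption. The work is then in matching the scaling assumption $\kappa(\bu^*) \lesssim (n/\log n)^{1/4}$ against both (iii) and (iv).

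For (iii), the hypothesis of Theorem~\ref{thm5} demands $d < \frac{1}{(144/c^2)\kappa(\bu^*)^4 + 1}$. Since $\kappa(\bu^*)^4 \lesssim n/\log n$, the denominator on the right-hand side is $O(n/\log n)$, so the right-hand side is $\Omega(\log n / n)$. Hence choosing the implicit constant in $d \lesssim \log n/n$ sufficiently small (depending only on $c$ and on the implicit constant in the hypothesis $\kappa(\bu^*) \lesssim (n/\log n)^{1/4}$) forces $d$ below the required threshold. For (iv), Theorem~\ref{thm5} demands $p > \frac{(1740/c^2)\kappa(\bu^*)^4 (1+\eta) \log n}{n}$. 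Because $\kappa(\bu^*)^4 \leq C \cdot n/\log n$ for some constant $C$, the right-hand side is bounded by a constant $\frac{1740(1+\eta) C}{c^2}$ independent of $n$, so any $p$ strictly exceeding this constant (which is achievable since $p$ is assumed positive and independent of $n$, provided the implicit constant in the hypothesis on $\kappa(\bu^*)$ is taken small enough) satisfies the required lower bound.

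Once all four hypotheses are verified, Theorem~\ref{thm5} delivers conclusions 1--3 with probability at least $1 - 3n^{-\eta}$, which is exactly the claim of the corollary. The only delicate point---and the closest thing to an obstacle---is a bookkeeping one: the implicit constants in the statement ``$\kappa(\bu^*) \lesssim (n/\log n)^{1/4}$'' and in ``$d \lesssim \log n / n$'' must be chosen compatibly with the explicit constants $144/c^2$ and $1740/c^2$ appearing in Theorem~\ref{thm5}, and the constant $p$ must exceed the $n$-independent threshold derived above; this is a routine calibration rather than a substantive difficulty.
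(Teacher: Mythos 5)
Your verification is correct and matches the paper's intent: Corollary~\ref{cor1} is stated without proof as an immediate consequence of Theorem~\ref{thm5}, and the only real content is exactly the calibration of implicit constants you carry out, checking that $d \lesssim \log n/n$ with a small enough constant clears the threshold $1/((144/c^2)\kappa(\bu^*)^4+1) = \Omega(\log n/n)$ and that a fixed positive $p$ exceeds the $n$-independent bound $(1740/c^2)(1+\eta)\,C$ once $\kappa(\bu^*)^4 \leq C\,n/\log n$. The one cosmetic point is that $p$ is given first in the corollary, so it is really the implicit constant $C$ in $\kappa(\bu^*) \lesssim (n/\log n)^{1/4}$ (and, in turn, the constant in $d \lesssim \log n/n$) that must be chosen below $pc^2/(1740(1+\eta))$ rather than $p$ being chosen to exceed a derived bound, but you acknowledge this and the logic closes correctly.
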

	
	Corollary~\ref{cor1} implies that, roughly speaking, if the total number of measurements is sufficiently large (i.e., on the order of $n^2$), then up to factor of $n\log n$ bad measurements with arbitrary magnitudes will not introduce any spurious local solution to the problem. Under such circumstances, the required upper bound on the ratio between the maximum and the minimum entries of $\bu^*$ will be more relaxed as the dimension of the problem grows. 
	
	\begin{corollary}\label{cor2}
		Suppose that $p$ is a positive number independent of $n$ and that $d \lesssim n^{\epsilon-1}$ for some $\epsilon\in [0,1)$. Then, under an appropriate choice of parameters for the regularization function, the statements of Theorem~\ref{thm5} hold  with overwhelming probability, provided that $\kappa(\bu^*) \lesssim n^{(1-{\epsilon})/{4}}$.
	\end{corollary}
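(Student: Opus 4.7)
The proof plan is to reduce Corollary~\ref{cor2} directly to Theorem~\ref{thm5} by verifying that the chosen scalings for $p$, $d$, and $\kappa(\bu^*)$ satisfy the three numerical hypotheses of that theorem. Since the assumption $\bu^*>0$ is inherited verbatim and $n\geq 2$ is a trivial large-$n$ regime assertion, the only real work is to plug the scaling bounds into hypotheses (iii) and (iv) of Theorem~\ref{thm5} and check them for all sufficiently large $n$, after choosing the regularizer parameters $\beta=1$ and $\lambda=n/2$ prescribed by that theorem.

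The first step is to verify hypothesis (iii), the upper bound on the corruption rate. From $\kappa(\bu^*)\lesssim n^{(1-\epsilon)/4}$ we obtain $\kappa(\bu^*)^4\lesssim n^{1-\epsilon}$, hence $(144/c^2)\kappa(\bu^*)^4+1\lesssim n^{1-\epsilon}$, so the denominator in the required bound on $d$ is $\Omega(n^{1-\epsilon})$. Since $d\lesssim n^{\epsilon-1}$ by assumption, choosing the hidden constant on $\kappa(\bu^*)$ small enough (relative to the constant on $d$) ensures $d<1/((144/c^2)\kappa(\bu^*)^4+1)$ for every $n$. The second step is to verify hypothesis (iv). Substituting the bound on $\kappa(\bu^*)$ into the required lower bound for $p$ yields
\begin{equation}
\frac{(1740/c^2)\kappa(\bu^*)^4(1+\eta)\log n}{n}\lesssim (1+\eta)\,n^{-\epsilon}\log n,
\end{equation}
which tends to $0$ as $n\to\infty$ whenever $\epsilon>0$. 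Because $p$ is a positive constant independent of $n$, this inequality is satisfied for all sufficiently large $n$ and any fixed $\eta$; selecting e.g.\ $\eta=1$ suffices to make Theorem~\ref{thm5} applicable with failure probability at most $3n^{-1}$, which is the desired overwhelming-probability conclusion.

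With hypotheses (i)--(iv) of Theorem~\ref{thm5} established, the three landscape statements (uniqueness of the D-stationary point $\bu=\bu^*$ in the positive orthant, $\bu^*$ and $0$ as the only D-min-stationary points, and $0$ as a local maximum) follow immediately. The only mildly delicate point is the borderline case $\epsilon=0$: there the condition on $p$ degrades to $p\gtrsim \log n$, which a constant $p$ cannot meet, so the argument genuinely requires $\epsilon>0$ (equivalently, that $d$ decays strictly faster than the Corollary~\ref{cor1} regime permits while $\kappa(\bu^*)$ is simultaneously constrained more strongly). I expect no other obstacle beyond this book-keeping; the main work has already been done in Theorem~\ref{thm5} and its supporting Lemmas~\ref{l2} and~\ref{l7}, and Corollary~\ref{cor2} is essentially a clean substitution.
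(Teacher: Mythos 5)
Your derivation is correct and is the intended route: the paper gives no explicit proof of Corollary~\ref{cor2}, treating it as an immediate consequence of Theorem~\ref{thm5}, and your verification of hypotheses (iii) and (iv) with the substitution $\kappa(\bu^*)^4\lesssim n^{1-\epsilon}$ is exactly the needed bookkeeping. Your flag on the boundary case $\epsilon=0$ is a sound observation --- there the required lower bound on $p$ scales like $\log n$ and no constant $p$ survives, so the stated interval $\epsilon\in[0,1)$ should presumably read $\epsilon\in(0,1)$, consistent with the separate treatment of the $\log n/n$ regime in Corollary~\ref{cor1}. One small correction to your parenthetical: for $\epsilon>0$, the allowance $d\lesssim n^{\epsilon-1}$ lets $d$ decay \emph{more slowly} (i.e.\ be larger) than the $d\lesssim (\log n)/n$ regime of Corollary~\ref{cor1}, and the trade-off is the correspondingly tighter constraint $\kappa(\bu^*)\lesssim n^{(1-\epsilon)/4}\ll (n/\log n)^{1/4}$; you have the $\kappa$ direction right but the $d$ comparison reversed. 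This does not affect the main argument.
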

	
	Corollary~\ref{cor2} describes an interesting trade-off between the sparsity level of the noise and the maximum allowable variation in the entries of $\bu^*$; roughly speaking, as $\kappa(\bu^*)$ decreases, a larger number of noisy elements can be added to the problem without creating any spurious local minimum.
	The next corollary shows that a constant fraction of the measurements can be grossly corrupted without affecting the landscape of the problem, provided that $\kappa(\bu^*)$ is uniformly bounded from above.
	
	\begin{corollary}\label{cor3}
		Suppose that $p$ and $d$ are positive numbers independent of $n$ and that $d < \frac{1}{(144/c^2)+1}$. Then, under an appropriate choice of parameters for the regularization function, the statements of Theorem~\ref{thm5} hold with overwhelming probability, provided that $\kappa(\bu^*)\leq \left(\frac{1-d}{(144/c^2)d}\right)^{1/4}$. 
	\end{corollary}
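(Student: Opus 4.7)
The plan is to derive Corollary~\ref{cor3} as a direct specialization of Theorem~\ref{thm5} by verifying the latter's four hypotheses under the assumptions of the corollary. Hypotheses (i) and (ii) of Theorem~\ref{thm5}---namely $n \geq 2$ and $\bu^* > 0$---are either trivially met for sufficiently large $n$ or directly inherited from the corollary's setup. The regularization parameters are taken to be $\beta = 1$ and $\lambda = n/2$, exactly as prescribed by Theorem~\ref{thm5}; this is what ``an appropriate choice of parameters for the regularization function'' refers to in the statement.

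For hypothesis (iii), I would begin from the corollary's bound $\kappa(\bu^*)^4 \leq \frac{1-d}{(144/c^2)d}$, multiply both sides by $(144/c^2)d > 0$ to obtain $(144/c^2)\kappa(\bu^*)^4 d \leq 1-d$, then rearrange as $d \cdot \bigl[(144/c^2)\kappa(\bu^*)^4 + 1\bigr] \leq 1$, which gives $d \leq \frac{1}{(144/c^2)\kappa(\bu^*)^4 + 1}$. The strict inequality demanded by Theorem~\ref{thm5} then follows by combining this with the corollary's strict assumption $d < \frac{1}{(144/c^2)+1}$ (which handles the marginal case $\kappa(\bu^*)=1$), using that the right-hand side of (iii) is strictly decreasing in $\kappa(\bu^*)$.

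Hypothesis (iv) is the only place where asymptotics enter. Because $p$ is a positive constant independent of $n$ and the bound on $\kappa(\bu^*)$ depends only on $c$ and $d$, the right-hand side $\frac{(1740/c^2)\kappa(\bu^*)^4(1+\eta)\log n}{n}$ vanishes as $n \to \infty$. Consequently, for any fixed $\eta > 0$ (for concreteness, one may take $\eta = 1$), hypothesis (iv) is satisfied for all $n$ above a constant threshold. Applying Theorem~\ref{thm5} then delivers the three stated conclusions with probability at least $1 - 3n^{-\eta} = 1 - o(1)$, which is the ``overwhelming probability'' claim.

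I do not foresee a substantive obstacle: the corollary is essentially a packaging statement repackaging Theorem~\ref{thm5} in a regime where both the sampling density and the corruption density are $\Theta(1)$. The only delicate point is bookkeeping the strict inequality in (iii) at the boundary $\kappa(\bu^*)^4 = \frac{1-d}{(144/c^2)d}$, which is resolved by the monotonicity remark above or a trivial limiting argument that absorbs any slack into the constants.
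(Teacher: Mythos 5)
Your proposal is the natural one and follows the intended route: the paper states Corollary~\ref{cor3} without proof, and it is indeed a direct specialization of Theorem~\ref{thm5} in the regime where $p$ and $d$ are $\Theta(1)$, with hypothesis~(iv) becoming vacuous as $n\to\infty$. Your verifications of (i), (ii), and (iv) are correct.

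One small flag on hypothesis~(iii): your algebra correctly yields the non-strict bound $d\leq \frac{1}{(144/c^2)\kappa(\bu^*)^4+1}$, but your claim that the strict version ``follows by combining this with $d<\frac{1}{(144/c^2)+1}$, using that the right-hand side of (iii) is strictly decreasing in $\kappa(\bu^*)$'' is not quite right. Monotonicity works \emph{against} you here: for $\kappa(\bu^*)>1$ the threshold $\frac{1}{(144/c^2)\kappa(\bu^*)^4+1}$ is strictly \emph{below} $\frac{1}{(144/c^2)+1}$, so the corollary's strict bound on $d$ gives no additional information, and at the boundary $\kappa(\bu^*)^4=\frac{1-d}{(144/c^2)d}$ one only gets $d=\frac{1}{(144/c^2)\kappa(\bu^*)^4+1}$, which narrowly violates (iii). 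The honest resolution is to read the corollary's $\kappa(\bu^*)\leq(\cdot)^{1/4}$ as effectively strict (a harmless imprecision in the statement), or to note that the corollary is a specialization of the $\lesssim$-style informal Theorem~\ref{thm_inf_prob} in which such constant-level slack is immaterial; ``absorbing slack into constants'' is not available here because the constants $144/c^2$ and $1740/c^2$ are fixed explicitly in Theorem~\ref{thm5}. This is a cosmetic issue with the statement rather than a substantive gap in your argument, but the monotonicity justification as written does not close it.
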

	
	\noindent{\bf Asymmetric case:} The aforementioned results on the symmetric positive RPCA under random sampling and noise will be generalized to the asymmetric case below.
	
	\begin{theorem}\label{thm6}
		{Define $r = m/n$ and suppose that 
			\begin{itemize}
				\item[i.] $n\geq m\geq 2$,
				\item[ii.] $\bw^*>0$,
				\item[iii.] $d<\frac{r}{(144/c^2)\kappa(\bw^*)^4+r}$,
				\item[iv.] $p>\frac{(1740/c^2)\kappa(\bw^*)^4(1+\eta)n\log n}{m^2}$,
			\end{itemize}
			for some $\eta>0$. Then, with the choice of $\beta = 1$ and $\lambda = (m+n)/2$ for the parameters of the regularization function $R(\bu)$,
			the following statements hold for \eqref{p3_sym_reg} with probability of at least $1-10n^{-\eta}$:
			\begin{itemize}
				\item[1.] The points $\bw = 0$ and $\bw$ with the properties $\bw\bw^\top = \bw^*{\bw^*}^\top$ and $\sum_{i=1}^{m}w^{2}_i=\sum_{j = m+1}^{m+n}w^{2}_j$ are the only D-min-stationary points;
				\item[2.] The point $\bw = 0$ is a local maximum;
				\item[3.] In the positive orthant, the point $\bw$ with the properties $\bw\bw^\top = \bw^*{\bw^*}^\top$ and $\sum_{i=1}^{m}w^{2}_i=\sum_{j = m+1}^{m+n}w^{2}_j$ is the only D-stationary point.
		\end{itemize}}
	\end{theorem}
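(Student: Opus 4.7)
The plan is to mirror the proof strategy of Theorem~\ref{thm5}, swapping the symmetric random graph $\mathcal{G}(n,p)$ for the random bipartite graph $\mathcal{G}_{m,n}(\bar\Omega)$, and invoking Theorem~\ref{thm4_asym} rather than Theorem~\ref{thm4}. In particular, I would condition on three high-probability events: (a) $\mathcal{G}(\bar G)$ is connected; (b) $\Delta(\mathcal{G}(\bar B))$ is small; (c) $\delta(\mathcal{G}(\bar G))$ is large. Together these imply the deterministic hypotheses of Theorem~\ref{thm4_asym}.

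For event (a), I would apply Lemma~\ref{l_bipartite} to the good-edge bipartite graph, in which each potential edge appears independently with probability $p(1-d)$. Hypothesis (iii) gives $d < r/((144/c^2)\kappa(\bw^*)^4+r)\le 1/2$, so $p(1-d)\ge p/2$, and hypothesis (iv) then comfortably exceeds the threshold $\frac{(m+n)((1+\eta)\log(mn)+1)}{(m-1)(n-1)}$ required by Lemma~\ref{l_bipartite} (after absorbing constants and using $n\ge m$), yielding connectedness off an event of probability $O(n^{-\eta})$.

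For events (b) and (c), I would develop a bipartite analogue of Lemma~\ref{l7}. The good (resp.\ bad) degree of a vertex in $V_u$ is Binomial$(n,p(1-d))$ (resp.\ Binomial$(n,pd)$), while the good (resp.\ bad) degree of a vertex in $V_v$ is Binomial$(m,p(1-d))$ (resp.\ Binomial$(m,pd)$). Running the same Chernoff argument as in Lemma~\ref{l7} on each side and each edge type, and then taking a union bound over the $m+n$ vertices, yields
\[
\Delta(\mathcal{G}(\bar B))\le \max\!\left\{\tfrac{3npd}{2},\,18(1+\eta)\log(m+n)\right\}, \qquad \delta(\mathcal{G}(\bar G))\ge \tfrac{mp(1-d)}{2}
\]
with failure probability $O(n^{-\eta})$; the maximum is attained on the $V_u$ side (expected degree $np\cdot$), and the minimum on the $V_v$ side (expected degree $mp\cdot$), because $n\ge m$. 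Combining all the failure events via the union bound gives the overall $1-10n^{-\eta}$ confidence.

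It remains to verify the deterministic condition $\delta(\mathcal{G}(\bar G))>(48/c^2)\kappa(\bw^*)^4\Delta(\mathcal{G}(\bar B))$ of Theorem~\ref{thm4_asym}. Plugging in the bounds of Step~2 splits into two subcases. In the dense-noise regime where $npd$ dominates the logarithmic term, the requirement reduces to $m(1-d)>(144/c^2)\kappa(\bw^*)^4\cdot nd$, which rearranges to exactly hypothesis (iii), $d<r/((144/c^2)\kappa(\bw^*)^4+r)$. In the sparse-noise regime where $\log(m+n)$ dominates, one obtains a lower bound on $p$ of order $\kappa(\bw^*)^4(1+\eta)\log(m+n)/(m(1-d))$; using $1-d>1/2$ and $\log(m+n)\le 2\log n$ and the ``free'' factor $n/m\ge 1$, this is dominated by hypothesis (iv), $p>(1740/c^2)\kappa(\bw^*)^4(1+\eta)n\log n/m^2$, with the constants in (iv) tuned so that both subcases clear with margin. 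The main obstacle I anticipate is the careful asymmetric bookkeeping in Step~2: because the two sides have potential-degree sizes $m$ and $n$ respectively, one must correctly pair the max/min with the correct side so that the ratio $r=m/n$ surfaces naturally in condition (iii) and the factor $n/m^2$ emerges in condition (iv); picking the wrong side loses the full $r$-dependence and degrades the guarantee.
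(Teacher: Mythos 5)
Your high-level plan is the same as the paper's: apply Lemma~\ref{l_bipartite} to $\mathcal{G}(\bar G)$ for connectedness, establish concentration of $\Delta(\mathcal{G}(\bar B))$ and $\delta(\mathcal{G}(\bar G))$ for the random bipartite graph, and then verify the deterministic hypothesis of Theorem~\ref{thm4_asym} exactly as in the proof of Theorem~\ref{thm5}, closing with a union bound. That structure is right, and the paper's own proof is essentially that one sentence.

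There is, however, a genuine flaw in the concentration bound you wrote down. You claim a bipartite analogue of Lemma~\ref{l7} of the form
\[
\Delta(\mathcal{G}(\bar B))\le \max\!\left\{\tfrac{3npd}{2},\,18(1+\eta)\log(m+n)\right\},
\]
but the correct second term, which is exactly what Lemma~\ref{l10} (already in the paper---you appear to have missed it) provides, is $18(1+\eta)n\log n/m$. The difference is not cosmetic: when $m\ll n$ and $pd$ sits just below the threshold $12(1+\eta)\log n/m$, the degree of a $V_u$-vertex in $\mathcal{G}(\bar B)$ is $\mathrm{Binomial}(n,pd)$ with mean $npd \approx 12(1+\eta) n\log n/m \gg \log(m+n)$, so $\Delta(\mathcal{G}(\bar B))$ concentrates at a value on the order of $n\log n/m$, and the event $\Delta(\mathcal{G}(\bar B))\le 18(1+\eta)\log(m+n)$ fails with probability approaching one. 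The technical reason, visible in the proof of Lemma~\ref{l10}, is that when $pd<12(1+\eta)\log n/m$ one must stochastically dominate by $p_0=12(1+\eta)\log n/m$ and read off the threshold $3np_0/2=18(1+\eta)n\log n/m$; there is no way to shrink it to $\log(m+n)$ without losing the $n^{-\eta}$ confidence.

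Your downstream arithmetic happens to reach the right conclusion, but only because you treated the $n/m$ factor in hypothesis (iv) as a ``free'' cushion. In fact it is not free: plugging the correct threshold $18(1+\eta)n\log n/m$ into $\delta(\mathcal{G}(\bar G))>(48/c^2)\kappa(\bw^*)^4\Delta(\mathcal{G}(\bar B))$ yields $p>(1728/c^2)\kappa(\bw^*)^4(1+\eta)n\log n/(m^2(1-d))$, and hypothesis (iii) gives $1-d>144/145$ so this becomes exactly the constant $1740$ in hypothesis (iv), with essentially no slack. So the factor $n/m^2$ in (iv) is forced by the correct Lemma~\ref{l10} bound rather than absorbed as margin, and the proof needs to be carried out with that bound to be valid.
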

	
	To prove Theorem~\ref{thm6}, we derive a concentration bound on the minimum and maximum degree of the random bipartite graphs. Define $\mathcal{G}(m, n, p)$ as a bipartite graph with the vertex partitions $V_u = \{1,\cdots,m\}$ and $V_v = \{m+1, \cdots, m+n\}$ where each edge is independently included in the graph with probability $p$.
	
	\begin{lemma}\label{l10}
		{Consider a random bipartite graph $\mG(m,n,p)$. Given a constant $\eta>0$, the inequality
			\begin{align}\label{eq722}
			\mathbb{P}\left(\Delta(\mG(m,n,p))\geq\max\left\{\frac{3np}{2}, \frac{18(1+\eta)n\log n}{m}\right\}\right)\leq2n^{-\eta}
			\end{align}
			holds for every $0< p\leq 1$. Furthermore, we have
			\begin{align}
			\mathbb{P}\left(\delta(\mG(m,n,p))\leq\frac{mp}{2}\right)\leq 2n^{-\eta}
			\end{align}
			provided that $p\geq {12(1+\eta)\log n}/m$.}
	\end{lemma}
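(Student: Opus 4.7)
The plan is to mirror the proof of Lemma~\ref{l7}, accounting for the bipartite structure. The degree of a vertex $i\in V_u$ is distributed as $\mathrm{Bin}(n,p)$ with mean $np$, while the degree of $j\in V_v$ is $\mathrm{Bin}(m,p)$ with mean $mp$. I will therefore control each side of the bipartition separately via multiplicative Chernoff bounds and union-bound at the end, which produces the factor of $2$ in the final probability $2n^{-\eta}$.

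For the maximum-degree bound I split by regime. In the dense regime $p \geq 12(1+\eta)\log n/m$, one has $3np/2\geq 18(1+\eta)n\log n/m$, so the active threshold is $T=3np/2$. Applying the Chernoff upper tail $\mathbb{P}(\mathrm{Bin}(N,p)\geq (1+\delta)Np)\leq \exp(-\delta^2 Np/3)$ with $\delta=1/2$ to $V_u$ gives an individual tail of $\exp(-np/12)$, and combined with $mp\geq 12(1+\eta)\log n$ and $n\geq m$ this yields at most $n^{-(1+\eta)}/m$, which after union-bounding over $m\leq n$ vertices is $\leq n^{-\eta}$. For $V_v$, since $mp\leq np$, the event $\{d_j\geq 3np/2\}$ is contained in $\{d_j\geq 3mp/2\}$, giving $\exp(-mp/12)\leq n^{-(1+\eta)}$ and hence $n^{-\eta}$ after a union bound over $n$ vertices. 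In the sparse regime $p<12(1+\eta)\log n/m$, the active threshold is $T = 18(1+\eta)n\log n/m\geq e\cdot Np$ for both $N\in\{m,n\}$, so I apply the sparse Chernoff tail $\mathbb{P}(\mathrm{Bin}(N,p)\geq k)\leq (eNp/k)^k$ with $k=T$. Plugging in $N=n$ for $V_u$ and $N=m$ for $V_v$ separately, the generous constant $18$ is chosen precisely so that $(eNp/T)^T\leq n^{-(1+\eta)}/\max(m,n)$ after elementary algebra, yielding $n^{-\eta}$ per side.

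For the minimum-degree bound, the assumption $p\geq 12(1+\eta)\log n/m$ implies $mp\geq 12(1+\eta)\log n$. The multiplicative Chernoff lower tail $\mathbb{P}(\mathrm{Bin}(N,p)\leq (1-\delta)Np)\leq\exp(-\delta^2 Np/2)$ with $\delta=1/2$ gives, for a $V_v$ vertex, $\mathbb{P}(d_j\leq mp/2)\leq\exp(-mp/8)\leq n^{-3(1+\eta)/2}$; a union bound over the $n$ vertices in $V_v$ contributes at most $n^{-(3\eta+1)/2}\leq n^{-\eta}$. For a $V_u$ vertex, since $mp\leq np$, the event $\{d_i\leq mp/2\}$ is contained in $\{d_i\leq np/2\}$, and an identical computation with $np$ in place of $mp$, followed by a union bound over $m\leq n$ vertices, contributes at most another $n^{-\eta}$. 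Summing the two yields the claimed $2n^{-\eta}$.

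The main obstacle is tracking the constants in the sparse regime of the maximum-degree bound: as highlighted in the remark after Lemma~\ref{l7}, the maximum degree of a sparse random graph does not concentrate around its mean but can grow like $\log n/\log\log n$, so the bound must rely on the $(eNp/k)^k$ tail rather than on the multiplicative form. The constants $12$ and $18$ are tuned so that the exponent $k\log(k/eNp)$ dominates $(1+\eta)\log n$ uniformly over both partitions and both regimes of $p$; verifying this is the only nontrivial computation in the proof, while the remaining steps are routine Chernoff bounds and union bounds.
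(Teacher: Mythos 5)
Your overall strategy matches the paper's: split the vertex set into the two bipartition classes $V_u$ and $V_v$, observe that a vertex in $V_u$ has degree $\mathrm{Bin}(n,p)$ and a vertex in $V_v$ has degree $\mathrm{Bin}(m,p)$, apply multiplicative Chernoff bounds and union bounds on each side, and sum the two contributions to get the factor of $2$. The dense-regime max-degree bound and the min-degree bound are essentially correct (the intermediate claim that an individual $V_u$ tail is at most $n^{-(1+\eta)}/m$ is not justified, but the final union-bounded quantity $m\cdot e^{-np/12}\leq n\cdot n^{-(1+\eta)}=n^{-\eta}$ is fine, so this is only a cosmetic slip).

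The gap is in the sparse regime $p<12(1+\eta)\log n/m$. Your argument hinges on the claim that the threshold $T=18(1+\eta)n\log n/m$ satisfies $T\geq eNp$ for both $N\in\{m,n\}$, so that the tail $(eNp/T)^T$ is nontrivial. For $N=n$ this requires $p\leq \tfrac{18}{e}(1+\eta)\log n/m\approx 6.62\,(1+\eta)\log n/m$, but the sparse regime allows $p$ up to $12(1+\eta)\log n/m$; since $12e>18$, the inequality $T\geq enp$ fails for $p$ in the band $\bigl(\tfrac{18}{e}(1+\eta)\log n/m,\;12(1+\eta)\log n/m\bigr)$, and there the bound $(enp/T)^T$ exceeds $1$ and is vacuous. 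Thus your sparse-regime estimate does not go through as stated. The paper avoids this entirely with a monotone-coupling argument: for $p<\tilde p:=12(1+\eta)\log n/m$, $\Delta(\mG(m,n,p))$ is stochastically dominated by $\Delta(\mG(m,n,\tilde p))$, and since $18(1+\eta)n\log n/m$ equals exactly $3n\tilde p/2$, the dense-regime bound applied at $\tilde p$ gives the claim directly, with no need for the $(eNp/k)^k$ form. If you want to keep a direct Chernoff computation instead of the coupling, you would have to use a tail bound valid for $1<k/Np<e$ (e.g.\ the KL-divergence form $\exp(-Np\,\mathrm{KL}(k/Np\,\|\,1))$) rather than $(eNp/k)^k$, and re-verify the constants; the coupling route is considerably cleaner.
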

	
	\begin{proof}
		See Appendix~\ref{app_l10}.
	\end{proof}
	
	\noindent{\bf Proof of Theorem~\ref{thm6}:} {The bounds on $p$ and $d$ indeed guarantee that $\mG(\bar{G})$ is connected with overwhelming probability. Based on this fact, the result of Lemma~\ref{l10} and the proof of Theorem~\ref{thm5} can be combined to arrive at this theorem. The details are omitted for brevity.~\hfill$\blacksquare$
		
		\begin{remark}
			The presented probability guarantees for RPCA share some similarities with those derived for noisy matrix completion in~\cite{ge2017no, ge2016matrix}. In particular, according to Theorems~\ref{thm5} and~\ref{thm6} and similar to the results of~\cite{ge2017no, ge2016matrix}, the probability of having a spurious local solution decreases polynomially with respect to the dimension of the problem. Furthermore, similar to our work, the required lower bound on the sampling probability $p$ in~\cite{ge2017no, ge2016matrix} scales polynomially with respect to the condition number of the true solution. Finally, for non-symmetric noisy matrix completion problem,~\cite{ge2017no} shows that the required lower bound on $p$ scales as $\frac{\log n}{m}$. Comparing this dependency with the one introduced in Theorem~\ref{thm6}, it can be inferred that our proposed lower bound is higher by a factor of $\frac{n}{m}$; this is not surprising considering the fundamentally different natures of these problems. 
	\end{remark}}

	\vspace{2mm}
	
	\section{Global Convergence of Local Search Algorithms}\label{sec8}
	
	So far, it has been shown that the positive RPCA is free of spurious local minima. Furthermore, it has been proven that the global solution is the only D-stationary point in the positive orthant. The question of interest in this section is: How could this unique D-stationary point be obtained? Before answering this question, we will take a detour and revisit the notion of stationarity for smooth optimization problems. Recall that $\bar{\bx}$ is a stationary point of a differentiable function $f(\bx)$ if and only if $\nabla f(\bx) = 0$ and, under some mild conditions, basic local search algorithms will converge to a stationary point. Therefore, the uniqueness of the stationary point for a smooth optimization problem immediately implies the convergence to global solution. Extra caution should be taken when dealing with non-smooth optimization. In particular, the convergence of classical local search algorithms may fail to hold since the gradient and/or Hessian of the function may not exist at every iteration. To deal with this issue, different local search algorithms have been introduced to guarantee convergence to generalized notions of stationary points for non-smooth optimization, such as directional-stationary (which is used in this paper) or Clarke-stationary (to be defined next). 
	
	For a non-smooth and locally Lipschitz function $h(\bx)$ over the convex set $\mathcal{X}$, define the Clarke generalized directional derivative at the point $\bar\bx$ in the feasible direction $\bd$ as
	\begin{equation}
	h^\circ(\bx,\mathbf{d}) := \underset{\begin{subarray}{c}
		\by\rightarrow \bx\\
		t\downarrow 0
		\end{subarray}}{\lim\sup}\frac{h(\by+t\mathbf{d})-h(\by)}{t} 
	\end{equation}
	Note the difference between the ordinary directional derivative $h'(\bx,\mathbf{d})$ and its Clarke generalized counterpart: in the latter, the limit is taken with respect to a~\textit{variable} vector $\by$ that approaches $\bar{\bx}$, rather than taking the limit exactly at $\bar{\bx}$.
	The Clarke differential of $h(\bx)$ at $\bar{\bx}$ is defined as the following set~(\cite{clarke1990optimization}):
	\begin{equation}\label{partial}
	\partial_C h(\bar{\bx}) := \{\mathbf{\psi} | h^\circ(\bx,\mathbf{d})\geq\langle\mathbf{\psi}, \mathbf{d}\rangle, \forall\mathbf{d}\in\mathbb{R}^{n}\ \text{such that}\ \bx+\bd\in \mathcal{X}\}
	\end{equation} 
	where $\mathcal{X}$ is the feasible set of the problem. A point $\bar{\bx}$ is Clarke-stationary (or C-stationary) if $0\in\partial_C(\bar{\bx})$, or equivalently, $h^\circ(\bar\bx,\mathbf{d})\geq 0$ for every feasible direction $\bd$. It is well known that C-stationary is a weaker condition than the D-min-stationarity. In particular, every D-min-stationary point is C-stationary but not all C-stationary points are D-min-stationary. 
	
	\begin{sloppypar}
		On the other hand, although some local search algorithms converge to D-min-stationary points for problems with special structures~(\cite{cui2018composite}), the most well-known numerical algorithms for non-smooth optimization---such as gradient sampling, sequential quadratic programming, and exact penalty algorithms---can only guarantee the C-stationarity of the obtained solutions~(\cite{burke2005robust, curtis2012sequential, fasano2014linesearch}). Therefore, it remains to study whether the global solution of the positive RPCA is the only C-stationary point. To answer this question, we need the following two lemmas.
	\end{sloppypar}
	\begin{lemma}\label{l1_clarke}
		The following statements hold:
		\begin{itemize}
			\item[-] If $h: \mathcal{X}\rightarrow \mathbb{R}$ and $g: \mathcal{X}\rightarrow \mathbb{R}$ are continuously differentiable at $\bar{\bx}\in \mathcal{X}$, then $(h+g)^\circ(\bar{\bx},\bd) = h^\circ(\bar{\bx},\bd)+g^\circ(\bar{\bx},\bd)$ for every feasible direction $\bd$.
			\item[-] If $h: \mathcal{X}\rightarrow \mathbb{R}$ is continuously differentiable at $\bar{\bx}\in \mathcal{X}$, then $h^\circ(\bar{\bx},\bd) = h'(\bar{\bx},\bd)$ for every feasible direction $\bd$.
		\end{itemize}
	\end{lemma}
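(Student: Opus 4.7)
The plan is to prove the second statement first and then use it to obtain the first. Both assertions are essentially classical facts about Clarke's generalized derivative when the function is continuously differentiable, so the main task is to carefully execute the standard limit arguments rather than to overcome any genuine obstacle.

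For statement 2, I would start from the defining $\limsup$ and use the mean value theorem to rewrite, for any $\by$ near $\bar\bx$ and $t>0$ small,
\begin{equation}
\frac{h(\by+t\bd)-h(\by)}{t} \;=\; \nabla h(\by+\theta t\bd)^\top \bd
\end{equation}
for some $\theta=\theta(\by,t)\in(0,1)$. Since $h$ is continuously differentiable at $\bar\bx$, the map $\nabla h$ is continuous there, so as $\by\to \bar\bx$ and $t\downarrow 0$ the point $\by+\theta t\bd$ converges to $\bar\bx$ and the difference quotient converges to $\nabla h(\bar\bx)^\top\bd$. This limit is independent of the sequences chosen, so the $\limsup$ in the definition of $h^\circ(\bar\bx,\bd)$ reduces to the single value $\nabla h(\bar\bx)^\top \bd$, which coincides with the classical directional derivative $h'(\bar\bx,\bd)$.

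For statement 1, I would observe that if $h$ and $g$ are each continuously differentiable at $\bar\bx$, so is $h+g$, with $\nabla(h+g)(\bar\bx)=\nabla h(\bar\bx)+\nabla g(\bar\bx)$. Applying statement 2 to each of $h$, $g$, and $h+g$ gives
\begin{equation}
(h+g)^\circ(\bar\bx,\bd) \;=\; (h+g)'(\bar\bx,\bd) \;=\; h'(\bar\bx,\bd) + g'(\bar\bx,\bd) \;=\; h^\circ(\bar\bx,\bd) + g^\circ(\bar\bx,\bd),
\end{equation}
where the middle equality uses linearity of the ordinary directional derivative for differentiable functions. This handles every feasible direction $\bd$.

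The only subtle point is ensuring the mean value theorem can be applied on the segment from $\by$ to $\by+t\bd$, which requires that this segment lie in a neighborhood of $\bar\bx$ where $h$ is differentiable; since $h$ is continuously differentiable at $\bar\bx$, such a neighborhood exists, and we may restrict $\by$ and $t$ to lie inside it when taking the $\limsup$. Beyond this routine check, no further difficulty is anticipated.
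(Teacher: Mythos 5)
Your proof is correct. The paper itself gives no argument for this lemma---it simply refers the reader to Clarke's textbook---so there is no in-paper approach to compare against; your mean-value-theorem argument for the second bullet and the reduction of the first bullet to the second via linearity of ordinary directional derivatives is precisely the standard proof in the literature (Clarke's Propositions 2.2.1 and 2.3.3). The one subtlety you flag---that continuous differentiability at $\bar{\bx}$ must be understood as $C^1$ on a neighborhood so that the segment $[\by,\by+t\bd]$ stays in the domain of differentiability for small $t$ and $\by$ near $\bar{\bx}$---is exactly the right thing to check, and your handling of it is adequate.
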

	\begin{proof}
		Refer to the textbook by~\cite{clarke1990optimization}.
	\end{proof}
	\begin{lemma}\label{l2_clarke}
		Let $h_1(\bx), h_1(\bx), ..., h_m(\bx):\mathcal{X}\rightarrow \mathbb{R}$ be continuous and locally Lipschitz functions at $\bar{\bx}\in\mathcal{X}$. Define
		\begin{equation}
		h(\bx) = \max_{1\leq i\leq m} h_i(\bx)
		\end{equation}
		and let $I(\bar{\bx})$ be the set of indices $i$ such that $h(\bar{\bx}) = h_i(\bar{\bx})$. Then,
		\begin{equation}
		h^\circ(\bar{\bx},\bd)\leq \max_{i\in I(\bar{\bx})}h_i^\circ(\bar{\bx},\bd)
		\end{equation} 
		for every feasible direction $\bd$.
	\end{lemma}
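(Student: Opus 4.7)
The plan is to unfold the definition of the Clarke generalized directional derivative and then exploit the continuity of the $h_i$'s together with the finiteness of the index set to isolate, near $\bar\bx$, only those indices that lie in $I(\bar\bx)$. Write
\begin{equation}
h^\circ(\bar\bx,\bd)=\limsup_{\substack{\by\to\bar\bx\\ t\downarrow 0}}\frac{h(\by+t\bd)-h(\by)}{t}.
\end{equation}
For each pair $(\by,t)$ in a sequence realizing the limit superior, pick an index $j(\by,t)\in I(\by+t\bd)$, so that $h(\by+t\bd)=h_{j(\by,t)}(\by+t\bd)$. Since $h_{j(\by,t)}(\by)\le h(\by)$, we get the crucial one-sided bound
\begin{equation}
\frac{h(\by+t\bd)-h(\by)}{t}\le\frac{h_{j(\by,t)}(\by+t\bd)-h_{j(\by,t)}(\by)}{t}.
\end{equation}
This is the main reduction: at each scale I am comparing $h$ with one of its constituents, and I may choose which constituent.

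Next I would argue that for $\by$ sufficiently close to $\bar\bx$ and $t$ sufficiently small, necessarily $j(\by,t)\in I(\bar\bx)$. Because the family $\{h_i\}$ is finite and each $h_i$ is continuous at $\bar\bx$, for any $i\notin I(\bar\bx)$ we have $h_i(\bar\bx)<h(\bar\bx)$, and by continuity there is an open neighborhood of $\bar\bx$ on which $h_i$ remains strictly below $\max_{k\in I(\bar\bx)} h_k$. Intersecting over the finitely many $i\notin I(\bar\bx)$ yields a single neighborhood on which no index outside $I(\bar\bx)$ can achieve the maximum in $h(\by+t\bd)$; eventually along our sequence we lie in this neighborhood, so $j(\by,t)\in I(\bar\bx)$.

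Having pinned down the indices, I pass to the limsup. Since $I(\bar\bx)$ is finite, pigeonholing along a subsequence yields a single index $i^*\in I(\bar\bx)$ with $j(\by_k,t_k)=i^*$ for infinitely many $k$. On this subsequence,
\begin{equation}
\frac{h(\by_k+t_k\bd)-h(\by_k)}{t_k}\le\frac{h_{i^*}(\by_k+t_k\bd)-h_{i^*}(\by_k)}{t_k},
\end{equation}
so taking $\limsup$ gives $h^\circ(\bar\bx,\bd)\le h_{i^*}^\circ(\bar\bx,\bd)\le\max_{i\in I(\bar\bx)}h_i^\circ(\bar\bx,\bd)$, as claimed. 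The main obstacle I anticipate is purely bookkeeping: the argument must handle the fact that the active index $j(\by,t)$ depends on $(\by,t)$ and may jump, so care is needed when extracting the subsequence; but finiteness of $I(\bar\bx)$ and continuity of the $h_i$'s, coupled with the local Lipschitz property ensuring the relevant ratios are bounded, resolve this cleanly. No use of Lemma~\ref{l1_clarke} is needed here, though it would become relevant if one wanted to refine the inequality into the standard equality under smoothness of the individual $h_i$'s.
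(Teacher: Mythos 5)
Your proof is correct and follows essentially the same route as the paper: both use continuity of the finitely many $h_i$ to establish $I(\by+t\bd)\subseteq I(\bar{\bx})$ for $\by$ near $\bar{\bx}$ and $t$ small, then bound the difference quotient of $h$ by that of an active $h_i$ and pass to the limit superior. The only stylistic difference is that you make the pigeonhole/subsequence extraction explicit, whereas the paper absorbs it into the standard fact that the limsup of a finite max equals the max of the limsups.
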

	\begin{proof}
		Consider a feasible point $\by\in \mathcal{B}(\bar{\bx},\epsilon)\cap\mathcal{X}$, where $\mathcal{B}(\bar{\bx},\epsilon)$ is the Euclidean ball with the center $\bar{\bx}$ and radius $\epsilon$. First, we prove that $I(\by)\subseteq I(\bar{\bx})$ for sufficiently small $\epsilon>0$. Notice that $h_i(\bar{\bx})<h_j(\bar{\bx})$ for every $i\in I(\bar{\bx})$ and $j\in\{1,...,m\}\backslash I(\bar{\bx})$. Therefore, due to the continuity of $h_i(\cdot)$ for every $i\in\{1,...,m\}$, it follows that there exists $\bar\epsilon>0$ such that $h_i(\by)<h_j(\by)$ for every $\by\in \mathcal{B}(\bar{\bx},\epsilon)\cap \mathcal{X}$ with $0<\epsilon<\bar{\epsilon}$. This implies that $I(\by+t\bd)\subseteq I(\by)\subseteq I(\bar{\bx})$ for every $\by\in \mathcal{B}(\bar{\bx},\epsilon)\cap \mathcal{X}$ and every feasible direction $\bd$ with sufficiently small $\epsilon>0$ and $t>0$. Now, note that
		\begin{align}
		h(\by+t\bd)-h(\by) = \max_{i\in I(\by+t\bd)} h_i(\by+t\bd)-h_i(\by)\leq \max_{i\in I(\bar{\bx})} h_i(\by+t\bd)-h_i(\by)
		\end{align}
		This implies that
		\begin{align}
		h^\circ(\bar{\bx}, \bd) = \underset{\begin{subarray}{c}
			\by\rightarrow \bx\\
			t\downarrow 0
			\end{subarray}}{\lim\sup}\frac{h(\by+t\bd)-h(\by)}{t}\leq \max_{i\in I(\bar{\bx})} \left\{\underset{\begin{subarray}{c}
			\by\rightarrow \bx\\
			t\downarrow 0
			\end{subarray}}{\lim\sup}\frac{h_i(\by+t\bd)-h_i(\by)}{t}\right\} = \max_{i\in I(\bar{\bx})} h^\circ_i(\bar{\bx}, \bd)
		\end{align}
		This completes the proof.
	\end{proof}
	
	Based on the above lemmas, we develop the following theorem.
	\begin{theorem}\label{thm14}
		Under the conditions of Theorems~\ref{thm4} and assuming that $\mG(\Omega)$ is connected, the global solution and the origin are the only C-stationary points of the symmetric positive RPCA. A similar result holds for the asymmetric positive RPCA.
	\end{theorem}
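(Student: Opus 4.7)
The plan is to leverage the descent direction $\bd$ already constructed in the proofs of Theorems~\ref{thm1} and~\ref{thm4}, and to show that for this same direction the \emph{Clarke generalized} directional derivative, not merely the ordinary one, is strictly negative at every feasible $\bar\bu \notin \{\bu^*, 0\}$. Since C-stationarity requires $f_{\mathrm{reg}}^\circ(\bar\bu, \bd) \geq 0$ for every feasible $\bd$, producing a single feasible $\bd$ with $f_{\mathrm{reg}}^\circ(\bar\bu, \bd) < 0$ rules out C-stationarity. This is strictly stronger than what Theorem~\ref{thm4} gives, since $f^\circ \geq f'$ in general, so the verdict on D-stationarity does not transfer verbatim.

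The first step is a Clarke-calculus decomposition. Writing $f_{\mathrm{reg}}(\bu) = f_1(\bu) + R(\bu)$ with $f_1(\bu) = \sum_{(i,j) \in \Omega} |u_i u_j - X_{ij}|$, sublinearity of the Clarke generalized derivative plus smoothness of $R$ give $f_{\mathrm{reg}}^\circ(\bar\bu, \bd) \leq f_1^\circ(\bar\bu, \bd) + R'(\bar\bu, \bd)$, and similarly $f_1^\circ(\bar\bu, \bd) \leq \sum_{(i,j) \in \Omega} h_{ij}^\circ(\bar\bu, \bd)$ with $h_{ij}(\bu) = |u_i u_j - X_{ij}|$. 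For each $h_{ij}$, realize it as $\max\{u_i u_j - X_{ij}, -(u_i u_j - X_{ij})\}$ and apply Lemma~\ref{l2_clarke}: at a \emph{smooth point} where $\bar{u}_i \bar{u}_j \neq X_{ij}$, only one branch is active in $I(\bar\bu)$ and $h_{ij}^\circ = h_{ij}'$; at a \emph{kink point} where $\bar{u}_i \bar{u}_j = X_{ij}$, both branches are active and Lemma~\ref{l2_clarke} yields $h_{ij}^\circ(\bar\bu, \bd) \leq |\bar{u}_j d_i + \bar{u}_i d_j|$.

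Next, invoke the direction $\bd$ built in the proof of Theorem~\ref{thm1}---namely, $d_i = u_i/u_n$ for $i \in T_1$, $d_i = -u_i/u_n$ for $i \in T_2$, and $d_i = 0$ for $i \in N$---at any positive $\bar\bu \neq \bu^*$. The crucial observation is that a good measurement $(i,j) \in G$ is a kink point precisely when $\bar{u}_i \bar{u}_j = u^*_i u^*_j$, which by the construction of $T_1, T_2$ forces $\{i,j\}$ to straddle $T_1$ and $T_2$; in this case (case 4 of Theorem~\ref{thm1}) one directly computes $\bar{u}_j d_i + \bar{u}_i d_j = 0$, so the Clarke contribution from every good-measurement kink is zero, matching its directional derivative. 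Thus the only excess of $f^\circ$ over $f'$ comes from bad measurements at coincidence points, and each such term contributes at most $|\bar{u}_j d_i + \bar{u}_i d_j| \leq 2 u_{\max}^2/u_n$---the same order as the per-bad-edge contribution already controlled in the proof of Theorem~\ref{thm4}. Combined with Lemma~\ref{l4}'s two-sided bound on the entries of $\bar\bu$, the slack in the condition $\delta(\mG(G)) > (48/c^2)\kappa(\bu^*)^4 \Delta(\mG(B))$ absorbs this extra term (up to adjusting an absolute constant), giving $f_{\mathrm{reg}}^\circ(\bar\bu, \bd) < 0$ and contradicting C-stationarity of $\bar\bu$.

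For a $\bar\bu$ with some zero and some positive entries, the connectedness of $\mG(\Omega)$ combined with the perturbation used in Lemma~\ref{l1} produces a feasible direction along which every active term of $h_{ij}$ is smooth at $\bar\bu$; on such a direction the Clarke and directional derivatives coincide and are strictly negative, again precluding C-stationarity. The asymmetric statement is proved by repeating the same argument on~\eqref{p3_asym_reg}: the extra term $\alpha|\sum_{i=1}^m w_i^2 - \sum_{j=m+1}^{m+n} w_j^2|$ is smooth away from its kink set, and on the kink set Lemma~\ref{l2_clarke} yields an upper bound that cancels in the descent direction used in the proof of Theorem~\ref{thm1_asym}, exactly mirroring case 4. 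The main obstacle is the third-paragraph bookkeeping: verifying that the Clarke excess from bad-measurement coincidences is strictly dominated by the slack that Theorem~\ref{thm4} already has. This reduces to observing that the per-bad-kink bound $2u_{\max}^2/u_n$ is of the same form as the per-bad-edge bound appearing in the proof of Theorem~\ref{thm4}, so that the same inequality $\delta(\mG(G)) \gtrsim \kappa(\bu^*)^4\Delta(\mG(B))$---possibly with a larger absolute constant---suffices.
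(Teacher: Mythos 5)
Your proof is correct and takes essentially the same approach as the paper's: both view the $\ell_1$ objective as a pointwise maximum of smooth functions and use Lemmas~\ref{l1_clarke} and~\ref{l2_clarke} to reduce C-stationarity to the directional-derivative estimates of Theorem~\ref{thm4} along the same descent direction. The only organizational difference is that you decompose termwise over $(i,j)\in\Omega$ via general sublinearity of the Clarke derivative under function addition (a standard fact, though not among the paper's stated lemmas), whereas the paper takes a single maximum $f(\bu)=\max_{\sigma\in\mathcal{M}}f_\sigma(\bu)$ over global $\{-1,+1\}$-valued sign patterns on $\Omega$ and applies Lemma~\ref{l2_clarke} once; since $\mathcal{M}$ factors over edges, the two views are equivalent.

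One conceptual point worth tightening: there is actually no ``excess of $f^\circ$ over $f'$'' to absorb, and no constant needs adjusting. Each $h_{ij}$ is the maximum of two smooth functions, so Lemma~\ref{l2_clarke} combined with the elementary inequality $h^\circ\geq h'$ gives $h_{ij}^\circ(\bar\bu,\bd)=h_{ij}'(\bar\bu,\bd)$ at every point; at a kink both equal $|\bar u_j d_i+\bar u_i d_j|$. Hence $f_1^\circ(\bar\bu,\bd)=f_1'(\bar\bu,\bd)$, and your bound $|\bar u_j d_i+\bar u_i d_j|\leq 2u_{\max}^2/u_n$ at a bad kink is precisely the per-bad-edge term already present in $f_B(\bu)$ of Lemma~\ref{l5}, not a new contribution. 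A minor imprecision: a good-measurement kink can also occur with $i,j\in N$, not only with $\{i,j\}$ straddling $T_1$ and $T_2$; but there $d_i=d_j=0$, so the first-order contribution again vanishes and your conclusion stands.
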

	\begin{proof}
		Without loss of generality, we only consider the symmetric case. At a given point $\bu$, the function $f(\bu)$ is locally Lipschitz and can be written as 
		\begin{equation}
		f(\bu) = \sum_{(i,j)\in\Omega}\max\{u_iu_j-X_{ij}, -u_iu_j+X_{ij}\} = \max_{\sigma\in \mathcal{M}}f_\sigma(\bu)
		\end{equation}
		where $\mathcal{M}$ is the class of functions from $\Omega$ to $\{-1,+1\}$ and $f_{\sigma}(\bu)$ is defined as 
		\begin{equation}
		f_{\sigma}(\bu) = \sum_{(i,j)\in\Omega}\sigma(i,j)(u_iu_j-X_{ij}).
		\end{equation}
		Hence,
		\begin{equation}
		f_{\mathrm{reg}}(\bu) = R(\bu)+\max_{\sigma\in \mathcal{M}}f_\sigma(\bu)
		\end{equation}
		Notice that each function $f_{\sigma}(\bu)$ is differentiable and locally Lipschitz for every $\sigma\in\mathcal{M}$. By contradiction, suppose that there exists $\bu\geq 0$ such that $\bu \not\in \left\{\bu^*, 0\right\}$ and $0\in\partial_C f_{\mathrm{reg}}(\bu)$. Furthermore, define $I(\bu)$ as the set of all functions $\sigma\in\mathcal{M}$ for which $f_{\sigma}(\bu) = f(\bu)$. Using the proof technique developed in Theorem~\ref{thm4}, one can easily verify that there exists a feasible direction $\bd$ such that $f'_{\sigma}(\bu,\bd)+R'(\bu,\bd)<0$ for every $\sigma\in I(\bu)$. By invoking Lemma~\ref{l1_clarke} for every $\sigma\in I(\bu)$, it can be concluded that $f^\circ_{\sigma}(\bu,\bd)+R^\circ(\bu,\bd)<0$. This, together with Lemma~\ref{l2_clarke}, certifies that $f_{\mathrm{reg}}^\circ(\bu,\bd)<0$, hence contradicting the assumption $0\in\partial_C f_{\mathrm{reg}}(\bu)$.
	\end{proof}
	
	
	{\section{Discussions on Extension to Rank-$r$}\label{sec:rankr}
		So far, we have characterized the conditions under which the non-negative rank-1 RPCA has no spurious local solution. However, the following question has been left unanswered: \textit{Can these results be extended to the general non-negative \textbf{rank-$\bf r$} RPCA?}
		
		As a first step toward answering this question and similar to our analysis in the rank-1 case, we consider the noiseless symmetric non-negative rank-$r$ RPCA defined as
		\begin{equation}\tag{P1-Sym-$r$}\label{p2-sym_r}
		\min_{U\in\mathbb{R}^{n\times r}_+}\quad f(U) = \|\mathcal{P}_{\Omega}(U^*{U^*}^\top-UU^\top)\|_1
		\end{equation}
		
		\noindent Indeed, a fundamental roadblock in extending the results of Section~\ref{sec3} to~\eqref{p2-sym_r} is the implicit \textit{rotational symmetry} in the solution: given a rotation matrix $R$ and a solution $\tilde{U}$ to~\eqref{p2-sym_r}, $\tilde UR$ is another feasible solution with $f(\tilde UR) = f(\tilde U)$, provided that $\tilde UR$ is a non-negative matrix. In the rank-1 case, this does not pose any problem since $R = 1$ is the only possible value. However, for the general rank-$r$ case with $r\geq 2$, this rotational symmetry undermines the strict positivity assumption of the true components. In particular, even if the true solution $U^*$ is strictly positive, there exists a rotation matrix $R$ such that $U^*R$ is non-negative with at least one zero entry. This in turn implies that Lemma~\ref{l1} and, as a consequence, the technique used in Theorem~\ref{thm1} may not be readily extended to the rank-$r$ cases. 
		
		Despite the theoretical difficulties in extending the presented results to the general rank-$r$ instances, we have indeed observed---through thousands of simulations---that in general, the sub-gradient method introduced in Section~\ref{sec:num} successfully converges to a solution $U$ that satisfies $UU^\top = U^*{U^*}^\top$, even if the measurement matrix is corrupted with a surprisingly dense noise matrix.
		To illustrate this, we consider randomly generated instances of the problem with the dimension $n = 100$ and the rank $r \in \{2,3,4,5\}$. For each instance, the elements of $U^*$ are uniformly chosen from the interval $[0.5, 2.5]$. Furthermore, each element in the upper triangular part of the noise matrix $S$ is set to $2$ and $0$ with probabilities $d$ and $1-d$, respectively. For each rank $r$ and the noise probability $d$, we consider 500 independent instances of the problem and solve them using the randomly initialized sub-gradient method. Similar to Subsection~\ref{subsec:exact}, we assume that a solution is recovered exactly if $\|UU^\top-U^*{U^*}^\top\|_F/\|U^*{U^*}^\top\|_F\leq 10^{-4}$. Figure~\ref{rankr} demonstrates the ratio of the instances for which the sub-gradient method successfully recovers the true solution. As illustrated in this figure, $d$ can be as large as $0.30$, $0.28$, $0.26$, and $0.25$ to guarantee a success rate of at least $90\%$ when $r$ is equal to $2$, $3$, $4$, and $5$, respectively. 
		
		This empirical study suggests that one of the following statements may hold for the positive rank-$r$ RPCA: (1) it is devoid of spurious local minima, or (2) its spurious local minima can be escaped efficiently using the sub-gradient method.
		Further investigation of this direction is left as an enticing challenge for future research.

		\begin{figure}
			\centering
			\includegraphics[width=.45\columnwidth]{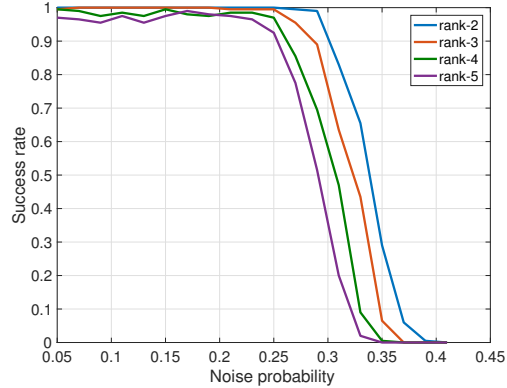}
			\caption{ \footnotesize The success rate of the randomly initialized sub-gradient method for the positive rank-$r$ RPCA.}
			\label{rankr}
		\end{figure}
		
		%
	}
	
	\section{Conclusion}
	
	This paper deals with the non-negative {rank-1} robust principal component analysis (RPCA), where the goal is to recover the true non-negative principal component of the data matrix exactly, using partial and potentially noisy measurements of the data matrix. The main difference between the RPCA and its classical counterpart is the sparse-but-arbitrarily-large values of the additive noise. The most commonly known methods for solving the RPCA are based on convex relaxations, where the problem is \textit{convexified} at the expense of significantly increasing the number of variables. In this work, we show that the original non-convex and non-smooth $\ell_1$ formulation of the positive {rank-1} RPCA problem based on the well-known Burer-Monteiro approach has benign landscape, i.e., it does not have any spurious local solution and has a unique global solution that coincides with the true components. In particular, we provide strong deterministic and statistical guarantees for the benign landscape of the positive {rank-1} RPCA and show that the absence of spurious local solutions is guaranteed to hold with a surprisingly large number of corrupted measurements. While the results on ``no spurious local minima'' are ubiquitous for smooth problems related to matrix completion and sensing, to the best of our knowledge, the results presented in this paper are the first to prove the absence of local minima when the objective function is non-smooth. {Finally, through extensive simulations, we provide strong evidence suggesting that the proposed results may hold for the general non-negative rank-$r$ RPCA. The extension of our theoretical results to this generalized problem is left as a future work.}   
	
	\section*{Acknowledgments}
	
	The authors are grateful to Javad Lavaei, Richard Zhang, and Cedric Josz for insightful discussions on earlier versions of this manuscript. Moreover, the authors thank Richard Zhang for his assistance in providing us with the code for the simulations. This work was supported by grants from ONR, AFOSR and NSF.
	
	\bibliographystyle{IEEEtran}
	\bibliography{bib.bib}
	
	\appendix
	
	\section{Proof of Lemma~\ref{l1}:}\label{app_l1}
	Without loss of generality and for simplicity, we will assume that $\mG(\Omega)$ is connected since the proof can be readily applied to each disjoint component of $\mG(\Omega)$.
	Consider a point $\bu\geq 0$ with $u_k = 0$ for some $k$. Consider $\Omega_k = \{j|(k,j)\in\Omega\} $ and note that it is non-empty due to the assumption that $\mG(\Omega)$ is connected and non-bipartite. Furthermore, if there exists $r\in\Omega_k$ such that $u_r>0$, a positive perturbation of $u_k$ will result in a feasible and negative directional derivative. Therefore, suppose that $u_r = 0$ for every $r\in\Omega_k$. Similarly, one can show that if $u_t > 0$ for some $t\in\Omega_r$ and $r\in\Omega_k$, then $\bu$ has a feasible and strictly negative directional derivative. Invoking the same argument for the neighbors of the nodes with the zero value, one can infer that $\bu = 0$. This completes the proof.~\hfill$\blacksquare$
	
	\section{Proof of Proposition~\ref{prop1}:}\label{app_prop1}
	Suppose that $\bu^*\geq 0$ and there exists an index $i$ such that $u^*_i = 0$. Without loss of generality, assume that $i = 1$ and $u^*_j>0$ for every $j\geq 2$. Next, we will show that $\bu$ defined as $u_1 = \beta>0$ and $u_j = 0$ for $j\geq 2$ is a local minimum of~\eqref{p2-sym}. Consider the perturbed version of $\bu$ as
	\begin{align}
	&\hat{u}_1 \leftarrow \beta+\epsilon_1 \\
	&\hat{u}_j\leftarrow \epsilon_j\qquad\ \ \forall j\in\{2,...,n\}
	\end{align}
	for sufficiently small $|\epsilon_1|$ and $\epsilon_2,...,\epsilon_n\geq0$. Upon defining $\Omega = \{1,...,n\}^2\backslash \{(1,1)\}$, one can write
	\begin{align}
	& f(\bu) = \sum_{j = 2}^n{u_j^*}^2+\sum_{j,k = 2, j\not=k}^nu^*_ju^*_k\\
	& f(\hat{\bu}) = \sum_{j = 2}^n{u_j^*}^2\!-\!\epsilon_j^2\!+\!\sum_{j = 2}^n(\beta+\epsilon_1)\epsilon_j+\!\!\!\!\sum_{j,k = 2, i\not=j}^n|u^*_ju^*_k-\epsilon_j\epsilon_k| \geq f(\bu)+\beta\sum_{j = 2}^n\epsilon_j\!-\!\left(\sum_{j=1}^{n}\epsilon_j\right)^2\!\!+\epsilon_1^2
	\end{align}
	It is easy to verify that there exist constants $\bar{\epsilon}_1>0$ and $\bar{\epsilon}>0$ such that for every $-\bar{\epsilon}_1\leq \epsilon_1\leq \bar{\epsilon}_1$ and $0\leq\sum_{j = 2}^n\epsilon_i\leq\bar{\epsilon}$, we have 
	\begin{equation}
	\beta\sum_{j = 2}^n\epsilon_i-\left(\sum_{i=1}^{n}\epsilon_i\right)^2+\epsilon_1^2\geq 0
	\end{equation}
	and hence $f(\hat{\bu})\geq f(\bu)$. This implies that $\bu$ is a local minimum for $f(\bu)$.~\hfill$\blacksquare$
	
	\section{Proof of Theorem~\ref{thm1_asym}:}\label{app_thm1_asym}
	First, we present a number of lemmas that are crucial to the proof of this theorem.
	
	\begin{lemma}\label{l2_asym}
		Suppose that $\mG(\bar\Omega)$ is connected and $\bw ^*>0$. Then, for every D-min-stationary point $\bw$, we have $\bw>0$ or $\bw = 0$.
	\end{lemma}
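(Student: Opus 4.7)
The plan is to mirror the structure of the proof of Lemma~\ref{l1}, now accounting for the additional regularization term $\alpha\bigl|\sum_{i=1}^{m}w_i^2 - \sum_{j=m+1}^{m+n}w_j^2\bigr|$ present in $f_{\mathrm{sym}}$. I would argue by contradiction: assume $\bw$ is a D-min-stationary point that is neither strictly positive nor identically zero. Then the set $Z := \{k : w_k = 0\}$ and its complement are both non-empty, and since $\mG(\bar\Omega)$ is connected there must exist an edge $(i,j) \in \bar\Omega$ with $i \in Z$ (so $w_i = 0$) and $w_j > 0$. I would show that the unit vector $e_i$ is a feasible descent direction at $\bw$, producing the desired contradiction.

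First I would verify feasibility: $e_i$ has $(e_i)_k \geq 0$ whenever $w_k = 0$, so it is feasible at $\bw$. Next, I would expand the edge terms of $f_{\mathrm{sym}}(\bw + \epsilon e_i)$ to first order in $\epsilon > 0$. For edges $(i,k) \in \bar\Omega$ with $w_k = 0$, the product $w_i w_k$ remains zero, so those terms are unchanged. For edges $(i,k) \in \bar\Omega$ with $w_k > 0$, the hypothesis $\bw^* > 0$ forces $w_i^* w_k^* > 0$, and for small $\epsilon$ we obtain $|\epsilon w_k - w_i^* w_k^*| = w_i^* w_k^* - \epsilon w_k$, a decrease of $\epsilon w_k$. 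Edges not incident to $i$ are unaffected. Summing, the unregularized part of $f_{\mathrm{sym}}$ decreases by $\epsilon \sum_{k \in \bar\Omega_i,\, w_k > 0} w_k$, a strictly positive amount since $j$ alone already contributes $\epsilon w_j > 0$.

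Second, I would check that the regularizer contributes only $O(\epsilon^2)$. Without loss of generality assume $i \in \{1,\dots,m\}$, and set $a = \sum_{r=1}^{m} w_r^2$, $b = \sum_{r=m+1}^{m+n} w_r^2$. Because $w_i = 0$, the perturbation increases $a$ by exactly $\epsilon^2$ with no linear term, so $\alpha\bigl|(a+\epsilon^2)-b\bigr| - \alpha|a-b| = O(\epsilon^2)$ regardless of the sign of $a-b$. Combining the two estimates gives $f'_{\mathrm{sym}}(\bw, e_i) \leq -\sum_{k\in\bar\Omega_i,\, w_k > 0} w_k < 0$, contradicting D-min-stationarity.

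The only delicate point is ensuring the first-order contribution from the regularizer truly vanishes; this is exactly where perturbing a coordinate with $w_i = 0$ is crucial, since the linear coefficient $2 w_i$ in $(w_i + \epsilon)^2 - w_i^2 = 2 w_i \epsilon + \epsilon^2$ is zero at such a coordinate. Beyond that, the proof is a direct adaptation of Lemma~\ref{l1}, with the connectedness of $\mG(\bar\Omega)$ playing the role that ``no bipartite component'' played in the symmetric case.
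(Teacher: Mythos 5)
Your proof is correct and matches the approach the paper intends: the paper omits the proof, citing its similarity to Lemma~\ref{l1}, and your argument is exactly that proof adapted to the symmetrized asymmetric setting. You also correctly isolate the one genuinely new issue---that the regularizer $\alpha\bigl|\sum_{i\leq m}w_i^2 - \sum_{j>m}w_j^2\bigr|$ contributes no first-order term when perturbing a coordinate with $w_i=0$---which is precisely the detail that makes the Lemma~\ref{l1} argument carry over.
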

	
	\begin{proof}
		The proof is omitted due to its similarity to that of Lemma~\ref{l1}.
	\end{proof}
	\begin{lemma}\label{l8}
		Suppose that $\mG(\bar\Omega)$ is connected and $\bw^*>0$. Then, $\sum_{i=1}^{m}w_i^2=\sum_{j = m+1}^{m+n}w_j^2$ holds for every D-stationary point $\bw>0$ of~\eqref{p2_asym_sym}.
	\end{lemma}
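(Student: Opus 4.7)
The plan is to exploit the \emph{scaling symmetry} that is inherent to the asymmetric (bipartite) problem. Specifically, because $\mG(\bar\Omega)$ is bipartite between $\{1,\dots,m\}$ and $\{m+1,\dots,m+n\}$, every edge $(i,j)\in\bar\Omega$ has one endpoint in each side, so the product $w_iw_j$ is invariant under the one-parameter family $\bw(q) = (q\bu,\,q^{-1}\bv)$ with $q>0$. I would first verify this invariance and then use the tangent direction $\bd = \bw'(1) = (\bu,-\bv)$ to rule out any D-stationary point $\bw>0$ at which $\sum_{i=1}^m w_i^2 \neq \sum_{j=m+1}^{m+n} w_j^2$.

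More precisely, set $A = \sum_{i=1}^m w_i^2$ and $B = \sum_{j=m+1}^{m+n} w_j^2$, and suppose for contradiction that $\bw>0$ is D-stationary with $A\neq B$. Since $\bw>0$, both $\bd$ and $-\bd$ are feasible directions. For the first term, each $(i,j)\in\bar\Omega$ satisfies $i\in[m]$, $j\in[m+1,m+n]$, so along the path $\bw(q)$ we have $w_i(q)w_j(q) = w_iw_j$ for all $q$; hence each summand $|w_iw_j - w_i^*w_j^*|$ is constant along the path, giving a zero directional derivative in the direction $\pm\bd$. For the regularizer, substituting $\bw(q)$ yields
\begin{equation}
\phi(q) \;=\; \alpha\bigl|\,q^{2}A - q^{-2}B\,\bigr|,
\end{equation}
which is smooth in a neighborhood of $q=1$ whenever $A\neq B$, with $\phi'(1) = \alpha\,\mathrm{sign}(A-B)\,(2A+2B) \neq 0$.

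Combining these two calculations, $f'(\bw,\bd)$ and $f'(\bw,-\bd)$ have equal magnitude and opposite signs, and both are nonzero. This contradicts both D-min-stationarity (which would require both to be $\geq 0$) and D-max-stationarity (which would require both to be $\leq 0$), hence contradicts D-stationarity. Therefore $A = B$, as claimed.

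I do not anticipate a real obstacle: the bipartite structure of $\mG(\bar\Omega)$ makes the scaling invariance immediate, and the positivity $\bw>0$ ensures both $\pm\bd$ are admissible so that we can compare directional derivatives in opposite orientations. The only minor care needed is to confirm that when $A\neq B$ the regularizer is smooth in $q$ near $q=1$ (so that ordinary differentiation gives the correct directional derivative), and to observe that Lemma~\ref{l2_asym} together with $\bw^*>0$ is already invoked in Theorem~\ref{thm1_asym} to separate the case $\bw=0$ from the case $\bw>0$ treated here.
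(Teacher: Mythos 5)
Your proof is correct and uses essentially the same argument as the paper: both take the scaling direction $\bd$ proportional to $(\bu,-\bv)$, observe that the bipartite structure of $\bar\Omega$ makes each residual $|w_iw_j - w_i^*w_j^*|$ invariant to first order along this direction, and conclude from the sign-flip of the balancing term's directional derivative that a D-stationary point with unbalanced norms is impossible. (Incidentally, your value $2\alpha(A+B)$ for the magnitude of the regularizer's directional derivative is the correct one; the paper's displayed bound has a minor typo writing $A-B$ where $A+B$ is meant, but since both are positive this does not affect the conclusion.)
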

	
	\begin{proof}
		By contradiction, suppose that $\sum_{i=1}^{m}w_i^2\not=\sum_{j = m+1}^{m+n}w_j^2$ for a D-stationary point $\bw>0$. Without loss of generality, suppose that $\sum_{i=1}^{m}w_i^2>\sum_{j = m+1}^{m+n}w_j^2$ and consider the following perturbation of $\bw$
		\begin{equation}
		\hat{w}_i\leftarrow \left\{
		\begin{array}{ll}
		w_i-w_i\epsilon &\text{if}\ 1\leq i\leq n\\
		w_i+w_i\epsilon &\text{if}\ n+1\leq i\leq n+m\\
		\end{array} 
		\right.
		\end{equation}
		For $(i,j)\in\bar{\Omega}$, one can write
		\begin{equation}
		|\hat{w}_i\hat{w}_j-\hat{w}^*_i\hat{w}^*_j| = |(w_i-w_i\epsilon)(w_j+w_j\epsilon)-\hat{w}^*_i\hat{w}^*_j| = |{w}_i{w}_j-\hat{w}^*_i\hat{w}^*_j|+w_iw_j\epsilon^2
		\end{equation}
		Therefore, we have
		\begin{equation}
		f(\hat{\bw})-f(\bw) \leq -2\alpha\left(\sum_{i=1}^{m}w_i^2-\sum_{j = m+1}^{m+n}w_j^2\right)\epsilon+O(\epsilon^2)
		\end{equation}
		This implies the existence of strictly positive and negative directional derivatives, thus resulting in a contradiction. This completes the proof.
	\end{proof}
	
	\begin{lemma}\label{l9}
		$\mG(\bar{\Omega})$ has a unique vertex partitioning.
	\end{lemma}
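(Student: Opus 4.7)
The plan is to exploit the connectedness of $\mG(\bar\Omega)$, which is the running hypothesis in Theorem~\ref{thm1_asym}, and reduce Lemma~\ref{l9} to the standard fact that a connected bipartite graph admits only one bipartition. Note first that $\mG(\bar\Omega)$ is bipartite by construction, since $\bar\Omega$ only contains pairs $(i,j)$ with $i\in V_u$ and $j\in V_v$, so the partition $\{V_u, V_v\}$ is a valid vertex partitioning. What remains is to prove that no other partition is valid.

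To that end, fix an arbitrary vertex $v_0$ of $\mG(\bar\Omega)$. Let $\{A, B\}$ be any vertex partitioning in the sense of the preliminaries; without loss of generality relabel so that $v_0 \in A$. The key step is to show by induction on the shortest-path distance $d(v_0, u)$ in $\mG(\bar\Omega)$ that every vertex $u$ with $d(v_0,u)$ even lies in $A$, and every vertex with $d(v_0,u)$ odd lies in $B$. The base case is $v_0$ itself. For the inductive step, if $u$ has distance $k \geq 1$ from $v_0$ through a neighbor $w$ with $d(v_0,w) = k-1$, the definition of vertex partitioning forbids $u$ and $w$ from lying in the same class; by the inductive hypothesis $w$ is in the class dictated by the parity of $k-1$, so $u$ is forced into the opposite class, which is exactly the class determined by the parity of $k$.

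Because $\mG(\bar\Omega)$ is connected, every vertex is at a finite distance from $v_0$, so the entire partition is uniquely determined once the class containing $v_0$ is named. Exchanging the names $A$ and $B$ produces the same unordered partition of the vertex set, so up to this inevitable relabeling the partitioning is unique; this must therefore coincide with $\{V_u, V_v\}$. The main subtlety is simply invoking the correct hypothesis: connectedness is essential, since in a disconnected bipartite graph each component can be bipartitioned independently, producing multiple global partitions. Because connectedness of $\mG(\bar\Omega)$ is exactly what Theorem~\ref{thm1_asym} assumes, no genuine obstacle arises, and the argument above completes the proof.
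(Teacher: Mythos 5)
Your proof is correct, but it takes a genuinely different route from the paper's. The paper argues by contradiction: given two distinct bipartitions $(S,T)$ and $(\bar S,\bar T)$, it observes that vertices in $S\cap\bar S$ can only be adjacent to vertices in $T\cap\bar T$ and vice versa, so $(S\cap\bar S)\cup(T\cap\bar T)$ is a union of connected components; connectedness then forces this set to be either empty or all of $V$, and either case collapses the two partitions into one. Your argument is constructive rather than by contradiction: you fix a base vertex $v_0$, induct on shortest-path distance, and show that every vertex's side is forced by the parity of its distance to $v_0$. Both are standard proofs that a connected bipartite graph has a unique bipartition. Your version is arguably cleaner in that it gives an explicit characterization of the partition (classes are distance-parity classes relative to any fixed vertex) and makes the logical dependence on connectedness completely transparent, whereas the paper's version is slightly more compressed and has a minor gap in its assertion that $\bar S\neq S$ and $\bar S\neq T$ immediately imply both $S\cap\bar S$ and $T\cap\bar T$ are nonempty (one of them could a priori be empty, and a small extra case analysis is needed to rule that out). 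Your approach avoids that subtlety entirely, so it is, if anything, tighter than the published argument.
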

	
	\begin{proof}
		By contradiction, suppose that there exist two different vertex partitions $(S, T)$ and $(\bar{S}, \bar{T})$ for $\mG(\bar{\Omega})$. Since $\mG(\bar{\Omega})$ is a connected bipartite graph, $\bar{S}$ is not equal to $S$ or $T$, and therefore, $S\cap\bar{S}$ and $T\cap\bar{T}$ are not empty. Now, it is easy to observe that the nodes in $S\cap\bar{S}$ can only be connected to those in $T\cap\bar{T}$ and, similarly, the nodes in $T\cap\bar{T}$ can only be connected to those in $S\cap\bar{S}$. Therefore, unless $(S\cap\bar{S})\cup(T\cap\bar{T})$ includes all the nodes, the graph will be disconnected, contradicting our assumption. On the other hand, this implies that $S\cap\bar{S} = S$ and $T\cap\bar{T} = T$, contradicting the assumption that $(S, T)$ and $(\bar{S}, \bar{T})$ are different.
	\end{proof}
	
	\vspace{2mm}
	\noindent{\bf Proof of Theorem~\ref{thm1_asym}}
	For a D-min-stationary point $\bw$, note that if $w_i = 0$ for some index $i$, then Lemma~\ref{l2_asym} implies that $\bw = 0$, which can be easily verified to be a local maximum. We assume that $\bw^*$ satisfies $\sum_{i=1}^{m}w^{*^2}_i=\sum_{j = m+1}^{m+n}w^{*^2}_j$, which can be ensured by an appropriate scaling of $\bu^*$ and $\bv^*$ while keeping $\bu^*{\bv^*}^\top$ intact. Now, it suffices to show that for a D-stationary point $\bw>0$, we have $\bw = \bw^*$. This proves the validity of the statements of the theorem.
	
	By contradiction, suppose that $\bw>0$ with $\bw\not=\bw^*$ is a D-stationary point. In what follows, we will construct directions with strictly positive and negative directional derivatives at this point. Similar to the proof of Theorem~\ref{thm1}, one can show that
	\begin{equation}\label{eq92}
	0<\frac{w^*_1}{w_1}\leq \min_{k\in\bar\Omega_i}\frac{w^*_k}{w_k}\leq \frac{w_i}{w^*_i}\leq\max_{k\in\bar\Omega_i}\frac{w^*_k}{w_k}\leq \frac{w^*_{m+n}}{w_{m+n}}
	\end{equation}
	for every $1\leq i\leq m+n$. By contradiction, suppose that $w_i\not=w^*_i$ for some index $i$. First, note that $w^*_{m+n}/w_{m+n}>1$; otherwise, it holds that $w^*_{m+n}/w_{m+n}\leq 1$ and $w_i/w^*_i>1$, which contradict with~\eqref{eq92}. Define
	\begin{align}
	& T_1^u = \left\{i| \frac{w^*_i}{w_i} = \frac{w^*_{m+n}}{w_{m+n}}, 1\leq i\leq m \right\}, &&\hspace{-2mm} T_2^u = \left\{i| \frac{w_i}{w^*_i} = \frac{w^*_{m+n}}{w_{m+n}}, 1\leq i\leq m \right\}\nonumber\\
	& T_1^v = \left\{i| \frac{w^*_i}{w_i} = \frac{w^*_{m+n}}{w_{m+n}}, m+1\leq i\leq m+n \right\}, &&\hspace{-2mm} T_2^v = \left\{i| \frac{w_i}{w^*_i} = \frac{w^*_{m+n}}{w_{m+n}}, m+1\leq i\leq m+n \right\}
	\end{align}
	and 
	\begin{subequations}
		\begin{align}
		& N^u = \{1,\ldots, m\}\backslash(T_1^u\cup T_2^u)\\
		& N^v = \{m+1,\ldots, m+n\}\backslash(T_1^u\cup T_2^u)
		\end{align}
	\end{subequations}
	Furthermore, define $\bar{\bd}$ as 
	\begin{equation}
	\bar{d}_i= \left\{
	\begin{array}{ll}
	\frac{w_i}{w_{m+n}}-w_i\gamma &\text{if}\ i\in T_1^u\\
	-w_i\gamma &\text{if}\ i\in N^u\\
	-\frac{w_i}{w_{m+n}}-w_i\gamma &\text{if}\ i\in T_2^u\\
	\frac{w_i}{w_{m+n}}+w_i\gamma &\text{if}\ i\in T_1^v\\
	w_i\gamma &\text{if}\ i\in N^v\\
	-\frac{w_i}{w_{m+n}}+w_i\gamma &\text{if}\ i\in T_2^v
	\end{array} 
	\right.
	\end{equation}
	where 
	\begin{equation}
	\gamma = \frac{\sum_{i\in T_1^u}w_i-\sum_{i\in T_2^u}w_i-\sum_{i\in T_1^v}w_i+\sum_{i\in T_2^v}w_i}{ w_n \sum_{i=1}^{m+n}w_i}
	\end{equation}
	Similar to the symmetric case, we show that if $T_1^u\cup T_1^v$ is non-empty, then $f'(\bw,\bar\bd)<0$ and $f'(\bw,-\bar\bd)>0$, which contradicts the D-stationarity of $\bw$.
	We will only show $f'(\bw,\bar\bd)<0$ since $f'(\bw,-\bar\bd)>0$ can be proven in a similar way. Define a perturbation of $\bw$ as $\hat{\bw} = \bw+\bd\epsilon$ where $\epsilon>0$ is chosen to be sufficiently small.
	
	First, we analyze the regularization term in~\eqref{p2_asym_sym}. One can write
	\begin{align}\label{eq99}
	\left|\sum_{i=1}^{m}\hat{w}_i^2\!-\!\!\!\sum_{j = m+1}^{m+n}\hat{w}_j^2\right| \!\!\leq& \Bigg|\sum_{i=1}^{m}{w}_i^2\!-\!\!\!\sum_{j = m+1}^{m+n}{w}_j^2\nonumber\\
	&+2\left(\sum_{i\in T_1^u}\frac{w_i}{w_{m+n}}-\!\sum_{i\in T_2^u}\frac{w_i}{w_{m+n}}-\!\sum_{i\in T_1^v}\frac{w_i}{w_{m+n}}+\!\sum_{i\in T_2^v}\frac{w_i}{w_{m+n}}\right)\epsilon\nonumber\\
	&-2\gamma\left(\sum_{i=1}^{m}w_i+\sum_{i=m+1}^{m+n}w_i\right)\epsilon\Bigg|+(\frac{1}{w_n}+\gamma)^2\left(\sum_{i=1}^{m+n}w_i\right)\epsilon^2
	\end{align}
	Now, according to the definition of $\gamma$, one can easily verify that
	\begin{equation}
	2\left(\sum_{i\in T_1^u}\frac{w_i}{w_{m+n}}-\sum_{i\in T_2^u}\frac{w_i}{w_{m+n}}-\sum_{i\in T_1^v}\frac{w_i}{w_{m+n}}+\sum_{i\in T_2^v}\frac{w_i}{w_{m+n}}\right)\epsilon -2\gamma\left(\sum_{i=1}^{m}w_i+\sum_{i=m+1}^{m+n}w_i\right)\epsilon \!=\! 0
	\end{equation}
	This together with Lemma~\ref{l8}, reduces~\eqref{eq99} to
	\begin{equation}
	\left|\sum_{i=1}^{m}\hat{w}_i^2-\sum_{j = m+1}^{m+n}\hat{w}_j^2\right| \leq(\frac{1}{w_n}+\gamma)^2\left(\sum_{i=1}^{m+n}w_i\right)\epsilon^2
	\end{equation}
	To analyze the first term of~\eqref{p2_asym_sym}, similar to our previous proofs, we will divide the set $\bar{\Omega}$ into different cases (4 cases to be precise) and analyze the effect of the defined perturbation in each case. For the sake of simplicity and to streamline the presentation, we only report the final inequalities for these cases:
	\begin{enumerate}
		\item If $(i,j)\in\bar{\Omega}$ and $(i,j)\in (T_1^u\times T_1^v)\cup (T_2^u\times T_2^v)$, then
		\begin{equation}
		|\hat{w}_i\hat{w}_j-{w}^*_i{w}^*_j|\leq |w_iw_j-{w}^*_i{w}^*_j|-\frac{2w_iw_j}{w_{m+n}}\epsilon+w_iw_j\left(\frac{1}{w_{m+n}^2}-\gamma^2\right)\epsilon^2
		\end{equation}
		\item If $(i,j)\in\bar{\Omega}$ and $(i,j)\in (N^u\times (T_1^v\cup T_2^v))\cup ((T_1^u\cup T_2^u)\times N^v)$, then
		\begin{equation}
		|\hat{w}_i\hat{w}_j-{w}^*_i{w}^*_j|\leq |w_iw_j-{w}^*_i{w}^*_j|-\frac{w_iw_j}{w_{m+n}}\epsilon+w_iw_j\left(\frac{\gamma}{w_{m+n}^2}-\gamma^2\right)\epsilon^2
		\end{equation}
		\item If $(i,j)\in\bar{\Omega}$ and $(i,j)\in (T_1^u\times T_2^v)\cup (T_2^u\times T_1^v)$, then
		\begin{equation}
		|\hat{w}_i\hat{w}_j-{w}^*_i{w}^*_j|\leq |w_iw_j-{w}^*_i{w}^*_j|+w_iw_j\left(\frac{\gamma}{w_{m+n}}-\gamma\right)^2\epsilon^2
		\end{equation}
		\item If $(i,j)\in\bar{\Omega}$ and $(i,j)\in N^u\times N^v$, then
		\begin{equation}
		|\hat{w}_i\hat{w}_j-{w}^*_i{w}^*_j|\leq |w_iw_j-{w}^*_i{w}^*_j|+w_iw_j\gamma^2\epsilon^2
		\end{equation}
	\end{enumerate}
	Based on the above inequalities and due to the fact that $\mG(\bar{\Omega})$ is connected, one can easily verify that $N^u\cup N^v$ should be empty; otherwise, $\bw$ has a strictly negative (and positive) directional derivative. Based on the same reasoning, the graph induced by $T_1^u\cup T_1^v$ or $T_2^u\cup T_2^v$ should be empty. Therefore, $\mathcal{G}$ is bipartite with the components $T_1^u\cup T_1^v$ and $T_2^u\cup T_2^v$. Now, based on Lemma~\ref{l9}, $(T_1^u\cup T_1^v, T_2^u\cup T_2^v)$ induces the same vertex partitioning as $(V_u, V_v)$ (without loss of generality, assume that $T_1^u\cup T_1^v = V_u$ and $T_2^u\cup T_2^v = V_v$). This implies that
	\begin{equation}
	\frac{w_1}{w^*_1} =\cdots = \frac{w_m}{w^*_m} = \frac{w^*_{m+1}}{w_{m+1}} =\cdots=\frac{w^*_{m+n}}{w_{m+n}}>1
	\end{equation}
	Therefore,
	\begin{equation}
	\sum_{i=1}^{m}w_i > \sum_{i=1}^{m}w^*_i,\quad \sum_{i=m+1}^{m+n}w^*_i > \sum_{i=m+1}^{m+n}w_i
	\end{equation}
	Together with the assumption $\sum_{i=1}^{m}w^*_i = \sum_{i=m+1}^{m+n}w^*_i$, this implies that 
	\begin{equation}
	\sum_{i=1}^{m}w_i > \sum_{i=m+1}^{m+n}w_i
	\end{equation}
	which, according to Lemma~\ref{l8}, contradicts the D-stationarity of $\bw$. This completes the proof.~\hfill$\blacksquare$
	
	{\section{Proof of Lemma~\ref{l2}:}\label{app_l2}
		To prove this lemma, first we provide a lower bound on the probability of $\mathcal{G}(\Omega)$ being connected. Define $C_{k}$ as the number of connected components with exactly $k$ vertices in $\mathcal{G}(\Omega)$. Then, one can write:
		
		\begin{align}\label{main_connected}
		\mathbb{P}(\mathcal{G}(\Omega) \text{ is connected}) &= 1-\mathbb{P}\left(\bigcup_{k=1}^{\lceil n/2\rceil}\{C_k>0\}\right)= 1-\mathbb{P}(C_1>0) - \sum_{k=2}^{\lceil n/2\rceil}\mathbb{P}(C_k>0)
		\end{align}
		where $\lceil n/2\rceil$ denotes the smallest integer that is greater than or equal to $n/2$. Next, we provide an upper bound on $\mathbb{P}(C_k>0)$ for every $k \in \{2,\dots,\lceil n/2\rceil\}$. We have
		\begin{align}\label{Ck}
		\mathbb{P}(C_k>0)\leq \mathbb{E}\{C_k\} = \sum_{\mathcal{X}\subseteq [1:n], |\mathcal{X}| = k}\mathbb{E}\{I_{\mathcal{X}}\}
		\end{align}
		where $I_{\mathcal{X}}$ is an indicator random variable taking the value 1 if the subgraph $\mathcal{G}_{\mathcal{X}}(\Omega)$ of $\mathcal{G}(\Omega)$ induced by the set of vertices in $\mathcal{X}$ is an isolated connected component of $\mathcal{G}(\Omega)$, and it takes the value 0 otherwise. On the other hand, note that $\mathcal{G}_{\mathcal{X}}(\Omega)$ is connected if and only if it contains a spanning tree. Therefore, one can write
		\begin{align}
		\mathbb{E}\{I_{\mathcal{X}}\} &= \mathbb{P}(\mathcal{G}_{\mathcal{X}}(\Omega) \text{ has a spanning tree})\nonumber\\
		&\leq \sum_{\mathcal{T}\subset\mathcal{K}_k} \mathbb{P}(\mathcal{T} \text{ belongs to } \mathcal{G}_{\mathcal{X}}(\Omega))\nonumber\\
		&\leq k^{k-2}p^{k-1}
		\end{align}
		where $\mathcal{K}_k$ is a complete graph over $k$ vertices and $\mathcal{T}$ is a spanning tree. The last inequality is due to the fact that the number of different spanning trees in $\mathcal{K}_k$ is equal to $k^{k-2}$~(\cite{hartsfield2013pearls}). Combining the above inequality with~\eqref{Ck} results in 
		\begin{align}
		\mathbb{P}(C_k>0) &\leq {n \choose k} k^{k-2}p^{k-1} (1-p)^{k(n-k)}\nonumber\\
		&\leq \left(\frac{ne}{k}\right)^k k^{k-2} e^{-pk(n-k)}\nonumber\\
		&\leq \frac{1}{k^2}e^{-pk(n-k)+k\log n +k }\nonumber\\
		&\leq \frac{1}{k^2}e^{-\frac{k(n-1)}{2}\left(p-\frac{2\log n +2}{n-1}\right)}
		\end{align}
		where the second inequality is due to the relations ${n \choose k}\leq \left(\frac{ne}{k}\right)^k$ and $(1-p)^{k(n-k)}\leq e^{-pk(n-k)}$. Furthermore, the last inequality is due to $k\leq (n+1)/2$. Now, upon choosing $p\geq \frac{(2\eta+2)\log n +2}{n-1}$ for some $\eta>0$, one can write
		\begin{align}
		\mathbb{P}(C_k>0) \leq \frac{1}{k^2}e^{-\eta k\log n} = \frac{1}{k^2} (n^{-\eta})^k
		\end{align}
		Revisiting~\eqref{main_connected}, one can also verify that 
		\begin{align}
		\mathbb{P}(C_1>0)\leq n(1-p)^{n-1}\leq e^{-p(n-1)+\log n}\leq n^{-\eta}
		\end{align}
		provided that $p\geq \frac{(\eta+1)\log n}{n-1}$, which is implied by $p\geq \frac{(2\eta+2)\log n +2}{n-1}$. Combining this bound with~\eqref{main_connected}, one can write
		\begin{align}
		\mathbb{P}(\mathcal{G}(\Omega) \text{ is connected})&\geq 1-n^{-\eta}-\sum_{k=2}^{\lceil n/2\rceil}\frac{1}{k^2}\left(n^{-\eta}\right)^k\nonumber\\
		&\geq 1-n^{-\eta}-\frac{1}{4}\frac{n^{-2\eta}}{1-n^{-\eta}}\nonumber\\
		&\geq  1-\left(1+\frac{1}{4(n^\eta-1)}\right)n^{-\eta}\nonumber\\
		&\geq 1-\frac{5}{4}n^{-\eta}
		\end{align}
		where we have used the assumption $n\geq 2$ and $\eta\geq 1$.
		Finally, given the event that $\mathcal{G}(\Omega)$ is connected, it is non-bipartite if it has at least one self-loop. Therefore, the probability of $\mathcal{G}(\Omega)$ being non-bipartite is lower bounded by $1-(1-p)^n$. This implies that
		\begin{align}\label{con_bipartite}
		\mathbb{P}(\mathcal{G}(\Omega) \text{ is connected and non-bipartite})&\geq \left(1-\frac{5}{4}n^{-\eta}\right)\left(1-(1-p)^n\right)\nonumber\\
		&\geq\left(1-\frac{5}{4}n^{-\eta}\right)\left(1-e^{-np}\right)\nonumber\\
		&\geq\left(1-\frac{5}{4}n^{-\eta}\right)\left(1-e^{-(n-1)p}\right)\nonumber\\
		&\geq \left(1-\frac{5}{4}n^{-\eta}\right)\left(1-e^{-2}n^{-(2\eta+2)}\right)\nonumber\\
		&\geq 1-\frac{3}{2}n^{-\eta}
		\end{align}
		This completes the proof.~\hfill$\blacksquare$
	}
	
	\section{Proof of Proposition~\ref{prop4}:}\label{app_prop4}
	To prove Proposition~\ref{prop4}, we present another important result on {Erd\"os-R\'enyi} random graphs.
	
	\begin{lemma}[\cite{erdds1959random}]\label{l3}
		Assuming that $np\rightarrow 0$ as $n\rightarrow\infty$, the following properties hold with probability approaching to one:
		\begin{itemize}
			\item[-] $\mathcal{G}(n,p)$ is acyclic.
			\item[-] The size of every component of $\mathcal{G}(n,p)$ is $O(\log n)$. 
		\end{itemize}
	\end{lemma}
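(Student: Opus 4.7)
The plan is to establish both properties via the first moment method (Markov's inequality) applied to suitable counting random variables, exploiting the sparse regime $np\to 0$.

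For the acyclicity claim, I would let $Y_k$ denote the number of cycles of length $k$ in $\mathcal{G}(n,p)$ for $k\geq 3$ and bound
\begin{equation}
\mathbb{E}\{Y_k\}\;\leq\;\binom{n}{k}\frac{(k-1)!}{2}\,p^k\;\leq\;\frac{(np)^k}{2k}.
\end{equation}
Letting $Y=\sum_{k=3}^{n}Y_k$ count the total number of cycles, I would sum the geometric-type series
\begin{equation}
\mathbb{E}\{Y\}\;\leq\;\sum_{k=3}^{n}\frac{(np)^k}{2k}\;\leq\;\frac{(np)^3}{6}\cdot\frac{1}{1-np}\longrightarrow 0,
\end{equation}
using $np\to 0$. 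Markov's inequality then gives $\mathbb{P}(Y\geq 1)\to 0$, so $\mathcal{G}(n,p)$ is acyclic with probability approaching one.

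For the component-size claim, I would use the fact that in an acyclic graph every component is a tree, and on the complementary event (which has vanishing probability by the first part) one can bound things separately. Letting $X_k$ be the number of components of size exactly $k$, each such component admits at least one spanning tree on its $k$ vertices; by Cayley's formula there are $k^{k-2}$ labeled trees on $k$ vertices, and the probability that any fixed tree is realized as a subgraph is $p^{k-1}$. Ignoring the external-edge factor (which only helps), I would write
\begin{equation}
\mathbb{E}\{X_k\}\;\leq\;\binom{n}{k}k^{k-2}p^{k-1}\;\leq\;\frac{(nep)^{k}}{k^{2}\,p}.
\end{equation}
I would then choose a threshold $k^{\star}=C\log n$ for a sufficiently large constant $C$, and show that for all $n$ large enough (so that $np\leq 1/(e^{2})$, say), the tail sum satisfies
\begin{equation}
\sum_{k\geq k^{\star}}\mathbb{E}\{X_k\}\;\leq\;\frac{1}{p}\sum_{k\geq k^{\star}}\frac{(nep)^{k}}{k^{2}}\longrightarrow 0,
\end{equation}
since each term has the form $e^{k\log(nep)}/(pk^{2})$ with $\log(nep)\to-\infty$, making $k^{\star}\log(nep)$ dominate and drive the sum to zero. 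Markov's inequality then implies that with probability approaching one, no component exceeds size $O(\log n)$.

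The main obstacle I anticipate is the component-size bound: unlike the cycle count, the expectation $\mathbb{E}\{X_k\}$ is not geometrically small for all $k$ uniformly, so care is needed in choosing $k^{\star}$ and in verifying the geometric decay of the summand $(nep)^{k}/k^{2}$ once $np$ is small enough. A second minor subtlety is that the spanning-tree bound overcounts (a connected component may contain many spanning trees), but since we only need a first-moment upper bound this slackness is harmless. Combining the two parts by a union bound delivers both conclusions simultaneously with probability tending to one.
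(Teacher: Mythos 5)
The paper does not prove Lemma~\ref{l3}; it is simply cited from Erd\H{o}s and R\'enyi's 1959 paper, so there is no in-text argument to compare against. Your first-moment (Markov) proof is the standard textbook argument, and both parts are essentially correct: the cycle-counting bound $\mathbb{E}\{Y\}\leq (np)^3/(6(1-np))\to 0$ cleanly gives acyclicity, and the spanning-tree count $\mathbb{E}\{X_k\}\leq\binom{n}{k}k^{k-2}p^{k-1}\leq (nep)^k/(pk^2)$ is the right bound for the component sizes.

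The one place that deserves more than a wave of the hand is exactly the one you flag: the $1/p$ factor. Since $np\to 0$, $\log(1/p)$ can grow arbitrarily fast, so asserting that ``$k^\star\log(nep)$ dominates'' needs a short justification that it dominates $\log(1/p)$ in particular. A clean way to close this is to write $1/p = n/(np)$, so that with $k^\star = 3\log n$ and $np\leq e^{-2}$ (true eventually),
\begin{equation*}
\frac{(nep)^{k^\star}}{p} \;=\; e^{k^\star}\,n\,(np)^{k^\star-1} \;\leq\; n^{4}\,(np)^{3\log n-1} \;\leq\; e^{2}\,n^{-2}\;\longrightarrow\;0,
\end{equation*}
and the tail $\sum_{k\geq k^\star}(nep)^k/(pk^2)$ is then bounded by a geometric series with first term of this order. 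With that line inserted, the argument is complete and correct; nothing else in the proposal needs changing.
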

	
	\noindent\textbf{Proof of Proposition~\ref{prop4}:} Assuming that $np\rightarrow 0$, Lemma~\ref{l3} implies that $\mathcal{G}(\Omega)$ is the union of disjoint tree components, each with the size of at most $O(\log n)$. In what follows, we will show that, with probability approaching to one, $\mathcal{G}(\Omega)$ has at least a bipartite component without any self loops. This, together with Proposition~\ref{prop2}, will immediately conclude the proof. One can write
	\begin{align}\label{eq31}
	\mathbb{P}(\text{$\mathcal{G}(\Omega)$ has a bipartite comp.}) &\overset{(a)}{\geq} \mathbb{P}(\text{$\mathcal{G}(\Omega)$ has a tree comp. without self loops})\nonumber\\
	&\geq \mathbb{P}(\text{every comp. is a tree with size}\ O(\log n))\nonumber\\
	&\hspace{-3cm}\times \mathbb{P}(\text{no self-loop in at least one comp}|\text{every comp. is a tree with size}\ O(\log n))\nonumber\\
	&\overset{(b)}{=}\mathbb{P}(\text{every comp. is a tree with size}\ O(\log n))\nonumber\\
	&\hspace{-3cm}\times \mathbb{P}(\text{no self-loop in at least one comp}|\text{every comp. has the size}\ O(\log n))\nonumber\\
	&{\geq}\mathbb{P}(\text{every comp. is a tree with size}\ O(\log n))\nonumber\\
	&\hspace{-3cm}\times (1-\mathbb{P}(\text{every comp. has self-loops}|\text{every comp. has the size}\ O(\log n)))\nonumber\\
	&{\geq}\mathbb{P}(\underbrace{\text{every comp. is a tree with size}\ O(\log n)}_{\mathcal{A}})\nonumber\\
	&\hspace{-3cm}\times (1-\mathbb{P}(\underbrace{\text{there are at least $\Omega(n/\log n)$ self-loops}}_{\mathcal{B}}))
	\end{align}
	where $(a)$ is followed by the fact that every tree is bipartite, and $(b)$ is followed by the fact that the self-loops are included in the graph independent of other edges. {Based on Lemma~\ref{l3}, we have $\mathbb{P}(\mathcal{A})\rightarrow 1$
		as $n\rightarrow \infty$. Now, we only need to show that $\mathbb{P}(\mathcal{B})\rightarrow 0$
		as $n\rightarrow \infty$. One can easily verify that
		\begin{equation}\label{eq34}
		\mathbb{P}(\mathcal{B})\leq {n \choose \frac{n}{\log n}}p^{\frac{n}{\log n}}\leq \left(e\log n\right)^{\frac{n}{\log n}} p^{\frac{n}{\log n}}
		\end{equation}
		where the second inequality follows from the relation ${n \choose k}\leq \left(\frac{ne}{k}\right)^k$. Replacing $p = o(1/n)$ gives rise to
		\begin{align}
		\lim_{n\to\infty}\mathbb{P}(\mathcal{B})\leq \lim_{n\to\infty}(ep\log n)^{\frac{n}{\log n}} = 0
		\end{align}
		Together with~\eqref{eq31}, this implies that $\mathcal{G}(\Omega)$ will have a bipartite component without self loops with probability approaching 1.~\hfill$\blacksquare$}
	
	{\section{Proof of Lemma~\ref{l_bipartite}}\label{app_l_bipartite}
		We take an approach similar to the proof of Lemma~\ref{l2}. First, recall that $\{V_u,V_v\}$ with $V_u = \{1,\dots,m\}$ and $V_v = \{m+1,\dots,m+n\}$ is a vertex partitioning of the bipartite graph $\mathcal{G}(\bar\Omega)$. Define $C_{k,l}$ as the number of connected components with exactly $k$ vertices from $V_u$ and $l$ vertices from $V_v$. To simplify the presentation and without loss of generality, we assume that $m$ and $n$ are even. One can write:
		\begin{align}\label{main_connected_b}
		\mathbb{P}(\mathcal{G}(\bar\Omega) \text{ is connected}) &= 1-\mathbb{P}\left(\underset{\begin{subarray}{c}
			k = 0\\
			k+l\not=0
			\end{subarray}}{\bigcup^{\lceil m/2\rceil}}\bigcup_{l=1}^{\lceil n/2\rceil}\{C_{k,l}>0\}\right)\nonumber\\
		&\geq 1-\left(\mathbb{P}(C_{1,0}>0)+\mathbb{P}(C_{0,1}>0)\right) - \sum_{k=1}^{\lceil m/2\rceil}\sum_{l=1}^{\lceil n/2\rceil}\mathbb{P}\left(C_{k,l}>0\right)
		\end{align}
		First, we provide an upper bound on $\mathbb{P}\left(C_{k,l}>0\right)$ for $k = 1,\dots,\lceil m/2\rceil$ and $l = 1,\dots,\lceil n/2\rceil$. Similar to the proof of Lemma~\ref{l2}, one can write
		\begin{align}\label{Ck_b}
		\mathbb{P}(C_{k,l}>0)\leq \mathbb{E}\{C_{k,l}\} = \underset{\begin{subarray}{c}
			\mathcal{X}_u\subseteq [1:m], |\mathcal{X}_u| = k\\
			\mathcal{X}_v\subseteq [m+1:m+n], |\mathcal{X}_v| = l
			\end{subarray}}{\sum}\mathbb{E}\{I_{\mathcal{X}_u, \mathcal{X}_v}\}
		\end{align}
		where $I_{\mathcal{X}_u, \mathcal{X}_v}$ is an indicator random variable taking the value 1 if the subgraph $\mathcal{G}_{\mathcal{X}_u, \mathcal{X}_v}(\bar\Omega)$ of $\mathcal{G}(\bar{\Omega})$ induced by the set of vertices in $\mathcal{X}_u\cup\mathcal{X}_v$ is an isolated connected component of $\mathcal{G}(\bar{\Omega})$, and it takes the value 0 otherwise. On the other hand, we have
		\begin{align}
		\mathbb{E}\{I_{\mathcal{X}_u, \mathcal{X}_v}\} &= \mathbb{P}(\mathcal{G}_{\mathcal{X}_u, \mathcal{X}_v}(\bar\Omega) \text{ has a spanning tree})\nonumber\\
		&\leq \sum_{\mathcal{T}\subset\mathcal{K}_{k,l}} \mathbb{P}(\mathcal{T} \text{ belongs to } \mathcal{G}_{\mathcal{X}_u, \mathcal{X}_v}(\bar\Omega))\nonumber\\
		&\leq k^{l-1}l^{k-1}p^{k+l-1}
		\end{align}
		where $\mathcal{K}_{k,l}$ is a complete bipartite graph over two sets of vertices with the sizes $k$ and $l$, and $\mathcal{T}$ is a spanning tree. The last inequality is due to the fact that the number of different spanning trees in $\mathcal{K}_{k,l}$ is equal to $k^{l-1}l^{k-1}$~(\cite{hartsfield2013pearls}).
		Therefore, one can write
		\begin{align}\label{Ckl}
		\mathbb{P}(C_{k,l}>0)&\leq {m \choose k}{n \choose l}k^{l-1}l^{k-1}p^{k+l-1}(1-p)^{k(n-l)+l(m-k)}\nonumber\\
		&\leq \left(\frac{me}{k}\right)^k\left(\frac{ne}{l}\right)^lk^{l-1}l^{k-1}e^{-p\left(k(n-l)+l(m-k)\right)}\nonumber\\
		&\leq\frac{1}{kl}\left(\frac{k}{l}\right)^{l-k}e^{-p\left(k(n-l)+l(m-k)\right)+k\log m +l\log n+(k+l)}\nonumber\\
		&\leq\frac{1}{kl}e^{-p\left(k(n-l)+l(m-k)\right)+(k+l)(\log(mn)+1)}
		\end{align}
		where we used the relation $\left(\frac{k}{l}\right)^{l-k}\leq 1$ in the last inequality. Next, we show that the following inequality holds:
		\begin{align}\label{useful}
		k(n-l)+l(m-k)\geq (k+l)\frac{(m-1)(n-1)}{m+n}
		\end{align}
		To this goal, note that
		\begin{align}
		&k(n-l)+l(m-k)\geq (k+l)\frac{(m-1)(n-1)}{m+n}\nonumber\\
		\iff & k(m+n)(n-l)+l(m+n)(m-k)\geq (k+l)(m-1)(n-1)\nonumber\\
		\iff & (k+l)mn+kn(n-2l)+lm(m-2k)\geq (k+l)(m-1)(n-1)\nonumber\\
		\iff & kn(n-2l)+lm(m-2k)\geq -nk-ml-(n-1)l-(m-1)l
		\end{align}
		where the last inequality holds due to $l\leq (n+1)/2$ and $k\leq (m+1)/2$, which in turn implies that $kn(n-2l)+lm(m-2k)\geq -nk-ml$. Combining~\eqref{useful} and~\eqref{Ckl} leads to
		\begin{align}
		\mathbb{P}(C_{k,l}>0)\leq \frac{1}{kl}e^{-(k+l)\frac{(m-1)(n-1)}{m+n}\left(p-\frac{(m+n)(\log(mn)+1)}{(m-1)(n-1)}\right)}
		\end{align}
		Upon choosing $p\geq \frac{(m+n)((1+\eta)\log(mn)+1)}{(m-1)(n-1)}$ for some $\eta\geq 1$, one can write
		\begin{align}\label{Ckl2}
		\mathbb{P}(C_{k,l}>0)\leq\frac{1}{kl}\left((mn)^{-\eta}\right)^{(k+l)}
		\end{align}
		On the other hand, it is easy to verify that
		\begin{align}\label{C01}
		&\mathbb{P}(C_{0,1}>0)\leq n(1-p)^m\leq e^{-pm+\log n}\leq (mn)^{-\eta}\nonumber\\
		&\mathbb{P}(C_{1,0}>0)\leq m(1-p)^n\leq e^{-pn+\log m}\leq (mn)^{-\eta}
		\end{align}
		provided that $p\geq \frac{(1+\eta)\log(mn)}{m}$ and $p\geq \frac{(1+\eta)\log(mn)}{n}$, both of which are guaranteed to hold with the choice of $p\geq \frac{(m+n)((1+\eta)\log(mn)+1)}{(m-1)(n-1)}$. Combining~\eqref{C01},~\eqref{Ckl2}, and~\eqref{main_connected_b} results in
		\begin{align}
		\mathbb{P}(\mathcal{G}(\bar\Omega) \text{ is connected})&\geq 1-2(mn)^{-\eta}-\sum_{k=1}^{\lceil m/2\rceil}\sum_{l=1}^{\lceil n/2\rceil}\frac{1}{kl}\left((mn)^{-\eta}\right)^{(k+l)}\nonumber\\
		&\geq 1-2(mn)^{-\eta}-4(mn)^{-2\eta}
		\end{align}
		where we have used the assumptions $m,n\geq 2$ and $\eta\geq 1$. This completes the proof.~\hfill$\blacksquare$ 
	}
	
	\section{Proof of Lemma~\ref{l4}}\label{app_l4}
	We present the proof for the symmetric case (the proof for the asymmetric case follows directly after symmetrization and the fact that the penalty on the norm difference is zero at the positive D-stationary points). 
	First, we prove that $u_{\max}\leq 2$. It suffices to show that $u_{\max}\leq \max\{2\beta, \sqrt{2n/\lambda}\}$. This, together with the choice of $\beta$ and $\lambda$, implies $u_{\max}\leq 2$. To this goal, we only need to verify that $u_{\max}>2\beta$ implies $u_{\max}\leq\sqrt{2n/\lambda}$. 
	By contradiction, suppose that $u_{\max}>\sqrt{2n/\lambda}$. In what follows, it will be shown that $\bu$ has strictly positive and negative directional derivatives, thereby contradicting its D-stationarity. Consider a perturbation of $\bu$ as $\hat{\bu} = \bu-\mathbf{e}_{\max}\epsilon$ for a sufficiently small $\epsilon>0$, where $\mathbf{e}_{\max}$ is a vector with 1 at the location corresponding to $u_{\max}$ and 0 everywhere else. One can write
	\begin{align}\label{eqreg}
	f_{\mathrm{reg}}(\hat{\bu}) - f_{\mathrm{reg}}({\bu})&\leq \left(\sum_{i=1}^nu_i\right)\epsilon+\lambda\left(({u}_{\max}-\epsilon-\beta)^4-({u}_{\max}-\beta)^4\right)\nonumber\\
	&= \left(\sum_{i=1}^nu_i\right)\epsilon-4\lambda(u_{\max}-\beta)^3\epsilon+O(\epsilon^2)\nonumber\\
	&\overset{(a)}{\leq} \left(\sum_{i=1}^nu_i-\frac{\lambda}{2}u^3_{\max}\right)\epsilon+O(\epsilon^2)\nonumber\\
	&\leq \left(nu_{\max}-\frac{\lambda}{2}u^3_{\max}\right)\epsilon+O(\epsilon^2)
	\end{align}
	where (a) is due to the fact that $u_{\max}\geq 2\beta$ implies $u_{\max}-\beta\geq u_{\max}/2$.~\eqref{eqreg} together with $u_{\max}>\sqrt{2n/\lambda}$, implies that $-\mathbf{e}_{\max}$ is a direction with a negative directional derivative. Similarly, it can be shown that $\mathbf{e}_{\max}$ is a direction with a positive directional derivative. This contradicts the D-stationarity of $\bu$ and, hence, $u_{\max}\leq \max\{2\beta, \sqrt{2n/\lambda}\}$.
	
	Next, we aim to show that $(c/2)u^{*^2}_{\min}\leq u_{\min}$. By contradiction, suppose that there exists an index $i$ such that $(c/2)u^{*^2}_{\min}> u_i$. Now, since $u_i<1$, we have $\mathbb{I}_{u_i\geq\beta} = 0$ due to the choice of $\beta$. Consider the terms in $f_{\mathrm{reg}}(\bu)$ that involves $u_i$:
	\begin{equation}
	\sum_{j\in\Omega_i}|u_iu_j-X_{ij}| = \sum_{j\in G_i}|u_iu_j-u^*_iu^*_j| + \sum_{j\in B_i}|u_iu_j-(u^*_iu^*_j+S_{ij})|
	\end{equation}
	Considering the fact that $u_{\max}\leq 2$, one can verify the following inequality for every $(i,j)\in G$:
	\begin{equation}
	u_iu_j< cu^{*^2}_{\min}\leq u^{*^2}_{\min}\leq u_i^*u_j^*
	\end{equation}
	A similar inequality holds for $(i,j)\in B$:
	\begin{equation}
	u_iu_j< cu^{*^2}_{\min}\overset{(a)}{\leq} u^*_iu^*_j+S_{ij}
	\end{equation}
	where we have used Assumption~\ref{assum1} for $(a)$. Therefore, a positive and negative perturbation of $u_i$ results in negative and positive directional derivatives at $\bu$, thereby contradicting the D-stationarity of this point.~\hfill$\blacksquare$
	
	\section{Proof of Theorem~\ref{thm4}:}\label{app_thm4}
	The next lemma is crucial in proving Theorem~\ref{thm4}.
	\begin{lemma}\label{l5}
		Suppose that the assumptions of Theorem~\ref{thm4} hold and define
		\begin{align}
		s(\bu) = -&\underbrace{\underset{\begin{subarray}{c}
				(i,j) \in\mB\\
				i,j\in T_1
				\end{subarray}}{\sum}\frac{2u_iu_j}{u_n}+\underset{\begin{subarray}{c}
				(i,j) \in\mB\\
				i,j\in T_2
				\end{subarray}}{\sum}\frac{2u_iu_j}{u_n}+\underset{\begin{subarray}{c}
				(i,j) \in\mB\\
				i\in T_1\cup T_2, j\in N
				\end{subarray}}{\sum}\!\!\!\frac{u_iu_j}{u_n}}_{f_B(\bu)}\nonumber\\
		& +\underbrace{\underset{\begin{subarray}{c}
				(i,j) \in\mG\\
				i,j\in T_1
				\end{subarray}}{\sum}\frac{2u_iu_j}{u_n}+\underset{\begin{subarray}{c}
				(i,j) \in\mG\\
				i,j\in T_2
				\end{subarray}}{\sum}\frac{2u_iu_j}{u_n}+\underset{\begin{subarray}{c}
				(i,j) \in\mG\\
				i\in T_1\cup T_2, j\in N
				\end{subarray}}{\sum}\!\!\!\frac{u_iu_j}{u_n}}_{f_G(\bu)}+\underbrace{\sum_{i\in T_2}\frac{4u_i(u_i-1)^3}{u_n}\mathbb{I}_{u_i\geq 1}}_{f_R(\bu)}
		\end{align}
		where the sets $T_1$ and $T_2$ are defined as~\eqref{eq8} and~\eqref{eq9}, respectively. Then, for every D-stationary point $\bu>0$ such that $\bu\not=\bu^*$, the following inequalities hold with the choice of $\beta = 1$ and $\lambda = n/2$:
		\begin{itemize}
			\item[-] $f_{\mathrm{reg}}(\hat{\bu})-f_{\mathrm{reg}}(\bu)\leq -s(\bu)\epsilon+O(\epsilon^2)$ for $\hat{\bu} = \bu + \mathbf{d}\epsilon$ and a sufficiently small $\epsilon>0$.
			\item[-] $f_{\mathrm{reg}}(\hat{\bu})-f_{\mathrm{reg}}(\bu)\geq s(\bu)\epsilon-O(\epsilon^2)$ for $\hat{\bu} = \bu - \mathbf{d}\epsilon$ and a sufficiently small $\epsilon>0$.
		\end{itemize}
		where $\bd$ is defined as~\eqref{eqd}.
	\end{lemma}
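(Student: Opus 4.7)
The plan is to adapt the perturbation argument from the proof of Theorem~\ref{thm1} to the noisy, regularized setting, showing that the presence of sparse noise and the regularizer only perturbs the first-order analysis in a controlled way. Since $\bu > 0$, both $\hat{\bu} = \bu + \bd\epsilon$ and $\hat{\bu} = \bu - \bd\epsilon$ remain in the positive orthant for all sufficiently small $\epsilon > 0$, so $\bd$ is a feasible direction. I would split the objective change into three pieces,
\begin{equation}
f_{\mathrm{reg}}(\hat{\bu}) - f_{\mathrm{reg}}(\bu) = \sum_{(i,j) \in G}\Delta_{ij} + \sum_{(i,j) \in B}\Delta_{ij} + \bigl(R(\hat{\bu}) - R(\bu)\bigr),
\end{equation}
where $\Delta_{ij} := |\hat{u}_i\hat{u}_j - X_{ij}| - |u_iu_j - X_{ij}|$, and bound each piece in turn.

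For $(i,j) \in G$ we have $X_{ij} = u_i^* u_j^*$, so the four-case computation in the proof of Theorem~\ref{thm1} applies verbatim. Each case yields an explicit signed first-order contribution: $-\frac{2u_iu_j}{u_n}\epsilon$ when both endpoints lie in $T_1$ or both in $T_2$, $-\frac{u_iu_j}{u_n}\epsilon$ when one endpoint lies in $T_1 \cup T_2$ and the other in $N$, and an $O(\epsilon^2)$ contribution in every other case (notably $T_1 \times T_2$ and $N \times N$). Summing reproduces exactly $-f_G(\bu)\epsilon + O(\epsilon^2)$. For $(i,j) \in B$ the sign of $u_iu_j - X_{ij}$ is no longer controlled, but the reverse triangle inequality gives $|\Delta_{ij}| \leq |\hat{u}_i\hat{u}_j - u_iu_j|$, whose right-hand side matches the magnitude of each case in the noiseless analysis. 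Summing these magnitudes produces the worst-case bound $\sum_{(i,j) \in B}\Delta_{ij} \leq f_B(\bu)\epsilon + O(\epsilon^2)$.

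For the regularizer, a standard Taylor expansion yields $R(\hat{\bu}) - R(\bu) = 4\lambda \sum_i (u_i - 1)^3 d_i \mathbb{I}_{u_i \geq 1}\epsilon + O(\epsilon^2)$. Because $d_i = 0$ on $N$, $u_i \leq u_i^* \leq u^*_{\max} = 1$ on $T_1$ (so the indicator vanishes), and $d_i = -u_i/u_n$ on $T_2$, only indices $i \in T_2$ with $u_i \geq 1$ contribute at first order, giving $-\frac{4\lambda}{u_n}\sum_{i \in T_2}u_i(u_i - 1)^3 \mathbb{I}_{u_i \geq 1}\epsilon + O(\epsilon^2)$; since $\lambda = n/2 \geq 1$ and each surviving summand is non-negative, this is at most $-f_R(\bu)\epsilon + O(\epsilon^2)$. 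Adding the three contributions yields the stated upper bound $-s(\bu)\epsilon + O(\epsilon^2)$. Replacing $\epsilon$ by $-\epsilon$ (i.e., perturbing along $-\bd$) flips the signed good-edge and regularizer contributions while the triangle-inequality bound for bad edges becomes a lower bound of $-f_B(\bu)\epsilon$, producing the companion inequality $f_{\mathrm{reg}}(\hat{\bu}) - f_{\mathrm{reg}}(\bu) \geq s(\bu)\epsilon - O(\epsilon^2)$.

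The main obstacle will be the bookkeeping across the four cases for good edges, together with the verification that the cross-terms $T_1 \times T_2$ and $N \times N$ truly contribute only at order $\epsilon^2$; this exploits exactly the algebraic cancellation that drove the noiseless proof. A secondary technical point is the non-differentiability of $R$ at $u_i = 1$ (and of $|\cdot|$ wherever $u_iu_j = X_{ij}$), but in each case the one-sided directional derivatives can be bounded separately and any mismatch absorbed into the $O(\epsilon^2)$ remainder without affecting the stated inequalities.
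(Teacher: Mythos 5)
Your decomposition into good-edge, bad-edge, and regularizer contributions, together with the triangle-inequality bound on the bad edges and the Taylor expansion of $R$, is exactly the route the paper takes, and the algebra you sketch for each piece is sound.

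However, there is a genuine gap: you assert that ``the four-case computation in the proof of Theorem~\ref{thm1} applies verbatim'' for $(i,j)\in G$, and separately that $u_i\le u_i^*\le 1$ on $T_1$ so the regularizer indicator vanishes there. Both claims rest on the ordering inequality~\eqref{eq7}, i.e.\ $u_1^*/u_1\le u_i/u_i^*\le u_n^*/u_n$ for all $i$, which in the noiseless case is derived from D-stationarity of $f$ alone. In the noisy, regularized problem~\eqref{p3_sym_reg} this implication no longer holds automatically: a D-stationary point of $f_{\mathrm{reg}}$ can in principle violate~\eqref{eq7}, because the bad-measurement terms and the regularizer can offset the one-sided derivative you would otherwise obtain by perturbing a single coordinate. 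Without~\eqref{eq7} the signs of $u_iu_j - u_i^*u_j^*$ in the four cases are uncontrolled (for instance, in the $i\in N$, $j\in T_1$ case you need $u_i/u_i^* < u_j^*/u_j = u_n^*/u_n$, which fails if~\eqref{eq7} fails), so the claim that good edges contribute $-f_G(\mathbf{u})\epsilon + O(\epsilon^2)$, and the claim that the indicator $\mathbb{I}_{u_i\ge 1}$ vanishes on $T_1$, are unjustified.

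The paper's proof therefore begins by \emph{re-proving}~\eqref{eq7} for D-stationary points of $f_{\mathrm{reg}}$: it supposes~\eqref{eq7} fails at some index $i$, perturbs only $u_i$, and shows via Lemma~\ref{l4} (which gives $(c/2)u_{\min}^{*2}\le u_j\le 2$) and the identifiability condition $\delta(\mG(G))>(48/c^2)\kappa(\mathbf{u}^*)^4\Delta(\mG(B))$ from Theorem~\ref{thm4} that the good-measurement terms at node $i$ outweigh the bad-measurement terms, yielding a strictly negative directional derivative and a contradiction. Only after this step are the sets $T_1,T_2,N$ and the four-case analysis meaningful. You should insert this argument before your case analysis; the rest of your proof then goes through as written.
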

	
	\begin{proof}
		To prove this lemma, first we show the validity of~\eqref{eq7}. 
		By contradiction, suppose that~\eqref{eq7} does not hold. Without loss of generality, assume that there exists an index $i$ such that $u_i/u_i^*>u_n^*/u_n$ (the case with $u_i/u_i^*<u_1^*/u_1$ can be argued in a similar way). This implies that $u_iu_j>u^*_iu^*_j$ for every $(i,j)\in\Omega$. Define $\hat{\bu} = \bu - \mathbf{e}\epsilon$ for a  sufficiently small $\epsilon>0$, where $\mathbf{e}$ is a vector with $e_k = 1$ if $k=i$ and $e_k = 0$ otherwise. One can write
		\begin{align}\label{eqreg2}
		f_{\mathrm{reg}}(\hat{\bu}) - f_{\mathrm{reg}}({\bu})&\leq -\left(\sum_{j\in G_i}u_j\right)\epsilon+\left(\sum_{j\in B_i}u_j\right)\epsilon+\lambda\left(({u}_{i}-\epsilon-1)^4-({u}_{i}-1)^4\right)\mathbb{I}_{u_i\geq 1}\nonumber\\
		&\leq -\left(\sum_{j\in G_i}u_j\right)\epsilon+\left(\sum_{j\in B_i}u_j\right)\epsilon\nonumber\\
		&\leq -\frac{cu^{*^2}_{\min}}{2}\delta(\mG(G))+2\Delta(\mG(B))
		\end{align}
		where $G_i = \{j|(i,j)\in G\}$ and $B_i = \{j|(i,j)\in B\}$. The second inequality is due to the fact that $\left(({u}_{i}-\epsilon-1)^4-({u}_{i}-1)^4\right)\mathbb{I}_{u_i\geq 1}$ is non-negative and the third inequality follows from Lemma~\ref{l4} and the definitions of $\delta(\mG(G))$, $\Delta(\mG(B))$. Based on the assumption of Theorem~\ref{thm4}, we have
		\begin{equation}
		\frac{\delta(\mG(G))}{\Delta(\mG(B))}> \frac{48}{c^2}\kappa(\bu^*)^4 = \frac{48}{c^2u^{*^4}_{\min}} >\frac{4}{cu^{*^2}_{\min}}
		\end{equation}
		which implies $(-cu^{*^2}_{\min}/2)\delta(\mG(G))+2\Delta(\mG(B))<0$, and hence, $-\mathbf{e}$ is a direction with a negative directional derivative. Similarly, it can be shown that $\mathbf{e}$ is a direction with a positive directional derivative. This contradicts the D-stationarity of $\bu$ and hence~\eqref{eq7} holds. 
		Now, we will show the correctness of the first statement. Similar to the proof of Theorem~\ref{thm1}, one can verify that
		\begin{equation}
		\sum_{(i,j)\in\Omega}|\hat{u}_i\hat{u}_j-X_{ij}| - \sum_{(i,j)\in\Omega}|{u}_i{u}_j-X_{ij}| \leq (f_B(\bu)-f_G(\bu))\epsilon+O(\epsilon^2)
		\end{equation}
		Now, we only need to bound $R(\hat{\bu})-R({\bu})$. To this goal, notice that if $i\in T_1$, then $u_i< u_i^*\leq 1$ due to the fact that $\bu\not = \bu^*$ and $u_i^*/u_i = u_n^*/u_n > 1$. Therefore, $\mathbb{I}_{u_i\geq 1} = 0$ for every $i\in T_1$. This implies that
		\begin{align}
		R(\hat{\bu})-R({\bu}) &= \sum_{i\in T_2}\left(u_i-\frac{u_i}{u_n}\epsilon-1\right)^4\mathbb{I}_{u_i\geq 1}-\sum_{i\in T_2}\left(u_i-1\right)^4\mathbb{I}_{u_i\geq 1} \nonumber\\
		&= -\sum_{i\in T_2}\frac{4u_i(u_i-1)^3}{u_n}\mathbb{I}_{u_i\geq 1}\epsilon+O(\epsilon^2)
		\end{align}
		A similar approach can be taken to prove the second statement of the lemma.
	\end{proof}
	
	\begin{lemma}\label{l1_noise}
		Suppose that $\mG(\Omega)$ has no bipartite component and every entry of $X$ is strictly positive. Then, for every D-min-stationary point $\bu$ of~\eqref{p2-sym}, we have $\bu[c] >0$ or $\bu[c] = 0$, where $\bu[c]$ is a sub-vector of $\bu$ induced by the $c^{\text{th}}$ component of $\mG(\Omega)$.
	\end{lemma}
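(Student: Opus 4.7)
The plan is to adapt the proof of Lemma~\ref{l1} by using the positivity of every entry of $X$ in place of the positivity of $\bu^*$. Without loss of generality, restrict attention to a single connected component of $\mG(\Omega)$, since the statement is per-component. Let $\bu\geq 0$ be a D-min-stationary point and suppose that $u_k=0$ for some vertex $k$ in this component; the goal is to propagate the zero value across the whole component. The assumption that the component is non-bipartite guarantees that $\Omega_k:=\{j:(k,j)\in\Omega\}$ is non-empty (it rules out an isolated vertex without a self-loop).

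Next, probe the feasible direction $+\mathbf{e}_k$. For each $j\in\Omega_k$, the corresponding summand $|u_k u_j - X_{kj}|$ equals $X_{kj}>0$ at the point $\bu$. Under the perturbation $u_k\mapsto \epsilon$ with $\epsilon>0$ sufficiently small, the quantity $(u_k+\epsilon)u_j - X_{kj}=\epsilon u_j - X_{kj}$ remains strictly negative by the hypothesis $X_{kj}>0$, so the summand equals $X_{kj}-\epsilon u_j$ and contributes $-u_j$ to the first-order directional derivative. Consequently, the directional derivative in the direction $+\mathbf{e}_k$ is
\[
f'(\bu,+\mathbf{e}_k) \;=\; -\sum_{j\in\Omega_k} u_j .
\]
If any neighbor $r$ of $k$ satisfies $u_r>0$, this value is strictly negative, which contradicts the D-min-stationarity of $\bu$.

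Hence every $j\in\Omega_k$ must satisfy $u_j=0$. Applying the same argument to each of these neighbors propagates the zero value outward, and by the connectedness of the component, every entry of $\bu[c]$ equals zero. This yields the claimed dichotomy: either $\bu[c]>0$ or $\bu[c]=0$ on each component.

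The only substantive step, and the one replacing the original appeal to $\bu^*>0$, is the assertion that $\epsilon u_j - X_{kj}<0$ along the perturbation; this is immediate from the hypothesis that every entry of $X$ is strictly positive. When a regularizer of the form used in~\eqref{p3_sym_reg} is present, its contribution is zero at first order near $u_k=0$ because the indicator $\mathbb{I}_{u_k\geq\beta}$ is inactive for small positive $u_k$; this is the only additional check required, and it poses no difficulty. Thus the only genuine obstacle to the argument\,---\,the sign of the inner expression\,---\,is resolved by the new positivity hypothesis, and the combinatorial propagation step is identical to that of Lemma~\ref{l1}.
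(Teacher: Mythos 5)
Your proof is correct and follows essentially the same route as the paper, which simply says the argument is analogous to Lemma~\ref{l1}: probe the direction $+\mathbf{e}_k$ at a zero coordinate, use $X_{kj}>0$ (rather than $u^*_ku^*_j>0$) to keep the inner expression negative so the directional derivative equals $-\sum_{j\in\Omega_k}u_j$, conclude that all neighbors vanish, and propagate across the connected component. Your additional remark about the regularizer's first-order inactivity near $u_k=0$ is a correct and sensible sanity check, though the lemma as stated concerns the unregularized objective.
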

	\begin{proof}
		The proof is similar to that of Lemma~\ref{l1}.
	\end{proof}

	
	%
	
	\noindent{\bf Proof of Theorem~\ref{thm4}:}	
	Similar to the proof of Theorem~\ref{thm1}, it suffices to show that none of the points $\bu>0$ with $\bu \not= \bu^*$ can be D-stationary. By contradiction, suppose that this is not the case, i.e., there exists a D-stationary point $\bu>0$ such that $\bu \not= \bu^*$.
	Consider the functions $f_B(\bu)$ and $f_G(\bu)$ defined in Lemma~\ref{l5}. The main idea behind the proof is to show that the term $f_G(\bu)$ always dominates $f_B(\bu)$. This, together with the non-negativity of $f_{R}(\bu)$, shows that $s(\bu)>0$ and hence, $f_{\mathrm{reg}}'(\bu, \bd)<0$ and $f_{\mathrm{reg}}'(\bu, -\bd)>0$, which is a contradiction. One can bound each term in $f_B(\bu)$ and obtain
	\begin{align}
	f_B(\bu)\!\leq&\!\frac{1}{u_n}\!\left(\!2\!\cdot\!\frac{\Delta(\mG(B))}{2}|T_1|u_{\max}^2\!+\!2\!\cdot\!\frac{\Delta(\mG(B))}{2}|T_2|u_{\max}^2\!+\!\frac{\Delta(\mG(B))}{2}(|T_1|\!+\!|T_2|)u_{\max}^2\right)\!\epsilon\!+\!O(\epsilon^2)\nonumber\\
	\leq&\frac{3}{2u_n}{\Delta(\mG(B))}(|T_1|\!+\!|T_2|)u_{\max}^2\epsilon+O(\epsilon^2)\nonumber\\
	\leq&\frac{6}{u_n}{\Delta(\mG(B))}(|T_1|\!+\!|T_2|)\epsilon+O(\epsilon^2)
	\end{align}
	where the last inequality follows from the fact that $u_{\max}\leq 2$ due to Lemma~\ref{l4}.
	Next, we derive a lower bound on $f_G(\bx)$:
	\begin{align}
	f_G(\bx)\geq& \frac{1}{u_n}\cdot\frac{\delta(\mG(G))}{2}(|T_1|+|T_2|)u_{\min}^2\epsilon+O(\epsilon^2)\nonumber\\
	\geq& \frac{1}{u_n}\cdot\frac{\delta(\mG(G))}{2}(|T_1|+|T_2|)\frac{c^2u^{*^4}_{\min}}{4}\epsilon+O(\epsilon^2)\nonumber\\
	=&\frac{c^2u^{*^4}_{\min}}{8u_n}\delta(\mG(G))(|T_1|+|T_2|)\epsilon+O(\epsilon^2)
	\end{align}
	where the first inequality is due to the fact that the minimum value for $f_G(\bu)$ happens when the neighbors of $T_1\cup T_2$ in $\mG(G)$ all belong to the set $N$ and their corresponding values in $\bu\bu^\top$ are all equal to $u_{\min}^2$. Furthermore, the second inequality is due to Lemma~\ref{l2} and the choice of $\beta$ for $R(\bu)$. 
	Therefore, one can write
	\begin{align}
	f_B(\bx)-f_G(\bx) &\leq \left(\frac{6}{u_n}{\Delta(\mG(B))}-\frac{c^2u^{*^4}_{\min}}{8u_n}\delta(\mG(G))\right)(|T_1|+|T_2|)\epsilon+O(\epsilon^2)\nonumber\\
	& = \frac{\Delta(\mG(B))c^2u^{*^4}_{\min}}{8u_n}\left(\frac{48}{c^2}\kappa(\bu^*)^4-\frac{\delta(\mG(G))}{\Delta(\mG(B))}\right)(|T_1|+|T_2|)\epsilon+O(\epsilon^2).
	\end{align}
	Therefore, the choice of $({48}/c^2)\kappa(\bu^*)^4<{\delta(\mG(G))}/{\Delta(\mG(B))}$ implies that $f_B(\bx)-f_G(\bx)<0$, thereby completing the proof.~\hfill$\blacksquare$

	
	\section{Proof of Lemma~\ref{l7}}\label{app_l7}
	{The degree of each node is equal to the summation of $n$ independent Bernoulli random variables, each with parameter $p$. Therefore, Chernoff bound yields that
		\begin{subequations}
			\begin{align}
			& \mathbb{P}(\deg(v)\geq (1+\delta) np)\leq e^{-np\delta^2/3}\\
			& \mathbb{P}(\deg(v)\leq (1-\delta) np)\leq e^{-np\delta^2/3}
			\end{align}
		\end{subequations}
		for every vertex $v$ and $0\leq\delta\leq 1$, where $\deg(v)$ is the degree of vertex $v$ in the graph. Therefore, a simple union bound leads to 
		\begin{subequations}
			\begin{align}
			& \mathbb{P}(\Delta(\mG(n,p))\geq (1+\delta) np)\leq ne^{-np\delta^2/3} = e^{-np\delta^2/3+\log n}\\
			& \mathbb{P}(\delta(\mG(n,p))\leq (1-\delta) np)\leq ne^{-np\delta^2/3} = e^{-np\delta^2/3+\log n}
			\end{align}
		\end{subequations}
		Setting $\delta = 1/2$ and assuming that $p\geq {12(1+\eta)\log n}/{n}$ for some $\eta>0$, one can write
		\begin{subequations}
			\begin{align}
			& \mathbb{P}\left(\Delta(\mG(n,p))\geq \frac{3np}{2}\right)\leq n^{-\eta}\label{eq78}\\
			& \mathbb{P}\left(\delta(\mG(n,p))\leq \frac{np}{2}\right)\leq n^{-\eta}\label{eq79}
			\end{align}
		\end{subequations}
		Furthermore, $p< {12(1+\eta)\log n}/{n}$ leads to
		\begin{align}\label{eq80}
		\mathbb{P}\left(\Delta(\mG(n,p))\geq 18(1+\eta)\log n\right)&\leq \mathbb{P}\left(\Delta\left(\mG\left(n,\frac{12(1+\eta)\log n}{n}\right)\right)\geq 18(1+\eta)\log n\right)\nonumber\\
		&\leq \mathbb{P}\left(\Delta\left(\mG\left(n,\frac{12(1+\eta)\log n}{n}\right)\right)\geq \frac{3np}{2}\right)\nonumber\\
		&\leq n^{-\eta}
		\end{align}
		Combining~\eqref{eq80} with~\eqref{eq78} and~\eqref{eq79} results in the desired inequalities.
		This completes the proof.~\hfill$\blacksquare$}
	\section{Proof of Lemma~\ref{l10}}\label{app_l10}
	
	{Define $S = \{1, ..., m\}$ and $T = \{m+1, ..., m+n\}$. Similar to the proof of Lemma~\ref{app_l7}, one can write the following concentration inequalities:
		\begin{subequations}
			\begin{align}
			& \mathbb{P}(\max_{v\in S}\{\deg(v)\}\geq (1+\delta) np)\leq me^{-np\delta^2/3}\\
			& \mathbb{P}(\min_{v\in S}\{\deg(v)\}\leq (1-\delta) np)\leq me^{-np\delta^2/3}\\
			& \mathbb{P}(\max_{v\in T}\{\deg(v)\}\geq (1+\delta) mp)\leq ne^{-mp\delta^2/3}\\
			& \mathbb{P}(\min_{v\in T}\{\deg(v)\}\leq (1-\delta) mp)\leq ne^{-mp\delta^2/3}
			\end{align}
		\end{subequations}
		which imply
		\begin{subequations}
			\begin{align}
			& \mathbb{P}(\Delta(\mG(m,n,p))\geq (1+\delta) np)\leq me^{-np\delta^2/3}+ne^{-mp\delta^2/3}\leq 2e^{-mp\delta^2/3+\log n}\\
			&\mathbb{P}(\delta(\mG(m,n,p))\leq (1-\delta) mp)\leq me^{-np\delta^2/3}+ne^{-mp\delta^2/3}\leq 2e^{-mp\delta^2/3+\log n}
			\end{align}
		\end{subequations}
		Setting $\delta = 1/2$ and assuming that $p\geq 12(1+\eta)\log n/m$ for some $\eta>0$ results in
		\begin{subequations}
			\begin{align}
			& \mathbb{P}(\Delta(\mG(m,n,p))\geq \frac{3np}{2})\leq 2n^{-\eta}\\
			&\mathbb{P}(\delta(\mG(m,n,p))\leq \frac{mp}{2})\leq2n^{-\eta}
			\end{align}
		\end{subequations}
		Furthermore, if $p< 12(1+\eta)\log n/m$, one can write
		\begin{align}\label{lowprob_asym}
		\mathbb{P}\left(\Delta(\mG(n,p))\geq \frac{18(1+\eta)n\log n}{m}\right)&\leq \mathbb{P}\left(\Delta\left(\mG\left(n,\frac{12(1+\eta)\log n}{m}\right)\right)\geq \frac{18(1+\eta)n\log n}{m}\right)\nonumber\\
		&\leq \mathbb{P}\left(\Delta\left(\mG\left(n,\frac{12(1+\eta)\log n}{m}\right)\right)\geq \frac{3np}{2}\right)\nonumber\\
		&\leq 2n^{-\eta}
		\end{align}
		This completes the proof.~\hfill$\blacksquare$}
\end{document}